\definecolor{tbl_highlight}{gray}{0.9}
\title{Steerable Partial Differential Operators for Equivariant Neural Networks}
\author{%
  Erik Jenner\thanks{Work done during an internship at QUVA Lab}\\
  University of Amsterdam\\
  \texttt{erik@ejenner.com} \\
  \And
  Maurice Weiler \\
  University of Amsterdam\\
  \texttt{m.weiler.ml@gmail.com} \\
}
\begin{document}

\maketitle

\begin{abstract}
Recent work in equivariant deep learning bears strong similarities to physics. Fields over a base space are fundamental entities in both subjects, as are equivariant maps between these fields. In deep learning, however, these maps are usually defined by convolutions with a kernel, whereas they are partial differential operators (PDOs) in physics. Developing the theory of equivariant PDOs in the context of deep learning could bring these subjects even closer together and lead to a stronger flow of ideas. In this work, we derive a \(G\)-steerability constraint that completely characterizes when a PDO between feature vector fields is equivariant, for arbitrary symmetry groups \(G\). We then fully solve this constraint for several important groups. We use our solutions as equivariant drop-in replacements for convolutional layers and benchmark them in that role. Finally, we develop a framework for equivariant maps based on Schwartz distributions that unifies classical convolutions and differential operators and gives insight about the relation between the two.
\end{abstract}

\begin{figure}[h]
  \centering
  \includegraphics[width=.9\linewidth]{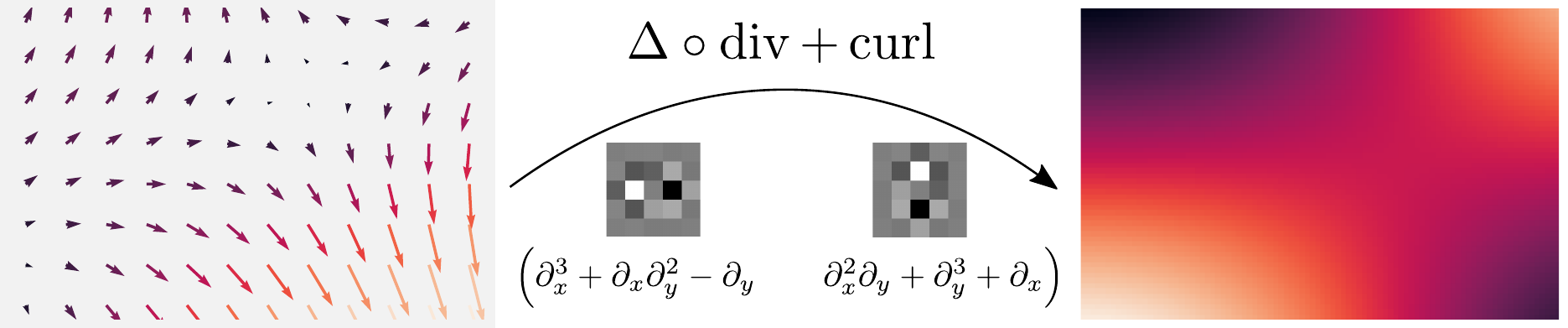}
  \caption{\small A vector field (left) can
    be mapped to a scalar field (right) by applying certain partial differential
    operators (PDOs), such as the Laplacian of the divergence and the 2D curl. Such a PDO from a 2D
    vector to a scalar field can be represented as a \(2 \times 1\) matrix, where
    each of the two entries is a one-dimensional PDO that acts on one of the two components of
    the vector field. Similarly, matrices of PDOs with different dimensions map
    between other types of fields. Our goal is to find \emph{all} PDOs for which this map becomes
    equivariant, for arbitrary types of fields. For the implementation, we will later
    discretize PDOs as stencils (middle).}
  \label{fig:feature_field_map}
\end{figure}

\section{Introduction}
In many machine learning tasks, the data exhibits certain symmetries, such as
translation- and sometimes rotation-invariance in image classification. To exploit those symmetries,
equivariant neural networks have been widely studied and successfully applied in
the past years, beginning with Group convolutional neural networks~\citep{cohen2016a,weiler2018}.
A significant generalization of Group convolutional networks is given by
steerable CNNs~\citep{cohen2016,weiler2018a,weiler2019}, which
unify many different pre-existing equivariant models~\citep{weiler2019}.
They do this by representing features as fields of feature vectors over a base space,
such as \(\R^{2}\) in the case of two-dimensional images. Layers are then linear
equivariant maps between these fields.

This is very reminiscent of physics. There, fields are used to model particles
and their interactions, with physical space or spacetime as the base space.
The maps between these fields are also equivariant,
with the symmetries being part of fundamental physical laws.

It is also noteworthy that these symmetries are largely ones that
appear the most often in deep learning, such as translation and rotation
equivariance.
These similarities have already led to ideas from physics being
applied in equivariant deep learning~\citep{lang2021}.

However, one remaining difference is that physics uses equivariant partial differential
operators (PDOs) to define maps between fields, such as the gradient or Laplacian.
Therefore, using PDOs instead of convolutions in deep
learning would complete the analogy to physics and could lead to even more
transfer of ideas between subjects.

Equivariant PDO-based networks have already been designed in prior
work~\citep{shen2020,smets2020,sharp2020}. Most relevant for our work are
PDO-eConvs~\citep{shen2020}, which can be seen as the PDO-analogon of group
convolutions. However, PDO-eConvs are only one instance of equivariant
PDOs and do not cover the most common PDOs from physics, such as the gradient,
divergence, etc. Very
similarly to how steerable CNNs~\citep{cohen2016,weiler2018a,weiler2019}
generalize group convolutions, we generalize PDO-eConvs by characterizing
the set of \emph{all} translation equivariant PDOs between feature fields over
Euclidean spaces. Because of this analogy, we dub these equivariant differential
operators \emph{steerable PDOs}.

These steerable PDOs and their similarity to steerable CNNs also raise the question
of how equivariant PDOs and kernels relate to each other, and whether they can be
unified. We present a framework for equivariant maps that contains both steerable PDOs
and convolutions with steerable kernels as special cases. We then prove that this framework
defines the most general set of translation equivariant, linear, continuous maps between feature fields,
complementing recent work~\citep{aronsson2021} that describes when equivariant maps are convolutions.
Since formally developing this framework requires the
theory of Schwartz distributions, we cover the details only in
\cref{sec:distribution_equivariance}, and the
main paper can be read without any knowledge of distributions. However, we give
an overview of the main results from this distributional framework in \cref{sec:distributions_overview}.

In order to make steerable PDOs practically applicable, we describe an
approach to find complete bases for vector spaces of equivariant PDOs and then apply
this method to the most important cases. We have also implemented steerable PDOs
for all subgroups of \(\orth(2)\) (\url{https://github.com/ejnnr/steerable_pdos}).
Our code has been merged into the E2CNN library\footnote{\url{https://quva-lab.github.io/e2cnn/}}~\citep{weiler2019},
which will allow practitioners to easily use both steerable kernels and steerable PDOs within the
same library, and even to combine both inside the same network.
Finally, we test our approach empirically by comparing steerable PDOs to
steerable CNNs. In particular, we benchmark different discretization methods for
the numerical implementation.

In summary, our main contributions are as follows:
\begin{itemize}
  \item We develop the theory of equivariant PDOs on Euclidean spaces,
  giving a practical characterization of precisely when a PDO
  is equivariant under any given symmetry.
  \item We unify equivariant PDOs and kernels into one framework that provably
  contains all translation equivariant, linear, continuous maps between feature spaces.
  \item We describe a method for finding bases of the vector spaces of equivariant PDOs, and provide
  explicit bases for many important cases.
  \item We benchmark steerable PDOs using different
  discretization procedures and provide an implementation of
  steerable PDOs as an extension of the E2CNN library.
\end{itemize}

\subsection{Related work}
\paragraph{Equivariant convolutional networks}
Our approach to equivariance follows the one taken by steerable
CNNs~\citep{cohen2016,weiler2018a,weiler2019,brandstetter2021}. They represent
each feature as a map from the base space, such as \(\R^{d}\), to a fiber
\(\R^{c}\) that is equipped with a representation \(\rho\) of the point group
\(G\). Compared to vanilla CNNs, which have fiber \(\R\), steerable CNNs thus
extend the \emph{codomain} of feature maps.

A different approach is taken by regular group convolutional
networks~\citep{cohen2016a,hoogeboom2018,weiler2018,kondor2018,bekkers2021}. They represent each
feature as a map from a group \(H\) acting on the input space to \(\R\).
Because the input to the network usually does not lie in \(H\), this requires a
lifting map from the input space to \(H\). Compared to vanilla CNNs, group
convolutional networks can thus be understood as extending the \emph{domain} of
feature maps.

When \(H = \R^d \rtimes G\) is the semidirect product of the translation group
and a pointwise group \(G\), then group convolutions
on \(H\) are equivalent to \(G\)-steerable convolutions with regular representations.
For finite \(G\), the group convolution over \(G\) simply becomes a finite sum.
LieConvs~\citep{finzi2020generalizing} describe a way of implementing group convolutions
even for infinite groups by using a Monte Carlo approximation for the convolution
integral. Steerable CNNs with regular representations would have to use similar
approximations for infinite groups, but they can instead also use (non-regular)
finite-dimensional representations.
Both the group convolutional and the steerable approach can be applied to
non-Euclidean input spaces---LieConvs define group convolutions on arbitrary Lie groups
and steerable convolutions can be defined on Riemannian manifolds~\citep{cohen2019a,weiler2021}
and homogeneous spaces~\citep{cohen2019b}.

One practical advantage of the group convolutional approach employed by LieConvs
is that it doesn't require solving any equivariance constraints, which tends to
make implementation of new groups easier. They also require somewhat less heavy
theoretical machinery. On the other hand, steerable CNNs are much more general.
This makes them interesting from a theoretical angle and also has more practical
advantages; for example, they can naturally represent the symmetries of vector field
input or output. Since our focus is developing the theory of equivariant PDOs and
the connection to physics, where vector fields are ubiquitous, we are taking the
steerable perspective in this paper.

\paragraph{Equivariant PDO-based networks}
The work most closely related to ours are PDO-eConvs~\citep{shen2020},
which apply the group convolutional perspective to PDOs. Unlike LieConvs,
they are not designed to work with infinite groups. The steerable PDOs we
introduce generalizes PDO-eConvs, which are obtained as a special case
by using regular representations.

A different approach to equivariant PDO-based networks was taken by
\citet{smets2020}. Instead of applying a differential operator to input
features, they use layers that map an initial condition for a PDE to its
solution at a fixed later time. The PDE has a fixed form but several learnable
parameters and constraints on these parameters---combined with the form of the
PDE---guarantee equivariance. \Citet{sharp2020} also use a PDE, namely the diffusion equation, as part of
their DiffusionNet model, which can learn on 3D surfaces. Interestingly, the time evolution operator for the
diffusion equation is \(\exp(t\Delta)\), which can be interpreted as an infinite
power series in the Laplacian, very reminiscent of the finite Laplacian
polynomials that naturally appear throughout this paper. Studying the equivariance
of such infinite series of PDOs might be an interesting direction for future work.
We clarify the relation between PDO-based and kernel-based networks in some more detail
in \cref{sec:kernels_vs_pdos}.

\section{Steerable PDOs}
\begin{wrapfigure}{R}{0.5\linewidth}
  \vspace{-1.5em}
  \centering
  \includegraphics[width=\linewidth]{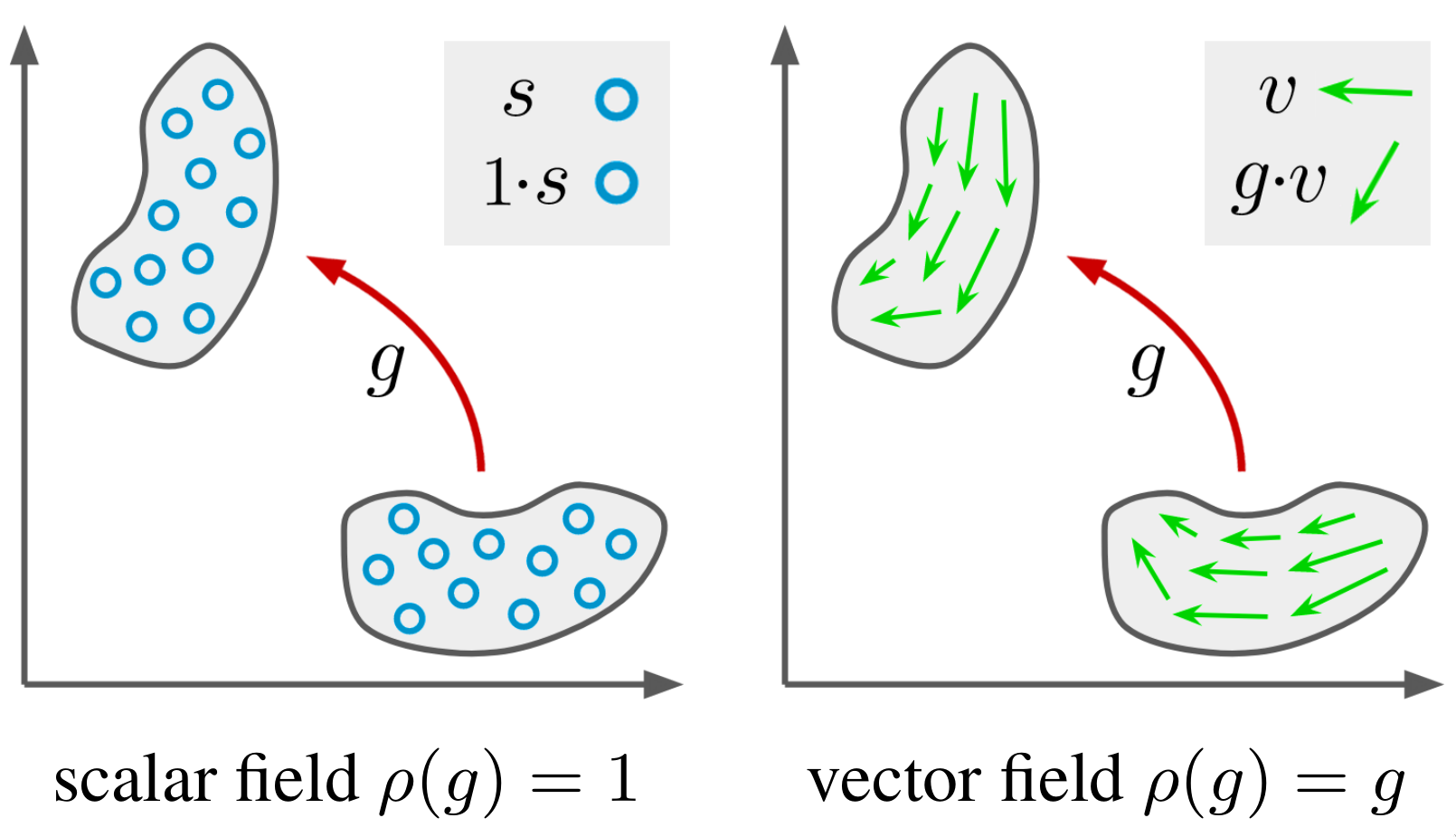}
  \caption{\small Transformation of scalar and vector fields (reproduced with
    permission from~\citet{weiler2019})}\label{fig:feature_fields}
  \vspace{-1em}
\end{wrapfigure}
In this section, we develop the theory of equivariant PDOs. We will represent
all features as smooth fields \(f: \R^{d} \to \R^{c}\) that associate a feature
vector \(f(x) \in \R^{c}\), called the \emph{fiber} at \(x\), with each point
\(x \in \R^{d}\). We write \(\mathcal{F}_{i} = C^{\infty}(\R^{d}, \R^{c_{i}})\) for the space
of these fields \(f\) in layer \(i\). Additionally, we have a
group of transformations acting on the input space \(\R^{d}\), which describes
under which symmetries we want the PDOs to be equivariant. We will
always use a group of
the form \(H = (\R^{d}, +) \rtimes G\), for some \(G \leq \GL(d, \R)\). Here,
\((\R^{d}, +)\) refers to the group of translations of \(\R^{d}\), while \(G\) is
some group of linear invertible transformations. The full group of symmetries
\(H\) is the semidirect product of these two, meaning that each element \(h \in H\)
can be uniquely written as \(h = tg\), where \(t \in \R^{d}\) is a translation and
\(g \in G\) a linear transformation. For example, if \(G = \{e\}\) is the trivial
group, we consider only equivariance under translations, as in classical CNNs,
while for \(G = \SO(d)\) we additionally consider rotational equivariance.

Each feature space \(\mathcal{F}_{i}\) has an associated group representation
\(\rho_{i}: G \to \GL(c_{i}, \R)\), which determines how each fiber \(\R^{c_{i}}\) transforms
under transformations of the input space. Briefly, \(\rho_{i}\) associates an
invertible matrix \(\rho_{i}(g)\) to each group element \(g\), such that
\(\rho_{i}(g)\rho_{i}(g') = \rho_{i}(gg')\); more details on representation theory can be
found in \cref{sec:representation_theory}. To see why these representations are
necessary, consider the feature space \(\mathcal{F} = C^{\infty}(\R^{2}, \R^{2})\) and
the group \(G = \SO(2)\). The two channels could simply be two independent
scalar fields, meaning that rotations of the input move each fiber but do not
transform the fibers themselves. Formally, this would mean using
\emph{trivial representations} \(\rho(g) = 1\) for both channels. On the other
hand, the two channels could together form a vector field, which means that each
fiber would need to itself be rotated in addition to being moved.
This would correspond to the representation \(\rho(g) = g\). These two cases are
visualized in \cref{fig:feature_fields}.

In general, the transformation of a feature \(f \in \mathcal{F}_{i}\) under an input
transformation \(tg\) with \(t \in \R^{d}\) and \(g \in G\) is given by
\begin{equation}\label{eq:action_definition}
  ((tg) \rhd_{i} f)(x) := \rho_{i}(g)f(g^{-1}(x - t))\,.
\end{equation}
The \(g^{-1}(x - t)\)
term moves each fiber spatially, whereas the \(\rho_{i}(g)\) is responsible for the
individual transformation of each fiber.

For a network, we will need maps between adjacent feature spaces
\(\mathcal{F}_{i}\) and \(\mathcal{F}_{i + 1}\). Since during this theory section, we
only consider single layers in isolation, we will drop the index \(i\) and
simply denote the layer map as \(\Phi: \mathcal{F}_{\text{in}} \to \mathcal{F}_{\text{out}}\).
We are particularly interested in \emph{equivariant} maps \(\Phi\), i.e.\ maps that commute with the action of \(H\) on the feature
spaces:
\begin{equation}\label{eq:equivariance_def}
  \Phi(h \rhd_{\text{in}} f) = h \rhd_{\text{out}} \Phi(f)\quad \forall h \in H, f \in \mathcal{F}_{\text{in}}\,.
\end{equation}

We call \(\Phi\) \emph{translation}-equivariant if \cref{eq:equivariance_def} holds for
\(h \in (\R^{d}, +)\), i.e.\ for pure translations.
Analogously, \(\Phi\) is \(G\)-equivariant if it holds
for linear transformations \(h \in G\). Because \(H\) is the semidirect product of
\(\R^{d}\) and \(G\), a map is \(H\)-equivariant if and only if it is both
translation- and \(G\)-equivariant.

\subsection{PDOs as maps between feature spaces}\label{sec:pdos}
We want to use PDOs for the layer map \(\Phi\), so we need to introduce
some notation for PDOs between multi-dimensional feature fields.
As shown in \cref{fig:feature_field_map}, such a multi-dimensional PDO
can be interpreted as a \emph{matrix} of one-dimensional PDOs.
Specifically, a PDO from \(C^{\infty}(\R^{d}, \R^{\cin})\) to
\(C^{\infty}(\R^{d}, \R^{\cout})\) is described by a \(\cout \times \cin\) matrix. For
example, the 2D divergence operator, which maps from \(\R^{2}\) to \(\R = \R^{1}\) can
be written as the \(1 \times 2\) matrix \(\mat{\partial_{1} & \partial_{2}}\). This is
exactly analogous to convolutional kernels, which can also be interpreted as
\(\cout \times \cin\) matrices of scalar-valued kernels.

To work with the one-dimensional PDOs that make up the entries of this matrix,
we use multi-index notation, so for a tuple
\(\alpha = (\alpha_{1}, \ldots, \alpha_{d}) \in \N_{0}^{d}\), we write \(\partial^{\alpha} := \partial_{1}^{\alpha_{1}}\ldots\partial_{d}^{\alpha_{d}}\).
A general one-dimensional PDO is a sum \(\sum_{\alpha}c_{\alpha}\partial^{\alpha}\), where the
coefficients \(c_{\alpha}\) are smooth functions \(c_{\alpha}: \R^{d} \to \R\) (so for now no
spatial weight sharing is assumed). The sum ranges over all multi-indices \(\alpha\),
but we require all but a finite number of coefficients to be zero everywhere, so
the sum is effectively finite. As described, a PDO between general feature
spaces is then a matrix of these one-dimensional PDOs.

\subsection{Equivariance constraint for PDOs}\label{sec:pdo_equivariance_constraint}
We now derive a complete characterization of the PDOs that are \(H\)-equivariant
in the sense defined by \cref{eq:equivariance_def}. Because a map
is equivariant under the full symmetries \(H = (\R^{d}, +) \rtimes G\) if and only if
it is both translation equivariant and \(G\)-equivariant, we split up our
treatment into these two requirements.

First, we note that translation equivariance corresponds to spatial
weight sharing, just like in CNNs (see \cref{sec:diffop_equivariance_proofs} for the proof):
\begin{proposition}\label{thm:translation_equivariance}
  A \(\cout \times \cin\)-PDO \(\Phi\) with matrix entries \(\sum_{\alpha}c_{\alpha}^{ij}\partial^{\alpha}\) is
  translation equivariant if and only if all coefficients \(c_{\alpha}^{ij}\) are
  constants, i.e.\ \(c_{\alpha}^{ij}(x) = c_{\alpha}^{ij}(x')\) for all \(x, x' \in \R^{d}\).
\end{proposition}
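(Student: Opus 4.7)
Since the matrix entries of $\Phi$ act independently on each input/output component pair under a translation (which does not mix fibers, as $\rho(\mathrm{id}) = I$), it suffices to prove the statement for a single scalar-to-scalar PDO $P = \sum_\alpha c_\alpha \partial^\alpha$; the general case follows by applying the scalar argument to each $(i,j)$ entry.

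The reverse direction is a short direct calculation: if every $c_\alpha$ is constant, then $\partial^\alpha$ commutes with the translation operator $f \mapsto f(\cdot - t)$, and so does multiplication by a constant, giving translation equivariance immediately.

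For the forward direction, I would unpack \cref{eq:equivariance_def} for $h = t \in (\R^d, +)$. On the input side one obtains $P(t \rhd f)(x) = \sum_\alpha c_\alpha(x)\,(\partial^\alpha f)(x-t)$, while on the output side $(t \rhd P f)(x) = (Pf)(x-t) = \sum_\alpha c_\alpha(x-t)\,(\partial^\alpha f)(x-t)$. Subtracting, translation equivariance is equivalent to
\begin{equation*}
  \sum_\alpha \bigl[c_\alpha(x) - c_\alpha(x-t)\bigr]\,(\partial^\alpha f)(x-t) = 0 \quad \text{for all } f \in C^\infty(\R^d,\R),\ x,t \in \R^d.
\end{equation*}
Fix $x,t$ and let $y = x-t$. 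Only finitely many $c_\alpha$ are nonzero, so the sum runs over a finite index set $S$. I would then invoke the standard fact that the evaluation map $f \mapsto \bigl((\partial^\alpha f)(y)\bigr)_{\alpha \in S}$ is surjective onto $\R^{|S|}$: one can realize any prescribed jet at $y$ by a polynomial, e.g. $f(z) = \sum_{\alpha \in S} a_\alpha (z-y)^\alpha/\alpha!$. Choosing the $a_\alpha$ freely forces $c_\alpha(x) - c_\alpha(x-t) = 0$ for every $\alpha \in S$, and trivially for $\alpha \notin S$. Since $x$ and $t$ were arbitrary, each $c_\alpha$ is constant.

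The only nontrivial step is the surjectivity of jet evaluation, but this is classical (polynomials already suffice), so there is no real obstacle. I expect the whole proof to take only a few lines once the equivariance condition is rewritten in the explicit form above.
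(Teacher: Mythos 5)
Your proof is correct and follows essentially the same route as the paper's: reduce to scalar entries, unpack the translation action to get $\sum_\alpha [c_\alpha(x) - c_\alpha(x-t)]\,(\partial^\alpha f)(x-t) = 0$, and conclude the coefficients agree pointwise. The only difference is that you make explicit (via the polynomial jet realization) the step the paper states implicitly, namely that a PDO vanishing on all test functions has vanishing coefficients.
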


So from now on, we restrict our attention to PDOs with
constant coefficients and ask under which circumstances they are additionally equivariant
under the action of the point group \(G \leq \GL(d, \R)\). To answer that, we make use of a
duality between polynomials and PDOs that will appear throughout this paper:
for a PDO \(\sum_{\alpha}c_{\alpha}\partial^{\alpha}\), with \(c_{\alpha} \in \R\),\footnote{Note
that we had $c_\alpha \in C^\infty(\R^d, \R)$ before, but we now restrict ourselves to
constant coefficients and identify constant functions $\R^d \to \R$ with real numbers.} there is an associated
polynomial simply given by \(\sum_{\alpha}c_{\alpha}x^{\alpha}\), where
\(x^{\alpha} := x_{1}^{\alpha_{1}}\ldots x_{d}^{\alpha_{d}}\). Conversely,
for any polynomial \(p = \sum_{\alpha}c_{\alpha}x^{\alpha}\), we get a PDO by
formally plugging in \(\partial = (\partial_{1}, \ldots, \partial_{d})\) for \(x\), yielding
\(p(\partial) := \sum_{\alpha}c_{\alpha}\partial^{\alpha}\). We will denote the map in this direction by \(D\),
so we write \(D(p) := p(\partial)\).

We can extend this duality to PDOs between multi-dimensional fields: for a matrix \(P\) of
polynomials, we define \(D(P)\) component-wise, so \(D(P)\) will be a matrix of
PDOs given by \(D(P)_{ij} := D(P_{ij})\). To avoid confusion, we will always
denote polynomials by lowercase letters, such as \(p, q\), and matrices
of polynomials by uppercase letters, like \(P\) and \(Q\).

As a simple example of the correspondence between polynomials and PDOs, the
Laplacian operator \(\Delta\) is given by \(\Delta = D(\abs{x}^{2})\). The gradient
\((\partial_{1}, \ldots, \partial_{d})^{T}\) is induced by the \(d \times 1\)
matrix \(\mat{x_{1}, \ldots, x_{d}}^{T}\) and the 3D curl (\(d = 3\)) is induced by the \(3 \times 3\) matrix
\begin{equation}
  P = \mat{\phantom{-}0 & -x_{3} & \phantom{-}x_{2} \\ \phantom{-}x_{3} & \phantom{-}0 & -x_{1} \\ -x_{2} & \phantom{-}x_{1} & \phantom{-}0}
  \qquad \Longrightarrow \qquad
  D(P) = \mat{\phantom{-}0 & -\partial_{3} & \phantom{-}\partial_{2} \\ \phantom{-}\partial_{3} & \phantom{-}0 & -\partial_{1} \\ -\partial_{2} & \phantom{-}\partial_{1} & \phantom{-}0}\,.
\end{equation}

Finally, note that \(D\) is a ring
isomorphism, i.e.\ a bijection with \(D(p + q) = D(p) + D(q)\) and
\(D(pq) = D(p) \circ D(q)\), allowing us to switch viewpoints at will.

We are now ready to state our main result on the \(G\)-equivariance of PDOs:
\begin{theorem}\label{thm:diffop_equivariance}
  For any matrix of polynomials \(P\), the differential operator \(D(P)\) is
  \(G\)-equivariant if and only if it satisfies the \emph{PDO \(G\)-steerability constraint},
  \begin{equation}\label{eq:diffop_constraint_polynomials}
    P\left((g^{-1})^{T}x\right) = \rho_{\text{out}}(g)P(x)\rho_{\text{in}}(g)^{-1}\quad \forall g \in G, x \in \R^{d}\,.
  \end{equation}
\end{theorem}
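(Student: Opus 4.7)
My plan is to verify the theorem by a direct computation that reduces the operator equivariance condition to the claimed polynomial identity. The main tool is a clean description of how constant-coefficient PDOs conjugate under the pullback $L_g : f \mapsto f \circ g^{-1}$ that appears in the action $g \rhd f$.

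First, applying the chain rule gives $\partial_i[f(g^{-1} x)] = \sum_j (g^{-1})_{ji} (\partial_j f)(g^{-1} x)$, which is precisely $\partial_i \circ L_g = L_g \circ D(q_i)$ with $q_i(x) := ((g^{-1})^T x)_i$. Since $D$ is a ring isomorphism and conjugation by $L_g$ is a ring automorphism, extending multiplicatively and then linearly yields the identity
\[ D(p) \circ L_g = L_g \circ D(p \circ (g^{-1})^T) \]
for any polynomial $p$. Applying this componentwise to a polynomial matrix $P$, and writing $P_g(x) := P((g^{-1})^T x)$, I obtain $D(P) \circ L_g = L_g \circ D(P_g)$.

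Second, I would observe that the fiber representations $\rho_{\text{in}}(g)$ and $\rho_{\text{out}}(g)$ are constant matrices, so they commute with $L_g$ (which only rearranges the spatial argument), and their composition with a matrix PDO amounts to pointwise matrix multiplication of the underlying polynomial matrix: namely $D(P) \circ \rho_{\text{in}}(g) = D(P\,\rho_{\text{in}}(g))$ and $\rho_{\text{out}}(g) \circ D(P) = D(\rho_{\text{out}}(g)\,P)$. Combining these with the conjugation identity above, the $G$-equivariance condition $D(P)(g \rhd_{\text{in}} f) = g \rhd_{\text{out}} D(P)f$ unwraps on the left-hand side to $L_g\, D(P_g\,\rho_{\text{in}}(g))\,f$ and on the right-hand side to $L_g\, D(\rho_{\text{out}}(g)\,P)\,f$. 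Since $L_g$ is invertible and $D$ is injective on polynomial matrices, requiring equality for all $f \in \mathcal{F}_{\text{in}}$ is equivalent to the polynomial identity $P_g(x)\,\rho_{\text{in}}(g) = \rho_{\text{out}}(g)\,P(x)$ for all $g, x$, which is exactly \cref{eq:diffop_constraint_polynomials} after right-multiplication by $\rho_{\text{in}}(g)^{-1}$.

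The main obstacle is the first step: writing out the conjugation of $\partial^\alpha$ under $L_g$ cleanly enough to see that the result corresponds to a polynomial pullback by \emph{$(g^{-1})^T$} rather than by $g$ or $g^T$. The cleanest route, as above, is to verify the identity on the degree-one generators $\partial_i$ via the chain rule and then extend using the ring-homomorphism property of $D$; this avoids any messy multi-index bookkeeping and makes the appearance of $(g^{-1})^T$ transparent as the matrix dual to the spatial action.
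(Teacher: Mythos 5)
Your proof is correct and follows essentially the same route as the paper's elementary proof in Appendix~\ref{sec:diffop_equivariance_proofs}: both hinge on the first-order chain-rule identity $\partial_i(f\circ g^{-1}) = \bigl(((g^{-1})^{T}\partial)_i f\bigr)\circ g^{-1}$, extend it to higher-order operators (the paper by iterating the multi-index version in its chain-rule lemma, you by the equivalent ring-homomorphism-on-generators argument), commute the constant fiber matrices $\rho_{\text{in}}(g)$, $\rho_{\text{out}}(g)$ through the pullback, cancel the invertible pullback, and conclude via injectivity of $D$. The only difference is the cosmetic packaging of the extension from $\partial_i$ to $\partial^\alpha$.
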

\(P(x)\) just means that \(x\) is plugged into the polynomials in each entry of
\(P\), which results in a real-valued matrix \(P(x) \in \R^{\cout \times \cin}\), and
similarly for \(P\left((g^{-1})^{T}x\right)\). \Cref{sec:polynomial_intuition} provides
more intuition for the action of group elements on polynomials. Because \(D\) is an isomorphism,
this result completely characterizes when a PDO with constant coefficients is
\(G\)-equivariant, and in conjunction with \cref{thm:translation_equivariance}, when any PDO
between feature fields is \(H\)-equivariant. The proof of
\cref{thm:diffop_equivariance} can again be found in
\cref{sec:diffop_equivariance_proofs}.

To avoid confusion, we would like to point out that this description of PDOs as
polynomials is only a useful trick that lets us express certain operations
more easily and will later let us connect steerable PDOs to steerable kernels.
Convolving with these polynomials is \emph{not} a meaningful
operation; they only become useful when \(\partial\) is plugged in for \(x\).

\subsection{Examples of equivariant PDOs}\label{sec:examples}
To build intuition, we explore the space of equivariant PDOs for two
simple cases before covering the general solution. Consider
\(\R^{2}\) as a base space with the symmetry group \(G = \SO(2)\) and trivial
representations \(\rho_{\text{in}}\) and \(\rho_{\text{out}}\), i.e.\ maps between two
scalar fields.
The equivariance condition then becomes \(p(gx) = p(x)\), so the polynomial \(p\) has to be
rotation invariant. This is the case if and only if it can be written
as a function of \(\abs{x}^{2}\), i.e.\ \(p(x) = q(\abs{x}^{2})\) for some
\(q: \R_{\geq 0} \to \R\). Since we want \(p\) to be a polynomial, \(q\) needs to be a polynomial in one variable.
Because \(\abs{x}^{2} = x_{1}^{2} + x_{2}^{2}\), we get the PDO \(D(p) = q(\Delta)\),
where \(\Delta := \partial_{1}^{2} + \partial_{2}^{2}\) is the Laplace operator. So the \(\SO(2)\)-equivariant
PDOs between two scalar fields are exactly the polynomials in the
Laplacian, such as \(2\Delta^{2} + \Delta + 3\).

As a second example, consider PDOs that map from a vector to a scalar field, still with
\(\SO(2)\) as the symmetry group. There are two such PDOs that often occur in
the natural sciences, namely the divergence,
\(\operatorname{div} v := \partial_{1}v_{1} + \partial_{2}v_{2}\), and the 2D curl,
\(\operatorname{curl_{2D}} v := \partial_{1}v_{2} - \partial_{2}v_{1}\). Both of these are
\(\SO(2)\)-equivariant (see \cref{sec:more_examples}).
We get additional equivariant PDOs by composing with equivariant
scalar-to-scalar maps, i.e.\ polynomials in the Laplacian. Specifically, \(q(\Delta) \circ \operatorname{div}\) and
\(q(\Delta) \circ \operatorname{curl_{2D}}\) are also equivariant vector-to-scalar PDOs, for
any polynomial \(q\). We will omit the \(\circ\) from now on.

We show in \cref{sec:more_examples} that these PDOs already span the \emph{complete} space
of \(\SO(2)\)-equivariant PDOs from vector to scalar fields. Explicitly, the equivariant
PDOs in this setting are all of the form
\(q_{1}(\Delta) \operatorname{div}{} + q_{2}(\Delta) \operatorname{curl_{2D}}\)
for polynomials \(q_{1}\) and \(q_{2}\). One example of this is the PDO shown in
\cref{fig:feature_field_map}.

In these examples, as well as in other simple cases (see
\cref{sec:more_examples}), the equivariant PDOs are all combinations of
well-known operators such as the divergence and Laplacian. That
the notion of equivariance can reproduce all the intuitively \enquote{natural}
differential operators suggests that it captures the right concept.

\section{Bases for spaces of steerable PDOs}
For the purposes of deep learning, we need to be able to learn steerable PDOs.
To illustrate how to achieve this, consider again \(\SO(2)\)-equivariant PDOs
mapping from vector to scalar field. We have seen in \cref{sec:examples} that
they all have the form
\begin{equation}
q_{1}(\Delta) \operatorname{div}{} + q_{2}(\Delta) \operatorname{curl_{2D}}{}\,.
\end{equation}
In practice, we will need to limit the order of the PDOs we consider, so that we
can discretize them. For example, we could consider only polynomials \(q_{1}\)
and \(q_{2}\) of up to first order, i.e.\ \(q_{1}(z) = c_{1}z + c_{2}\) and
\(q_{2}(z) = c_{3}z + c_{4}\). This leads to PDOs of up to order three (since
\(\Delta\) is a second order PDO and \(\operatorname{div}\) and
\(\operatorname{curl_{2D}}\) are first order). The space of such equivariant
PDOs is then
\begin{equation}
  c_{1}\Delta \operatorname{div}{} + c_{2}\operatorname{div}{} + c_{3}\Delta\operatorname{curl_{2D}}{} + c_{4}\operatorname{curl_{2D}}{},\qquad c_{i} \in \R\,.
\end{equation}
We can now train the real-valued parameters \(c_{1}, \ldots, c_{4}\) and thereby
learn arbitrary equivariant PDOs of up to order three.

The general principle is that we need to find a \emph{basis} of the real vector space
of steerable PDOs; then we learn weights for a linear combination of the
basis elements, yielding arbitrary equivariant PDOs.

Different group representations are popular in equivariant deep learning practice; for
example PDO-eConvs and group convolutions correspond to so-called \emph{regular}
representations (see \cref{sec:pdo_econvs}) but quotient representations have
also been used very successfully~\citep{weiler2019}. We therefore
want to find bases of steerable PDOs for \emph{arbitrary} representations
\(\rho_{\text{in}}\) and \(\rho_{\text{out}}\). To do so, we make use of the existing work
on solving the closely related \(G\)-steerability constraint for \emph{kernels}.
In this section, we will first give a brief
overview of this kernel steerability constraint and then describe how to transfer
its solutions to ones of the \emph{PDO} steerability constraint.

\subsection{The steerability constraint for kernels}
The kernel \(G\)-steerability constraint characterizes when convolution with a kernel
\(\kappa: \R^{d} \to \R^{\cout \times \cin}\) is \(G\)-equivariant, like the PDO steerability
constraint \cref{eq:diffop_constraint_polynomials}
does for PDOs. Namely,
\begin{equation}\label{eq:steerability_constraint}
  \kappa(gx) = \abs{\det g}^{-1}\rho_{\text{out}}(g)\kappa(x)\rho_{\text{in}}(g)^{-1}\qquad \forall g \in G
\end{equation}
has to hold. This constraint was proven in~\citep{weiler2018a} for orthogonal
\(G\) and later in~\citep{weiler2021} for general \(G\).
It is very similar to the PDO steerability constraint: the only
differences are the determinant and the \(\kappa(gx)\) term on the LHS where we had \((g^{-1})^{T}\) instead of \(g\).

An explanation for this similarity is that both constraints can be seen as special
cases of a more general steerability constraint for \emph{Schwartz distributions},
which we derive in \cref{sec:distribution_equivariance}. Schwartz distributions
are a generalization of classical functions and convolutions with Schwartz distributions
can represent both classical convolutions and PDOs. As we prove in
\cref{sec:distribution_equivariance}, such distributional convolutions are the most general translation equivariant,
continuous, linear maps between feature spaces, strictly more general than either PDOs
or classical kernels. We also show how steerable PDOs can be interpreted as the
Fourier transform of steerable kernels, which we use to explain the remaining differences
between the two steerability constraints.

But for the purposes of this section, we want to draw particular attention to the fact that for
\(G \leq \orth(d)\), the two constraints become exactly identical: the
determinant then becomes \(1\), and \((g^{-1})^{T} = g\). So
for this case, which is by far the most practically important one, we will use
existing solutions of the kernel steerability constraint to find a complete basis of
equivariant PDOs.

\subsection{Transferring kernel solutions to steerable PDO bases}\label{sec:solving_constraint}
Solutions of the kernel steerability constraint have been published for
subgroups of \(\orth(2)\)~\citep{lang2021,weiler2019} and
\(\orth(3)\)~\citep{lang2021,weiler2018a},
and they all use a basis of the form
\begin{equation}
  \setcomp{\kappa_{\alpha\beta}(x) := \phi_{\alpha}(\abs{x})\chi_{\beta}\left(x / \abs{x}\right)}{\alpha \in \mathcal{A}, \beta \in \mathcal{B}}\,,
\end{equation}
where \(\mathcal{A}\) and \(\mathcal{B}\) are index sets for the radial and
angular part and the \(\phi_{\alpha}\) span the entire space of radial functions.
The reason is that the steerability constraint \cref{eq:steerability_constraint}
constrains only the angular part if \(G \leq \orth(d)\), because orthogonal groups
preserve distances.

The angular functions \(\chi_{\beta}\) are only defined on the sphere \(S^{d - 1}\).
But crucially, we show in \cref{sec:solutions_proofs} that they can all be
canonically extended to polynomials defined on \(\R^{d}\).\footnote{This holds for
  the cases we discuss here, i.e.\ subgroups of \(\orth(2)\) and \(\orth(3)\).
  In higher dimensions, the situation is more complicated, but for those the kernel
  solutions have not yet been worked out anyway.}
Concretely, we define \(\tilde{\chi}_{\beta}(x) := \abs{x}^{l_{\beta}}\chi_{\beta}(x/\abs{x})\), where
\(l_{\beta}\) is chosen minimally such that \(\tilde{\chi}_{\beta}\) is a polynomial. What
we prove in \cref{sec:solutions_proofs} is that such an \(l_{\beta}\) always exists.

The radial part \(\phi_{\alpha}\) in the kernel basis consists of \emph{unrestricted} radial
functions. To get a basis for the space of \emph{polynomial} steerable kernels,
it is enough to use only powers of \(\abs{x}^{2}\) for the radial part.
Specifically, we show in \cref{sec:transfer_solutions} that
\begin{equation}
  \setcomp{p_{k\beta} := \abs{x}^{2k}\tilde{\chi}_{\beta}}{k \in \N_{0}, \beta \in \mathcal{B}}
\end{equation}
is a basis for the space of polynomial steerable kernels.
As a final step, we interpret each polynomial as a PDO using the isomorphism
\(D\) defined in \cref{sec:pdo_equivariance_constraint}, yielding a \emph{complete}
basis of the space of steerable PDOs. The \(\abs{x}^{2k}\) terms become
Laplacian powers \(\Delta^{k}\), while in the examples from \cref{sec:examples},
\(\tilde{\chi}_{\beta}\) corresponds to divergence and curl.

In \cref{sec:solutions}, we apply this procedure to all compact subgroups of \(\orth(2)\),
and to \(\SO(3)\) and \(\orth(3)\) to obtain concrete solutions. Recently, \citet{cesa2022}
described a more general framework for constructing steerable kernels for subgroups of \(\orth(d)\).
In particular, they implement explicit solutions for other subgroups of
\(\orth(3)\), which could be adapted to PDOs using the method described here.

\section{Convolutions with Schwartz distributions}\label{sec:distributions_overview}
Steerable PDOs are an example showing that not all equivariant maps can be represented
as convolutions with steerable kernels. A natural question is thus: what \emph{are} the
most general equivariant maps? And how can we characterize their equivariance?
The steerability constraints for kernels and PDOs are strikingly similar,
yet they are not quite identical for non-orthogonal groups. So what is their common
generalization?

In this section, we address these questions by introducing a framework that represents
maps between feature spaces as convolutions, but with \emph{Schwartz distributions} rather
than kernels. A technical introduction to Schwartz distributions can be found in
\cref{sec:distributions}, but for the purposes of this section, a cursory understanding suffices:
distributions are a generalization of functions; in our context, we consider distributions
that generalize functions $\kappa: \R^d \to \R^{\cout \times \cin}$
(to be understood as kernels).\footnote{Technically, the \enquote{distributions} we consider
here are \emph{matrices} of Schwartz distributions, just like kernels are matrices of functions
 that map into $\R$. We discuss this formally in \cref{sec:distributions}, but in this section
 we won't make the distinction.}

Any classical (i.e.\ non-distributional) kernel can be interpreted as a distribution,
thus it is immediately clear that convolutions with distributions generalize the usual
convolutions with kernels.
As we discuss in more detail in \cref{sec:distributions_diffops}, PDOs can also be written
as convolutions with distributions (more specifically, with derivatives of the so-called
Dirac $\delta$ distribution). Thus, convolutions with distributions at least unify PDOs and
classical convolutions.

It turns out that convolutions with Schwartz distributions are in fact the most general
translation equivariant linear maps in quite a strong sense:
\begin{theorem}
  Let \(\Phi: C_c^\infty(\R^d, \R^{\cin}) \to C_c^\infty(\R^d, \R^{\cout})\) be a
  translation equivariant continuous linear map. Then there is a Schwartz distribution $T$
  such that \(\Phi(f) = T * f\) for all $f \in C_c^\infty(\R^d, \R^{\cin})$.
\end{theorem}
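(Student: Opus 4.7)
The plan is to imitate the classical argument that continuous translation-invariant operators on test functions are convolutions with distributions, lifted to the matrix-valued setting. The central observation is that by translation equivariance, $\Phi$ is entirely determined by the single functional $L(f) := \Phi(f)(0)$. Since evaluation at a point is continuous on $C_c^\infty(\R^d, \R^{\cout})$, composing with $\Phi$ gives a continuous linear map $L : C_c^\infty(\R^d, \R^{\cin}) \to \R^{\cout}$. Identifying $L$ with a $\cout \times \cin$ matrix of scalar Schwartz distributions and then reflecting should yield the desired distribution $T$ with $\Phi(f) = T * f$.

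First I would make the identification of $L$ with a matrix of distributions precise. For each $i \le \cout$ and $j \le \cin$, define $L_{ij} \in \mathcal{D}'(\R^d)$ by $L_{ij}[\phi] := e_i^{T}\, L(\phi \cdot e_j)$ for $\phi \in C_c^\infty(\R^d, \R)$, where $e_i, e_j$ are standard basis vectors. Continuity of $L_{ij}$ in the inductive-limit topology on $C_c^\infty(\R^d)$ follows from continuity of $L$, so $L_{ij}$ is a bona fide Schwartz distribution. Setting $T_{ij}[\phi] := L_{ij}[\check\phi]$ with $\check\phi(y) := \phi(-y)$ then produces a matrix of distributions; under the identification of \cref{sec:distributions} this packages into a single vector-valued distribution $T$.

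Next I would verify $\Phi(f) = T * f$ pointwise. Let $\tau_t f$ denote pure translation, i.e. $(\tau_t f)(y) = f(y - t)$, which is the restriction of the action $\rhd$ defined in \cref{eq:action_definition} to $t \in \R^d \subset H$. Translation equivariance gives $\Phi(\tau_t f)(y) = \Phi(f)(y - t)$, and specializing to $y = 0$, $t = -x$ yields
\begin{equation*}
  \Phi(f)(x) = \Phi(\tau_{-x} f)(0) = L\bigl(f(\cdot + x)\bigr).
\end{equation*}
On the other hand, the definition of convolution of a distribution with a test function gives $(T * f)(x) = T[f(x - \cdot)] = L[\check{f(x-\cdot)}] = L[f(\cdot + x)]$, matching the expression above. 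Linearity and the matrix-valued identification carry the equality from the scalar case through componentwise.

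The main obstacle is not conceptual but a matter of bookkeeping: ensuring that the conventions for the translation action, the reflection $\check{\phantom{\phi}}$, and the packaging of vector-valued distributions all agree with the framework established in \cref{sec:distributions}, and that the convolution $T * f$ is well-defined and lands in $C_c^\infty$ as the statement requires (which in turn forces $T$ to have compact support, a property that should fall out automatically from the assumption that $\Phi$ maps into $C_c^\infty$ rather than merely $C^\infty$, via an application of the closed graph / support-tracking argument). Beyond these bookkeeping points, the only nontrivial input is the standard identification of continuous linear functionals on $C_c^\infty(\R^d)$ with $\mathcal{D}'(\R^d)$; the rest is translation equivariance and a change of variables.
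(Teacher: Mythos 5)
Your proposal is correct and follows essentially the same route as the paper: define the evaluation functional at the origin (the paper writes $T(f) := \Phi(\check f)(0)$ directly, you factor it as $L(f) := \Phi(f)(0)$ followed by a reflection, which is the same distribution), decompose componentwise into scalar distributions, and recover $\Phi(f) = T * f$ from translation equivariance via the identity $\widecheck{\tau_{-x}f} = \tau_x\check f$. The only detail the paper spells out that you gloss over is the continuity of the reflection map $\phi \mapsto \check\phi$ on $\mathcal{D}(\R^d)$ (needed so that $T_{ij}$ is genuinely continuous), and your closing worry about $T$ having compact support is moot since $T*f = \Phi(f)$ lands in $C_c^\infty$ by hypothesis.
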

Here, $C_c^\infty$ denotes the space of smooth functions with compact support. Continuity
of $T$ is defined with respect to the canonical LF topology, see \cref{sec:distributions,sec:distribution_equivariance}
for details and the proof.

To put this result into perspective, we note that analogous results have been proven
under the additional assumption that $\Phi$ is an integral transform, i.e.
\begin{equation}
  \Phi(f)(x) = \int_{\R^d} k(x, y)f(y)dy
\end{equation}
for some function $k: \R^d \times \R^d \to \R^{\cout \times \cin}$
\citep{weiler2018a,cohen2019}. With this assumption,
one recovers the often-stated claim that all translation equivariant maps are convolutions
with classical kernels.
However, assuming an integral transform excludes maps such as PDOs right from the beginning.
Our characterization relaxes this assumption and only requires continuity and linearity instead.

We now generalize the steerability constraints for kernels and PDOs to a single constraint
for convolutions with Schwartz distributions:
\begin{theorem}
  Let $T$ be a Schwartz distribution.
  Then the map \(f \mapsto T * f\) is \(G\)-equivariant if and only if
  \begin{equation}\label{eq:general_steerability_constraint}
    T \circ g = \abs{\det g}^{-1}\rho_{\text{out}}(g) T \rho_{\text{in}}(g)^{-1}\,.
  \end{equation}
\end{theorem}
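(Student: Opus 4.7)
The plan is to directly unpack the $G$-equivariance condition for $\Phi(f) = T * f$ and reduce it to a pointwise identity of matrix-valued distributions. Restricted to the linear part of $H$, equivariance demands that for all $g \in G$, $f \in C_c^\infty(\R^d, \R^{\cin})$, and $x \in \R^d$,
\begin{equation}
(T * (g \rhd_{\text{in}} f))(x) = \rho_{\text{out}}(g)(T*f)(g^{-1}x),
\end{equation}
where the left-hand side uses the definition of convolution of a distribution with a test function, $(T * h)(x) = \langle T_y, h(x - y)\rangle$, extended componentwise with the usual matrix multiplication.

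I would then compute the LHS by plugging in $h(u) = \rho_{\text{in}}(g) f(g^{-1} u)$. The crucial observation is that, viewed as a test function in $y$, the map $y \mapsto f(g^{-1}(x - y))$ is the precomposition with $g^{-1}$ of $\psi_{g^{-1}x}(y) := f(g^{-1}x - y)$, since $f(g^{-1}(x-y)) = f(g^{-1}x - g^{-1}y) = \psi_{g^{-1}x}(g^{-1}y)$. Using the standard definition of the pullback of a distribution, $\langle T, \varphi \circ g^{-1}\rangle = |\det g| \langle T \circ g, \varphi\rangle$ (chosen to agree with $T_{f\circ g}$ on regular distributions), together with the bilinearity of convolution with respect to right-multiplication by the constant matrix $\rho_{\text{in}}(g)$, the LHS becomes
\begin{equation}
|\det g| \cdot \bigl[(T \circ g)\, \rho_{\text{in}}(g) * f\bigr](g^{-1} x).
\end{equation}

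Equating with the RHS and exploiting the arbitrariness of $f$ and $x$, the uniqueness of convolution kernels in the distributional sense (a standard consequence of continuity in the LF topology) yields the identity of matrix-valued distributions $|\det g| (T \circ g) \rho_{\text{in}}(g) = \rho_{\text{out}}(g) T$, which rearranges to \cref{eq:general_steerability_constraint}. The converse direction requires only running the same chain of identities backwards: assuming the constraint on $T$, the manipulation above recovers $(T * (g \rhd_{\text{in}} f))(x) = \rho_{\text{out}}(g)(T*f)(g^{-1}x)$.

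The main obstacle I expect is \emph{not} the algebraic manipulation but the distribution-theoretic bookkeeping: ensuring that the identity $\langle T, \varphi \circ g^{-1}\rangle = |\det g|\langle T \circ g, \varphi\rangle$ lifts correctly to matrix-valued $T$ acting on vector-valued test functions, that left- and right-multiplication by the constant matrices $\rho(g)$ commute with convolution in the way used above, and that the ``cancellation of $f$'' step is justified by a uniqueness result for distributional convolution. These are routine once the formal framework of \cref{sec:distributions,sec:distribution_equivariance} is in place, but they are what makes this noticeably more subtle than the analogous calculation for classical kernels or PDOs.
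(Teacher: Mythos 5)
Your proposal is correct and follows essentially the same route as the paper's proof: unpack the convolution, recognize the transformed test function as a pullback by $g^{-1}$, apply the change-of-variables identity for distributional composition (producing the $\abs{\det g}$ factor), move $\rho_{\text{in}}(g)$ across the pairing, and cancel $f$ by arbitrariness of the test function. The paper organizes the bookkeeping slightly differently---pushing translations onto the distribution via $\tau_x \circ g = g \circ \tau_{g^{-1}x}$ rather than evaluating at $g^{-1}x$ directly---but the underlying manipulations and the justification of the final cancellation are the same.
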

The kernel steerability constraint is recovered as a straightforward special case.
In \cref{sec:fourier_duality}, we show how the PDO steerability constraint can be
obtained by taking the Fourier transform of \cref{eq:general_steerability_constraint}
and identifying PDOs with the Fourier transform of kernels. This explains why
the PDO steerability constraint looks slightly different from the one for kernels:
since PDOs can be interpreted as the Fourier transform of kernels, they transform
differently under $\GL(\R^d)$. The constraints coincide for $G \leq \orth(d)$
because Fourier transforms commute with rotations and reflections.

\section{Experiments}\label{sec:experiments}

\begin{wraptable}[32]{R}{0.58\linewidth}
  \small
  \centering
  \caption{\small MNIST-rot results. Test errors \(\pm\)
    standard deviations are averaged over six runs. Vanilla CNN
    is a solely translation equivariant model (\(G = \{e\}\)) with the same general architecture. See main text
  for details on the models.}\label{tab:mnist_rot}
  \begin{tabular}{p{1cm}p{1cm}lll}
    \toprule
    Represen- tation $\rho$ & Method & Stencil &  Error [\%] & Params \\
    \midrule
    \multirow{2}{1cm}{--} & \multirow{2}{1.3cm}{Vanilla CNN} & \(3 \times 3\) & \(2.001 \pm 0.030\) & \multirow{2}{0.8cm}{1.1M} \\
                   & & \(5 \times 5\) & \(1.959 \pm 0.055\) & \\
    \cmidrule{1-5}
    \multirow{8}{1cm}[-0.5em]{regular (our basis)} & \multirow{2}{1.3cm}{Kernels} & \(3 \times 3\) & \(0.741 \pm 0.036\) & 837K \\
                   & & \(5 \times 5\) & \(0.683 \pm 0.021\) & 1.1M \\
    \cmidrule{2-5}
                   & \multirow{2}{1.3cm}{FD}      & \(3 \times 3\) & \(1.196 \pm 0.062\) & 837K \\
                   & & \(5 \times 5\) & \(1.54\phantom{0} \pm 0.32\) & 941K \\
    \cmidrule{2-5}
                   & \multirow{2}{1.3cm}{RBF-FD}  & \(3 \times 3\) & \(1.313 \pm 0.065\) & 837K \\
                   & & \(5 \times 5\) & \(1.475 \pm 0.020\) & 941K \\
    \cmidrule{2-5}
                   & \multirow{2}{1.3cm}{Gauss}   & \(3 \times 3\) & \(0.795 \pm 0.030\) & 837K \\
                   & & \(5 \times 5\) & \(0.750 \pm 0.017\) & 941K \\
    \cmidrule{1-5}
    \multirow{2}{1cm}{regular (PDO-eConv)} & \sr FD\footnotemark{} & \(5 \times 5\) & \(1.98\phantom{0} \pm 0.11\) & \multirow{2}{0.8cm}[-0.5em]{982K} \\
    & Gauss & \(5 \times 5\) & \(0.831 \pm 0.039\) & \\
    \cmidrule{1-5}
    \multirow{8}{1cm}[-0.5em]{quotient (our basis)} & \multirow{2}{1.3cm}{Kernels} & \(3 \times 3\) & \(0.717 \pm 0.026\) & 877K \\
                   & & \(5 \times 5\) & \(0.670 \pm 0.011\) & 1.1M \\
    \cmidrule{2-5}
                   & \multirow{2}{1.3cm}{FD}      & \(3 \times 3\) & \(1.143 \pm 0.063\) & 877K \\
                   & & \(5 \times 5\) & \(1.347 \pm 0.026\) & 951K \\
    \cmidrule{2-5}
                   & \multirow{2}{1.3cm}{RBF-FD}  & \(3 \times 3\) & \(1.303 \pm 0.077\) & 877K \\
                   & & \(5 \times 5\) & \(1.422 \pm 0.040\) & 951K \\
    \cmidrule{2-5}
                   & \multirow{2}{1.3cm}{Gauss}   & \(3 \times 3\) & \(0.825 \pm 0.053\) & 877K \\
                   & & \(5 \times 5\) & \(0.744 \pm 0.040\) & 951K \\
    \bottomrule
  \end{tabular}
\end{wraptable}
\footnotetext{As in the original PDO-eConv paper~\citep{shen2020}. Note that
  their performance is better, which is simply caused by their different
  architecture and hyperparameters.}
\paragraph{Implementation}
We developed the theory of steerable PDOs in a continuous setting, but for
implementation the PDOs need to be discretized, just like steerable kernels.
The method of discretization is completely independent of
the steerable PDO basis, so steerable PDOs can be combined with any
discretization procedure. We compare three methods,
\emph{finite differences} (FD), \emph{radial basis function finite
  differences} (RBF-FD) and \emph{Gaussian derivatives}.

Finite differences are a generalization of the usual central difference
approximation and are the method used by PDO-eConvs~\citep{shen2020}.
RBF-FD finds stencils by demanding that the discretization should become
exact when applied to radial basis functions placed on the stencil points. Its
advantage over FD is that it can be applied to structureless point clouds rather
than only to regular grids.
Gaussian derivative stencils work by placing a Gaussian on the target point and
then evaluating its derivative on the stencil points.
Like RBF-FD, this also works on point clouds, and the Gaussian also has a
slight smoothing effect, which is why this discretization is often used in Computer Vision.
Formal descriptions of all three discretization methods can be found in \cref{sec:discretization}.

In addition to discretization, the infinite basis of steerable PDOs or kernels
needs to be restricted to a finite subspace. For kernels, we use the bandlimiting filters by
\citet{weiler2019}. For PDOs, we limit the total derivative order to two for
\(3 \times 3\) stencils and to three for \(5 \times 5\) stencils (except for PDO-eConvs, where
we use the original basis that limits the maximum order of
\emph{partial} derivatives).

Finally, steerable PDOs and steerable kernels only replace the convolutional
layers in a classical CNN. To achieve an equivariant network, all the other
layers, such as nonlinearities or Batchnorm also need to be equivariant.
\Citet{weiler2019} discuss in details how this can be achieved for various types
of layers. In our experiments, we use exactly the same implementation they do.
Care also needs to be taken with biases in the PDO layers.
Here, we again follow~\citep{weiler2019} by adding a bias only to
the trivial irreducible representations that make up \(\rho_{\text{out}}\).

\paragraph{Rotated MNIST}
We first benchmark steerable PDOs on rotated MNIST~\citep{larochelle2007}, which
consists of MNIST images that have been rotated by different angles, with 12k
train and 50k test images. Our results can be found in \cref{tab:mnist_rot}. The
models with \(5 \times 5\) stencils use an architecture that \citet{weiler2019} used
for steerable CNNs, with six \(C_{16}\)-equivariant layers followed by two fully
connected layers. The first column gives the representation under which the six
equivariant layers transform (see \cref{sec:representation_theory} for their
definitions). PDO-eConvs implicitly use regular representations (see
\cref{sec:pdo_econvs}), but with a slightly different basis than the one we
present, so we test both bases. We also tested models that are
\(D_{16}\)-equivariant in their first layers and \(C_{16}\)-equivariant in their
last one but did not find any improvements, see \cref{sec:more_experiments}.
For the models with \(3 \times 3\) stencils,
we use eight instead of six \(C_{16}\)-equivariant layers, in order to compensate for the
smaller receptive field and keep the parameter count comparable. The remaining
differences between kernel and PDO parameter counts come from the fact that the
basis restrictions necessarily work slightly differently (via bandlimiting
filters or derivative order restriction respectively).
All models were trained with 30 epochs and hyperparameters
based on those by \citet{weiler2019}, though we changed the learning rate
schedule and regularization slightly because this improved performance for all
models, including kernel-based ones. The training data is augmented with random
rotations. Precise descriptions of the architecture and hyperparameters can be
found in \cref{sec:experiment_details}.

\begin{wraptable}[17]{R}{0.5\linewidth}
  \vspace{-1em}
  \small
  \centering
  \caption{\small STL-10 results, again over six runs. All models except the
    vanilla CNN use regular representations, see main text for details.}\label{tab:stl_10}
  \begin{tabular}{p{1.6cm}cll}
    \toprule
    Method & Groups & Error [\%] & Params \\
    \midrule
    Vanilla CNN & -- & \(12.7 \pm 0.2\) & 11M \\
    \midrule
    \multirow{2}{1.5cm}{Kernels} & \(D_{8}D_{4}D_{1}\) & \(10.7 \pm 0.6\) & \multirow{2}{0.8cm}{4.2M} \\
            & \(D_{4}D_{4}D_{1}\) & \(10.2 \pm 0.4\) & \\
    \midrule
    \multirow{2}{1.5cm}{FD}      & \(D_{8}D_{4}D_{1}\) & \(12.1 \pm 0.6\) & \multirow{6}{0.8cm}[-0.5em]{3.2M} \\
            & \(D_{4}D_{4}D_{1}\) & \(12.1 \pm 0.7\) & \\
    \cmidrule{1-3}
    \multirow{2}{1.5cm}{RBF-FD}  & \(D_{8}D_{4}D_{1}\) & \(14.3 \pm 0.4\) & \\
            & \(D_{4}D_{4}D_{1}\) & \(14.3 \pm 0.4\) & \\
    \cmidrule{1-3}
    \multirow{2}{1.5cm}{Gauss}   & \(D_{8}D_{4}D_{1}\) & \(11.2 \pm 0.3\) & \\
            & \(D_{4}D_{4}D_{1}\) & \(10.6 \pm 0.8\) & \\
    \bottomrule
  \end{tabular}
\end{wraptable}
\paragraph{STL-10}
The rotated MNIST dataset has global rotational symmetry by design, so it is
unsurprising that equivariant models perform well. But interestingly, rotation
equivariance can also help for natural images without global rotational
symmetry~\citep{weiler2019,shen2020}. We therefore benchmark steerable PDOs on
STL-10~\citep{coates2011}, where we only use the labeled portion of 5000
training images. The results are shown in \cref{tab:stl_10}. The model
architecture and hyperparameters are exactly the same as in~\citep{weiler2019},
namely a Wide-ResNet-16-8 trained for 1000 epochs with random crops, horizontal
flips and Cutout~\citep{devries2017} as data augmentation. The group column
describes the equivariance group in each of the three residual blocks. For
example, \(D_{8}D_{4}D_{1}\) means that the first block is equivariant under
reflections and 8 rotations, the second under 4 rotations and the last one only
under reflections. All layers use regular representations. The \(D_{8}\)-equivariant layers use \(5 \times 5\) filters to
improve equivariance, whereas the other layers use \(3 \times 3\) filters.

\paragraph{Fluid flow prediction}
In the previously described tasks, the input and output representations are all
trivial. To showcase the use of non-trivial output representations, we predict
laminar fluid flow around various objects, following \citet{ribeiro2020}. In this
case, the network outputs a \emph{vector field}, which behaves differently under rotations
than scalar outputs, and whose equivariance cannot be represented using PDO-eConvs,
since they only implement trivial and regular representations. \Cref{fig:flow_prediction}
illustrates these network outputs. Our hyperparameters and architecture closely
follow \citet{ribeiro2020}, though we add data augmentation that makes the problem
more challenging, see \cref{sec:experiment_details} for details.

\begin{figure}
  \centering
  \includegraphics[width=\textwidth]{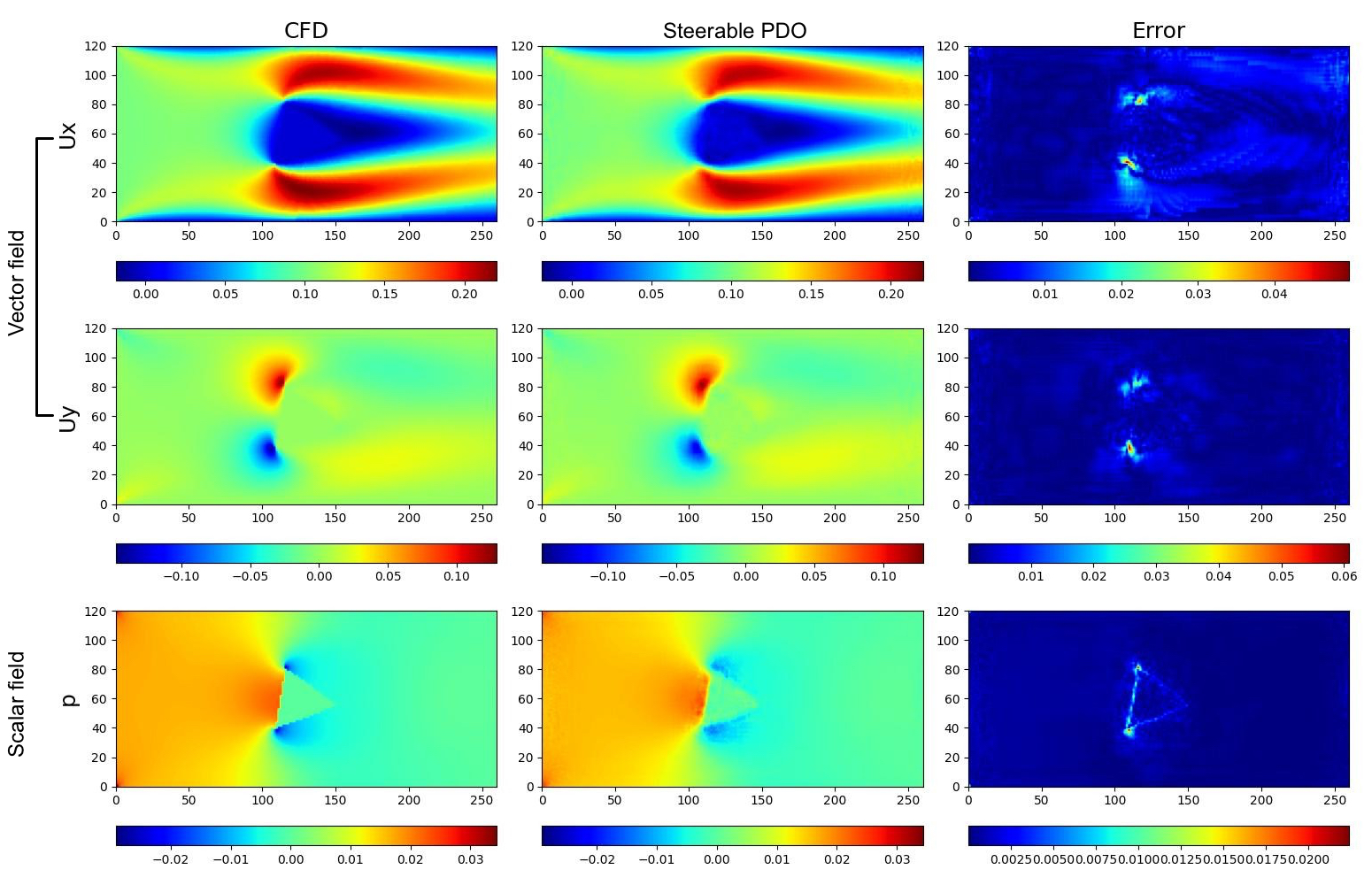}
  \caption{\small Ground truth simulation of a fluid flow (left), prediction generated by a $C_8$-equivariant PDO network 
  (middle), and prediction error (right). First two rows show the two components of the velocity field, and for
  the purposes of equivariance, we treat them as a single feature field with a vector representation.
  The third row shows the pressure, a simple scalar field. Results with other methods look similar,
  the differences in prediction accuracy are hard to recognize visually.}\label{fig:flow_prediction}
\end{figure}

\begin{wraptable}[10]{R}{0.42\linewidth}
  \vspace{-1em}
  \small
  \centering
  \caption{\small Mean squared test error for prediction of velocity and pressure of laminar
  flow around different objects.}\label{tab:fluid_flow}
  \begin{tabular}{lll}
    \toprule
    Method & Equivariance & MSE \\
    \midrule
    \multirow{2}{1.3cm}{Kernel} & ---   & \(3.20 \pm 0.22\) \\
                                & $C_8$ & \(2.26 \pm 0.14\) \\
    \midrule
    \multirow{2}{1.3cm}{PDO}    & ---   & \(2.75 \pm 0.21\) \\
                                & $C_8$ & \(2.32 \pm 0.08\) \\
    \bottomrule
  \end{tabular}
\end{wraptable}

\Cref{tab:fluid_flow} shows a clear advantage of the equivariant networks over the non-equivariant
ones.\footnote{Note that our non-equivariant performance is significantly worse than the one obtained
by \citet{ribeiro2020}---this is because we randomly rotated the samples, resulting in a more challenging
task.} Steerable PDOs perform slightly worse than steerable kernels, though the difference is within the
error intervals. They still perform clearly better than any non-equivariant method.
The PDO results are based on Gaussian discretization, since that performed best in our other
experiments.

\paragraph{Equivariance errors}
In the continuum, steerable CNNs and steerable PDOs are both \emph{exactly} equivariant.
But the discretization on a square grid leads to unavoidable equivariance errors for rotations that aren't
multiples of $\frac{\pi}{2}$. The violation of equivariance in practice is thus closely
connected to the discretization error.
For finite differences, the discretization error is particularly easy to bound asymptotically,
and as pointed out by \citet{shen2020}, this places the same asymptotic bound on the equivariance error.
However, our experiments show that empirically, finite differences don't lead to a particularly low equivariance
error (kernels and all PDO discretizations perform similarly).
See \cref{tab:equivariance_errors} in \cref{sec:more_experiments} for details.

\paragraph{Locality of PDOs}
While all equivariant models improve significantly over the non-equivariant CNN,
the method of discretization plays an important role for PDOs.
The reason that FD and RBF-FD underperform kernels is that they don't make full use of the stencil,
since PDOs are inherently local operators. When a \(5 \times 5\) stencil is used, the
outermost entries are all very small compared to the inner ones, and even in
\(3 \times 3\) kernels, the four corners tend to be closer to zero (see
\cref{sec:more_experiments} for images of stencils to illustrate this). Gaussian
discretization performs significantly better and almost as well as kernels
because its smoothing effect alleviates these issues. This fits the observation
that kernels and Gaussian methods profit from using \(5 \times 5\) kernels, whereas
these do not help for FD and RBF-FD (and in fact decrease performance because of
the smaller number of layers).

\FloatBarrier
\WFclear

\section{Conclusion}\label{sec:conclusion}
We have described a general framework for equivariant PDOs acting on feature
fields over Euclidean space. With this framework, we found strong similarities
between equivariant PDOs and equivariant convolutions, even unifying the two using
convolutions with Schwartz distributions. We exploited these similarities to
find bases for equivariant PDOs based on existing solutions for steerable kernels.

Our experiments show that the locality of PDOs can be a disadvantage compared to
convolutional kernels. However, our approach for equivariance can easily be
combined with any discretization method, and we show that using Gaussian derivatives
for discretization alleviates the issue.
Equivariant PDOs could also be very useful in cases where their strong locality
is a desideratum rather than a drawback.

The theory developed in this work provides the necessary foundation for applications
where equivariant PDOs, rather than kernels, are needed. For example, Probabilistic
Numerical CNNs~\citep{finzi2020}
use PDOs in order to parameterize convolutions on \emph{continuous} input data.
\Citeauthor{finzi2020} also derive a constraint to make these PDOs equivariant, which is a special
case of our PDO \(G\)-steerability constraint \cref{eq:diffop_constraint_polynomials}.
The solutions to this constraint presented in this paper are the missing piece for
implementing and empirically evaluating equivariant Probabilistic Numerical CNNs -- neither of
which \citeauthor{finzi2020} do.

Another promising application of PDOs is an extension
to manifolds. Gauge CNNs~\citep{cohen2019a,kicanaoglu2019,haan2020,weiler2021} are a rather general
framework for convolutions on manifolds; see \citet{weiler2021} for a thorough treatment and
literature review.
As in the Euclidean case, Gauge CNNs use feature fields that
transform according to some representation \(\rho\). The kernels are
defined on the tangent space and are still constrained by the \(G\)-steerability constraint.
Because of that, our approach to equivariant Euclidean PDOs is very
well-suited for generalization to manifolds and our steerable PDO solutions will still remain
valid. One advantage of PDOs in this setting is that they require no Riemannian structure
and can achieve equivariance with respect to arbitrary diffeomorphisms, as is common in
physics, instead of mere isometry equivariance.

\subsubsection*{Reproducibility Statement}
The appendix contains complete proofs for all of our theoretical claims. 

The code necessary to reproduce our experiments can be found at
\url{https://github.com/ejnnr/steerable_pdo_experiments}. The required datasets are downloaded
and preprocessed automatically. The exact hyperparameters we used are available as pre-defined
configuration options in our scripts. We also include a version lockfile for installing
precisely the right versions of all required Python packages. Our implementation of
steerable PDOs is easy to adapt to different use cases and fully documented, allowing
other practitioners to test the method on different datasets or tasks.

\ificlrfinal
\subsubsection*{Acknowledgments}
We would like to thank Gabriele Cesa and Leon Lang for discussions on
integrating steerable PDOs into the E2CNN library and on solutions of the
kernel \(G\)-steerability constraint. This work was supported by funding from
QUVA Lab.
\fi

{
\small
\bibliographystyle{iclr2022_conference}
\bibliography{references}

\begin{thebibliography}{33}
\providecommand{\natexlab}[1]{#1}
\providecommand{\url}[1]{\texttt{#1}}
\expandafter\ifx\csname urlstyle\endcsname\relax
  \providecommand{\doi}[1]{doi: #1}\else
  \providecommand{\doi}{doi: \begingroup \urlstyle{rm}\Url}\fi

\bibitem[Aronsson(2021)]{aronsson2021}
Jimmy Aronsson.
\newblock Homogeneous vector bundles and ${{G}}$-equivariant convolutional
  neural networks.
\newblock \emph{arXiv:2105.05400 [cs, math, stat]}, 2021.

\bibitem[Bekkers(2020)]{bekkers2021}
Erik~J. Bekkers.
\newblock B-{{Spline CNNs}} on {{Lie Groups}}.
\newblock \emph{International {{Conference}} on {{Learning Representations}}},
  2020.

\bibitem[Brandstetter et~al.(2021)Brandstetter, Hesselink, van~der Pol,
  Bekkers, and Welling]{brandstetter2021}
Johannes Brandstetter, Rob Hesselink, Elise van~der Pol, Erik Bekkers, and Max
  Welling.
\newblock Geometric and physical quantities improve ${{E}}(3)$ equivariant
  message passing, 2021.

\bibitem[Cesa et~al.(2022)Cesa, Lang, and Weiler]{cesa2022}
Gabriele Cesa, Leon Lang, and Maurice Weiler.
\newblock A program to build e(n)-equivariant steerable {CNN}s.
\newblock In \emph{ICLR}, 2022.

\bibitem[Coates et~al.(2011)Coates, Ng, and Lee]{coates2011}
Adam Coates, Andrew Ng, and Honglak Lee.
\newblock An {{Analysis}} of {{Single}}-{{Layer Networks}} in {{Unsupervised
  Feature Learning}}.
\newblock \emph{{{International Conference}} on {{Artificial Intelligence}} and
  {{Statistics}}}, 2011.

\bibitem[Cohen et~al.(2019{\natexlab{a}})Cohen, Geiger, and Weiler]{cohen2019b}
Taco Cohen, Mario Geiger, and Maurice Weiler.
\newblock A general theory of equivariant cnns on homogeneous spaces.
\newblock In \emph{NeurIPS}, 2019{\natexlab{a}}.

\bibitem[Cohen \& Welling(2016)Cohen and Welling]{cohen2016a}
Taco~S. Cohen and Max Welling.
\newblock Group {{Equivariant Convolutional Networks}}.
\newblock \emph{International {{Conference}} on {{Machine Learning}}}, 2016.

\bibitem[Cohen \& Welling(2017)Cohen and Welling]{cohen2016}
Taco~S. Cohen and Max Welling.
\newblock Steerable {{CNNs}}.
\newblock \emph{International {{Conference}} on {{Learning Representations}}},
  2017.

\bibitem[Cohen et~al.(2019{\natexlab{b}})Cohen, Geiger, and Weiler]{cohen2019}
Taco~S. Cohen, Mario Geiger, and Maurice Weiler.
\newblock A {{General Theory}} of {{Equivariant CNNs}} on {{Homogeneous
  Spaces}}.
\newblock \emph{Advances in Neural Information Processing Systems},
  32:\penalty0 9145--9156, 2019{\natexlab{b}}.

\bibitem[Cohen et~al.(2019{\natexlab{c}})Cohen, Weiler, Kicanaoglu, and
  Welling]{cohen2019a}
Taco~S. Cohen, Maurice Weiler, Berkay Kicanaoglu, and Max Welling.
\newblock Gauge {{Equivariant Convolutional Networks}} and the {{Icosahedral
  CNN}}.
\newblock \emph{International {{Conference}} on {{Machine Learning}}},
  2019{\natexlab{c}}.

\bibitem[DeVries \& Taylor(2017)DeVries and Taylor]{devries2017}
Terrance DeVries and Graham~W. Taylor.
\newblock Improved {{Regularization}} of {{Convolutional Neural Networks}} with
  {{Cutout}}.
\newblock \emph{arXiv:1708.04552 [cs]}, 2017.

\bibitem[{Finzi} et~al.(2020){Finzi}, {Stanton}, {Izmailov}, and
  {Wilson}]{finzi2020generalizing}
Marc {Finzi}, Samuel {Stanton}, Pavel {Izmailov}, and Andrew {Wilson}.
\newblock Generalizing convolutional neural networks for equivariance to lie
  groups on arbitrary continuous data.
\newblock In \emph{ICML}, 2020.

\bibitem[Finzi et~al.(2021)Finzi, Bondesan, and Welling]{finzi2020}
Marc Finzi, Roberto Bondesan, and Max Welling.
\newblock Probabilistic {{Numeric Convolutional Neural Networks}}.
\newblock \emph{International {{Conference}} on {{Learning Representations}}},
  2021.

\bibitem[{Fornberg} \& {Flyer}(2015){Fornberg} and {Flyer}]{fornberg2017}
Bengt {Fornberg} and {Natasha} {Flyer}.
\newblock {A} {Primer} on {Radial} {Basis} {Functions} with {Applications} to
  the {Geosciences}.
\newblock \emph{CBMS-NSF Regional Conference Series in Applied Mathematics},
  2015.

\bibitem[Haan et~al.(2021)Haan, Weiler, Cohen, and Welling]{haan2020}
Pim~De Haan, Maurice Weiler, Taco Cohen, and Max Welling.
\newblock Gauge {{Equivariant Mesh CNNs}}: {{Anisotropic}} convolutions on
  geometric graphs.
\newblock \emph{International {{Conference}} on {{Learning Representations}}},
  2021.

\bibitem[Hoogeboom et~al.(2018)Hoogeboom, Peters, Cohen, and
  Welling]{hoogeboom2018}
Emiel Hoogeboom, Jorn W.~T. Peters, Taco~S. Cohen, and Max Welling.
\newblock {{HexaConv}}.
\newblock \emph{International {{Conference}} on {{Learning Representations}}},
  2018.

\bibitem[Kicanaoglu et~al.(2019)Kicanaoglu, de~Haan, and Cohen]{kicanaoglu2019}
Berkay Kicanaoglu, Pim de~Haan, and Taco Cohen.
\newblock Gauge {{Equivariant Spherical CNNs}}, 2019.
\newblock URL \url{https://openreview.net/forum?id=HJeYSxHFDS}.

\bibitem[Kingma \& Ba(2015)Kingma and Ba]{kingma2015}
Diederik~P. Kingma and Jimmy Ba.
\newblock Adam: {A} method for stochastic optimization.
\newblock \emph{International {{Conference}} on {{Learning Representations}}},
  2015.

\bibitem[Kondor \& Trivedi(2018)Kondor and Trivedi]{kondor2018}
Risi Kondor and Shubhendu Trivedi.
\newblock On the {{Generalization}} of {{Equivariance}} and {{Convolution}} in
  {{Neural Networks}} to the {{Action}} of {{Compact Groups}}.
\newblock \emph{International {{Conference}} on {{Machine Learning}}}, 2018.

\bibitem[Lang \& Weiler(2021)Lang and Weiler]{lang2021}
Leon Lang and Maurice Weiler.
\newblock A {{Wigner}}-{{Eckart Theorem}} for {{Group Equivariant Convolution
  Kernels}}.
\newblock \emph{International {{Conference}} on {{Learning Representations}}},
  2021.

\bibitem[Larochelle et~al.(2007)Larochelle, Erhan, Courville, Bergstra, and
  Bengio]{larochelle2007}
Hugo Larochelle, Dumitru Erhan, Aaron Courville, James Bergstra, and Yoshua
  Bengio.
\newblock An empirical evaluation of deep architectures on problems with many
  factors of variation.
\newblock \emph{International {{Conference}} on {{Machine}} Learning}, 2007.

\bibitem[Ribeiro et~al.(2020)Ribeiro, Rehman, Ahmed, and Dengel]{ribeiro2020}
Mateus~Dias Ribeiro, Abdul Rehman, Sheraz Ahmed, and Andreas~R. Dengel.
\newblock Deepcfd: Efficient steady-state laminar flow approximation with deep
  convolutional neural networks, 2020.

\bibitem[Sadiq \& Viswanath(2014)Sadiq and Viswanath]{sadiq2014}
Burhan Sadiq and Divakar Viswanath.
\newblock Finite difference weights, spectral differentiation, and
  superconvergence.
\newblock \emph{Mathematics of Computation}, 83, 2014.

\bibitem[Sharp et~al.(2020)Sharp, Attaiki, Crane, and Ovsjanikov]{sharp2020}
Nicholas Sharp, Souhaib Attaiki, Keenan Crane, and Maks Ovsjanikov.
\newblock Diffusion is {{All You Need}} for {{Learning}} on {{Surfaces}}.
\newblock \emph{arXiv:2012.00888 [cs]}, 2020.

\bibitem[Shen et~al.(2020)Shen, He, Lin, and Ma]{shen2020}
Zhengyang Shen, Lingshen He, Zhouchen Lin, and Jinwen Ma.
\newblock {{PDO}}-{{eConvs}}: {{Partial Differential Operator Based Equivariant
  Convolutions}}.
\newblock \emph{International {{Conference}} on {{Machine Learning}}}, 2020.

\bibitem[Smets et~al.(2020)Smets, Portegies, Bekkers, and Duits]{smets2020}
Bart Smets, Jim Portegies, Erik Bekkers, and Remco Duits.
\newblock {{PDE}}-based {{Group Equivariant Convolutional Neural Networks}}.
\newblock \emph{arXiv:2001.09046 [cs, math, stat]}, 2020.

\bibitem[Treves(1967)]{treves1967}
Francois Treves.
\newblock \emph{Topological {{Vector Spaces}}, {{Distributions}} and
  {{Kernels}}}.
\newblock {Academic Press Inc}, {San Diego}, 1967.

\bibitem[Varshalovich et~al.(1988)Varshalovich, Varshalovich, Moskalev, and
  Khersonski{\u \i}]{varshalovich1988}
Dmitri{\u \i}~Aleksandrovich Varshalovich, D.~A. Varshalovich, A.~N. Moskalev,
  and Valeri{\u \i}~Kel'manovich Khersonski{\u \i}.
\newblock \emph{Quantum {{Theory}} of {{Angular Momentum}}: {{Irreducible
  Tensors}}, {{Spherical Harmonics}}, {{Vector Coupling Coefficients}}, 3nj
  {{Symbols}}}.
\newblock {World Scientific Pub.}, 1988.

\bibitem[Weiler \& Cesa(2019)Weiler and Cesa]{weiler2019}
Maurice Weiler and Gabriele Cesa.
\newblock General ${{E}}(2)$-{{Equivariant Steerable CNNs}}.
\newblock \emph{Advances in Neural Information Processing Systems}, 2019.

\bibitem[Weiler et~al.(2018{\natexlab{a}})Weiler, Geiger, Welling, Boomsma, and
  Cohen]{weiler2018a}
Maurice Weiler, Mario Geiger, Max Welling, Wouter Boomsma, and Taco Cohen.
\newblock {{3D Steerable CNNs}}: {{Learning Rotationally Equivariant Features}}
  in {{Volumetric Data}}.
\newblock \emph{Advances in Neural Information Processing Systems},
  2018{\natexlab{a}}.

\bibitem[Weiler et~al.(2018{\natexlab{b}})Weiler, Hamprecht, and
  Storath]{weiler2018}
Maurice Weiler, Fred~A. Hamprecht, and Martin Storath.
\newblock Learning {{Steerable Filters}} for {{Rotation Equivariant CNNs}}.
\newblock \emph{IEEE Conference on Computer Vision and Pattern Recognition
  (CVPR)}, 2018{\natexlab{b}}.

\bibitem[Weiler et~al.(2021)Weiler, Forr{\'e}, Verlinde, and
  Welling]{weiler2021}
Maurice Weiler, Patrick Forr{\'e}, Erik Verlinde, and Max Welling.
\newblock Coordinate {{Independent Convolutional Networks}} -- {{Isometry}} and
  {{Gauge Equivariant Convolutions}} on {{Riemannian Manifolds}}.
\newblock \emph{arXiv:2106.06020 [cs, stat]}, 2021.

\bibitem[Zagoruyko \& Komodakis(2016)Zagoruyko and Komodakis]{zagoruyko2017}
Sergey Zagoruyko and Nikos Komodakis.
\newblock Wide {{Residual Networks}}.
\newblock \emph{British Machine Vision Conference (BMVC)}, 2016.

\end{thebibliography}
}

\newpage
\section*{Supplementary material}
\appendix
\section{Steerable PDOs between vector and scalar fields}\label{sec:more_examples}
We give bases for the space of steerable PDOs for many important cases in
\cref{sec:solution_tables}. However, the generality of the description there
obscures the connection to well-known PDOs such as the gradient or divergence.
So to complement the general solutions, we discuss a few simple cases in much
more detail in this section. We will see that Laplacian, gradient, divergence,
and curl are all rotation equivariant and, more interestingly, that all
rotation equivariant PDOs can be constructed by combining these (in the simple
settings we cover in this section). In many cases, we rederive the solutions even
though all of them would follow immediately from the general case in
\cref{sec:solution_tables}, in order to provide some intuition on why these are
the only equivariant PDOs. Readers who are only interested in an overview of the
results may wish to skip to the end of this section.

\subsection{Scalar to scalar PDOs}
We have already argued in \cref{sec:examples} that the \(\SO(2)\)-equivariant
PDOs between two scalar fields are precisely polynomials in the Laplacian, i.e.\
of the form \(q(\Delta)\) for an arbitrary real polynomial \(q \in \R[x]\). The
derivation given there applies without changes to \(\SO(d)\) for \(d > 2\) and
to \(\orth(d)\) as well, so the same holds in these cases.

\subsection{Scalar to vector PDOs}
We start by considering the case where \(\rho_{\text{in}}\) is trivial (with
\(\cin = 1\)) and \(\rho_{\text{out}}\) is the vector field representation
(i.e.\ \(\cout = d\)). We can then represent PDOs by a \(d \times 1\) matrix of
polynomials, i.e.\ a column vector.
The PDO steerability constraint becomes
\begin{equation}
  P(gx) = gP(x)
\end{equation}
since \(\rho_{\text{in}}\) is trivial. A good mental model for this subsection
is to think of \(P\) as a vector field \(P: \R^{d} \to \R^{d}\) whose entries happen
to be polynomials. The steerability constraint simply states that this vector
field must \enquote{look the same} after a rotation, see
\cref{fig:equivariant_vector_fields} for examples.

We begin by discussing the case \(G = \SO(2)\), which is somewhat different from
\(\orth(2)\) and from \(d > 2\).
Any rotation equivariant vector field on \(\R^{2}\) is fully determined by its values
on the ray \(\setcomp{(x_{1}, 0)}{x_{1} > 0}\). Specifically, the rotation equivariant
vector fields \(v\) are precisely those that can in polar coordinates be written
as
\begin{equation}
  v(r, \phi) =
  \begin{pmatrix}
    \cos\phi & -\sin\phi \\
    \sin \phi & \phantom{-}\cos \phi
  \end{pmatrix}
  \begin{pmatrix}
    f_1(r) \\
    f_2(r)
  \end{pmatrix}
\end{equation}
for arbitrary radial parts \(f_1, f_2\). In Cartesian coordinates, we can write this as
\begin{equation}
  v(x, y) =
  \begin{pmatrix}
    x & -y \\
    y & \phantom{-}x
  \end{pmatrix}
  \begin{pmatrix}
    \tilde{f}_1(x^2 + y^2) \\
    \tilde{f}_2(x^2 + y^2)
  \end{pmatrix}
\end{equation}
where \(\tilde{f_i}(z) := \frac{f_i(\sqrt{z})}{\sqrt{z}}\). We need \(v_1\) and
\(v_2\) to be polynomials in \(x, y\), which means that the \(\tilde{f}_i\) need
to be polynomials. If we then apply the \(D\) isomorphism, we get:
\begin{proposition}
  The \(\SO(2)\)-equivariant differential operators from a scalar to a vector
  field are exactly those of the form
  \begin{equation}
    \begin{pmatrix}
      \partial_1 & -\partial_2 \\
      \partial_2 & \phantom{-}\partial_1
    \end{pmatrix}
    \begin{pmatrix}
      q_1(\Delta) \\
      q_2(\Delta)
    \end{pmatrix}
    = q_{1}(\Delta)\mat{\partial_{1} \\ \partial_{2}} + q_{2}(\Delta)\mat{-\partial_{2} \\ \phantom{-}\partial_{1}}
  \end{equation}
  where \(q_1\) and \(q_2\) are arbitrary polynomials.
\end{proposition}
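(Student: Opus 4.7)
The plan is to apply \cref{thm:diffop_equivariance} to reduce the question to a purely algebraic classification of $\SO(2)$-equivariant polynomial vector fields. Since $\rho_{\text{in}}$ is trivial, $\rho_{\text{out}}(g)=g$, and $(g^{-1})^T=g$ for $g\in\SO(2)\subseteq\orth(2)$, the PDO steerability constraint collapses to $P(gx)=gP(x)$ for a column vector of polynomials $P\colon\R^2\to\R^2$. The ring isomorphism $D$ will then convert any polynomial classification back into a statement about PDOs.

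I would first exhibit two obvious equivariant polynomial fields: the radial field $E_r(x):=(x_1,x_2)^T$ and the perpendicular field $E_\phi(x):=(-x_2,x_1)^T=Jx$, where $J$ is the $90^\circ$ rotation; equivariance of the latter is automatic because $J$ commutes with every $g\in\SO(2)$. The main claim is that any $\SO(2)$-equivariant polynomial vector field $P$ decomposes uniquely as $P=q_1(|x|^2)E_r+q_2(|x|^2)E_\phi$ with $q_1,q_2\in\R[z]$. To establish this, observe that $M(x):=[E_r(x)\mid E_\phi(x)]$ has $\det M(x)=|x|^2$, so formal inversion on $\R^2\setminus\{0\}$ produces the candidate coefficients $A(x):=x_1P_1(x)+x_2P_2(x)$ and $B(x):=-x_2P_1(x)+x_1P_2(x)$ satisfying $|x|^2 P=A\cdot E_r+B\cdot E_\phi$.

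A direct calculation using $P(gx)=gP(x)$ together with $\langle gu,gv\rangle=\langle u,v\rangle$ shows that $A$ and $B$ are $\SO(2)$-invariant polynomials in two variables. The classical invariant theory fact $\R[x_1,x_2]^{\SO(2)}=\R[x_1^2+x_2^2]$ (easily verified by diagonalising the rotation action on complex coordinates $z=x_1+ix_2$ and $\bar z=x_1-ix_2$) then forces $A,B\in\R[|x|^2]$. Since $A(0)=B(0)=0$, these invariant polynomials have no constant term, so they are divisible by $|x|^2$ inside $\R[|x|^2]$, yielding the required $q_1,q_2\in\R[z]$. Applying $D$ with $D(|x|^2)=\Delta$ and $D(x_i)=\partial_i$, together with the ring homomorphism property from \cref{sec:pdo_equivariance_constraint}, produces exactly the stated PDO form, and the reverse direction is immediate from the equivariance of $E_r$, $E_\phi$ and the scalar-to-scalar case.

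The main obstacle is the combined step in the third paragraph: one must promote the a priori only rational expressions for $q_1$ and $q_2$ to genuine polynomials in one variable. This hinges on two ingredients working together, namely the invariant-theoretic description of $\R[x_1,x_2]^{\SO(2)}$ and the vanishing of $A,B$ at the origin. Everything else, including the equivariance verification of the two building blocks and the translation between polynomials and PDOs via $D$, is routine bookkeeping.
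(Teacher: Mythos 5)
Your proof is correct, but it takes a genuinely different route from the paper's. The paper parameterizes \emph{all} $\SO(2)$-equivariant vector fields in polar coordinates (a rotation matrix applied to two free radial functions, determined by the values on a ray), converts to Cartesian form $\bigl(\begin{smallmatrix} x & -y \\ y & \phantom{-}x\end{smallmatrix}\bigr)\bigl(\tilde f_1(x^2+y^2),\tilde f_2(x^2+y^2)\bigr)^T$, and then asserts that polynomiality of the field forces the $\tilde f_i$ to be polynomials --- a step that is stated rather informally in \cref{sec:more_examples} and is only rigorously backed by the general module-basis machinery of \cref{sec:transfer_solutions} (\cref{thm:polynomial_basis}). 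You instead stay entirely inside the polynomial ring: Cramer's rule applied to $M(x)=[E_r\mid E_\phi]$ with $\det M=\abs{x}^2$ produces the candidate coefficients $A=x_1P_1+x_2P_2$ and $B=-x_2P_1+x_1P_2$ as \emph{honest polynomials} from the outset, their $\SO(2)$-invariance follows from orthogonality and $Jg=gJ$, the classical fact $\R[x_1,x_2]^{\SO(2)}=\R[x_1^2+x_2^2]$ pins them down as polynomials in $\abs{x}^2$, and vanishing at the origin supplies the factor of $\abs{x}^2$ needed to cancel against $\det M$ (legitimately, since $\R[x_1,x_2]$ is an integral domain). What your approach buys is a self-contained, fully rigorous argument precisely at the point where the paper's appendix hand-waves, with no polar coordinates or square roots; what the paper's approach buys is a geometric picture (equivariant fields are determined by their restriction to a ray) that scales directly to the general irrep-pair solutions and the module-over-radial-functions viewpoint used throughout \cref{sec:transfer_solutions,sec:solutions_proofs}. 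Both directions of your argument, including the converse via equivariance of $E_r$, $E_\phi$, and Laplacian polynomials, check out.
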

In words, the \(\SO(2)\)-equivariant PDOs are all linear combinations of the
gradient and the transpose of the 2D curl, \((-\partial_{2}, \partial_{1})^{T}\), with
coefficients being polynomials in the Laplacian (rather than just real numbers).

\begin{figure}
  \centering
  \begin{subfigure}{.45\linewidth}
    \includegraphics[width=\linewidth]{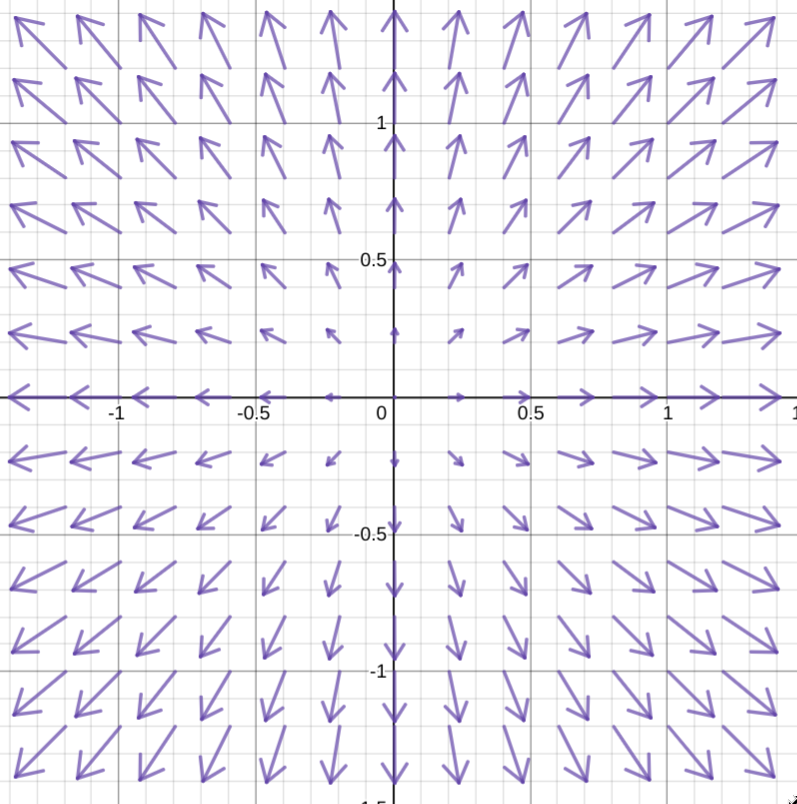}
    \caption{Vector field that induces the gradient}
    \label{fig:gradient_field}
  \end{subfigure}
  \begin{subfigure}{.45\linewidth}
    \includegraphics[width=\linewidth]{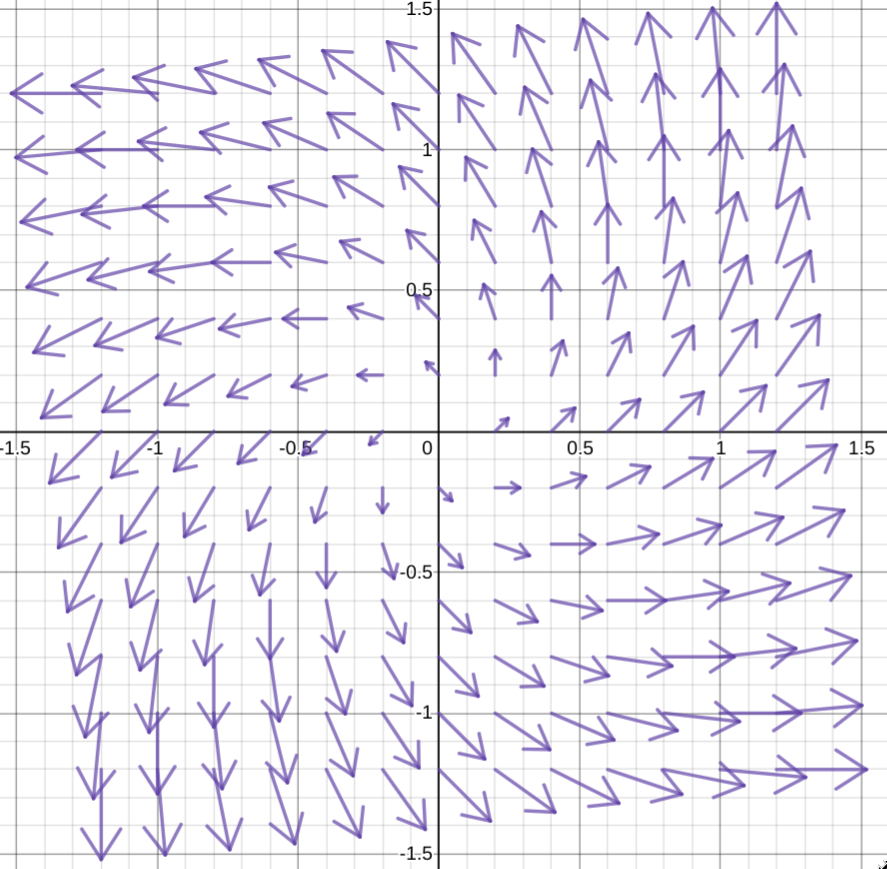}
    \caption{More complex equivariant vector field (a linear combination of
      gradient and transpose curl)}
    \label{fig:rotation_field}
  \end{subfigure}
  \caption{Two examples of \(\SO(2)\)-equivariant vector
    fields.\protect\footnotemark}\label{fig:equivariant_vector_fields}
\end{figure}
\footnotetext{Created using \url{https://www.desmos.com/calculator/eijhparfmd}}

The gradient is also equivariant under reflections, i.e.\
\(\orth(2)\)-equivariant, and it easily generalizes to higher dimensions.
However, the transpose 2D curl only appears in this particular setting: it is
not reflection-equivariant, and it does not have an analogon in higher
dimensions.\footnote{To avoid confusion, we remark that the 3D curl of course
  exists but maps between two vector fields. The 2D curl is in fact closely
  related to the 3D curl, and the fact that it does not have a
  higher-dimensional analogon corresponds to the fact that the 3D curl, as a
  vector to vector PDO cannot easily be generalized to higher dimensions.}
We summarize this in the following result:
\begin{proposition}\label{thm:2d_equivariance}
  Let \(G = \orth(d)\) for \(d \geq 2\) or \(G = \SO(d)\) for \(d > 2\). Then the
  \(G\)-equivariant differential operators from a scalar to a vector field are
  exactly those of the form
  \begin{equation}
    q(\Delta)\operatorname{grad}
  \end{equation}
  for an arbitrary polynomial \(q \in \R[x]\).
\end{proposition}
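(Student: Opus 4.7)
The plan is to apply the PDO steerability constraint from \cref{thm:diffop_equivariance}, which with trivial $\rho_{\text{in}}$ and $\rho_{\text{out}}(g) = g$ reduces to $P(gx) = gP(x)$, and argue that $P$ must have the form $P(x) = q(|x|^2)x$ for some polynomial $q$. Applying the ring isomorphism $D$ then immediately gives $D(P) = q(\Delta)\operatorname{grad}$. The argument splits into two parts: a pointwise \emph{algebraic} step showing that $P(x)$ is parallel to $x$ for every $x \neq 0$, and an \emph{invariant-theoretic} step identifying the scalar coefficient.

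For the first step, I would fix $x \neq 0$ and consider its stabilizer $\mathrm{Stab}(x) := \{g \in G : gx = x\}$. The equivariance relation gives $g P(x) = P(x)$ for all $g \in \mathrm{Stab}(x)$, so $P(x)$ lies in the $\mathrm{Stab}(x)$-fixed subspace of $\R^d$. Decomposing $\R^d = \R x \oplus x^\perp$, the line $\R x$ is trivially fixed, and the task is to rule out any nonzero fixed vector in $x^\perp$. For $G = \orth(d)$, the stabilizer is isomorphic to $\orth(d-1)$ acting on $x^\perp$ and contains $-I|_{x^\perp}$, whose only fixed vector is $0$. For $G = \SO(d)$ with $d > 2$, the stabilizer is $\SO(d-1)$ with $d - 1 \geq 2$, which acts transitively on the unit sphere of $x^\perp$ and so fixes only $0$. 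In either case, $P(x) = \lambda(x) x$ for some scalar $\lambda(x)$.

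For the second step, I would introduce $h(x) := P(x) \cdot x$, which is a polynomial in $x$ because $P$ is. Using $g^T g = I$, the equivariance gives $h(gx) = (gP(x)) \cdot (gx) = P(x) \cdot x = h(x)$, so $h$ is a $G$-invariant polynomial. By the first fundamental theorem of invariant theory for $\orth(d)$ and $\SO(d)$ acting on a single vector in $\R^d$ ($d \geq 2$), every such invariant polynomial is a polynomial in $|x|^2$, so $h(x) = \tilde q(|x|^2)$ for some $\tilde q \in \R[z]$. Since $h(0) = 0$, we have $\tilde q(0) = 0$, and hence $\tilde q(z) = z\,q(z)$ for some polynomial $q$. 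From $|x|^2 \lambda(x) = h(x) = |x|^2 q(|x|^2)$ I conclude $\lambda(x) = q(|x|^2)$ off the origin, and therefore $P(x) = q(|x|^2)\,x$ everywhere by polynomial identity. Finally, since $D$ is a ring homomorphism, $D\!\left(q(|x|^2) x_i\right) = q(\Delta)\partial_i$, which assembles into $D(P) = q(\Delta)\operatorname{grad}$.

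The main obstacle is the stabilizer argument: it is precisely the failure of this step when $(G,d) = (\SO(2), 2)$ — where $\mathrm{Stab}(x)$ is trivial and imposes no constraint on $P(x)$ — that accounts for the extra transpose-curl solution in the preceding proposition. Everything else (the invariant-theoretic identification of $\lambda$ and the translation to PDOs via $D$) is routine once the direction of $P(x)$ has been pinned down.
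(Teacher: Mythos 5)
Your proof is correct and its skeleton matches the paper's: both arguments pin down the direction of $P(x)$ via the stabilizer of $x$ and then identify the scalar coefficient as a rotation invariant. The paper packages the stabilizer step as a small lemma (for linearly independent $v,w$ there is a $g \in G$ fixing $v$ but not $w$), which is just the contrapositive of your fixed-subspace computation; the two are interchangeable, and both correctly isolate $(\SO(2), d=2)$ as the case where the step fails. Where you genuinely diverge is the second step. The paper writes $P(x) = c(x)x$ for a pointwise-defined function $c$, shows $c$ is $G$-invariant, and then concludes that $c$ is a polynomial in $\abs{x}^2$ ``since $c$ needs to be a polynomial'' --- but it never actually establishes that $c$ is a polynomial, which a priori requires an argument ($c = P_i/x_i$ is only defined as a rational expression away from coordinate hyperplanes). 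Your detour through $h(x) := P(x)\cdot x$ repairs this cleanly: $h$ is manifestly a polynomial, the first fundamental theorem gives $h = \tilde q(\abs{x}^2)$, and $\tilde q(0)=0$ lets you factor out $\abs{x}^2$ and conclude $P(x) = q(\abs{x}^2)\,x$ as an identity of polynomials. So your version is slightly longer but closes a small regularity gap that the paper's proof leaves implicit. The one thing you omit is the easy direction --- that every operator $q(\Delta)\operatorname{grad}$ is in fact equivariant --- which the paper dispatches in a line ($P(gx) = q(\abs{gx}^2)gx = gP(x)$ for orthogonal $g$, together with closure of equivariant PDOs under composition); you should add that sentence, since the proposition asserts ``exactly those of the form.''
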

A possible intuition for why \(\SO(2)\) is a special case is that
\(\SO(2) \cong S^{1}\) whereas \(\SO(d) \not\cong S^{d - 1}\) for \(d > 2\) and
\(\orth(d) \not \cong S^{d - 1}\) for \(d \geq 2\). Our construction above heavily
makes use of the fact that rotations of \(\R^{2}\) correspond one-to-one to
angles, i.e.\ points on \(S^{1}\), and this construction thus does not
generalize to any other cases.

Before we prove \cref{thm:2d_equivariance}, we show a helpful lemma:
\begin{lemma}
  Let \(d > 2\). Then for any linearly independent vectors \(v, w \in \R^d\),
  there is a rotation \(g \in \SO(d)\) such that \(gv = v\) but \(gw \neq w\).
  For \(d = 2\), there is such a \(g \in \orth(2)\).
\end{lemma}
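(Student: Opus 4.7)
The plan is to decompose $w$ into its component along $v$ and its component orthogonal to $v$, and then construct the desired group element by rotating or reflecting only inside the orthogonal complement of $v$. Concretely, I would write $w = \alpha v + w^\perp$ with $\alpha \in \R$ and $w^\perp \in v^\perp$; the hypothesis that $v, w$ are linearly independent forces $w^\perp \neq 0$, which is the key handle that makes everything go through.

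For the case $d > 2$, the orthogonal complement $v^\perp$ has dimension $d - 1 \geq 2$, so $\SO(v^\perp) \cong \SO(d - 1)$ is a nontrivial rotation group that acts transitively on the unit sphere in $v^\perp$. I would pick any $g' \in \SO(v^\perp)$ with $g'(w^\perp) \neq w^\perp$---for instance a small rotation in some two-plane of $v^\perp$ containing $w^\perp$---and then extend $g'$ to $g \in \GL(d, \R)$ by declaring it to act as the identity on $\mathrm{span}(v)$. The extension has determinant $1 \cdot 1 = 1$ and is orthogonal, so $g \in \SO(d)$; by construction $gv = v$ while $gw = \alpha v + g'(w^\perp) \neq w$.

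For the case $d = 2$, the space $v^\perp$ is only one-dimensional, so the only element of $\SO(2)$ fixing $v$ is the identity, and we must genuinely use $\orth(2) \setminus \SO(2)$. The reflection $r$ across the line $\mathrm{span}(v)$ suffices: it lies in $\orth(2)$, satisfies $rv = v$, and $rw = \alpha v - w^\perp$, which differs from $w$ precisely because $w^\perp \neq 0$.

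The only potential subtlety is verifying that a rotation of $v^\perp$ moving $w^\perp$ actually exists when $d - 1 \geq 2$, but this is immediate from the transitive action of $\SO(d - 1)$ on the unit sphere of $v^\perp$ (or, even more elementarily, by choosing orthonormal coordinates on $v^\perp$ in which $w^\perp$ is the first basis vector and rotating the first two coordinates by a small angle). There is no serious obstacle; the lemma is essentially a dimension-count organized around the decomposition $\R^d = \mathrm{span}(v) \oplus v^\perp$.
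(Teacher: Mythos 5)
Your proof is correct and follows essentially the same route as the paper: decompose $\R^d$ into $\mathrm{span}(v)$ and a complementary subspace, act nontrivially there, and extend by the identity on $\mathrm{span}(v)$, with the reflection handling $d=2$. If anything, your version is slightly cleaner, since by working with $v^\perp$ and the component $w^\perp$ (nonzero by linear independence) you make the orthogonality of the block-diagonal extension immediate, whereas the paper chooses a complement containing $w$ itself and leaves that point implicit.
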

\begin{proof}[Proof of lemma]
  Since \(v\) and \(w\) are linearly independent, we can write
  \(\R^d = \vspan(v) \oplus W\), where \(W\) is a linear subspace containing
  \(w\). Pick an element \(\tilde{g} \in \SO(d - 1)\) (or
  \(\orth(d - 1) = \orth(1) = \{\pm 1\}\) in the \(d = 2\) case) such that
  \(\tilde{g}w \neq w\), where \(\tilde{g}\) acts on \(W\) (this always exists,
  note that \(w \neq 0\)). Then there is a \(g \in \SO(d)\) (or \(\orth(d)\))
  that restricts to \(\tilde{g}\) on \(W\) and is the identity on \(\vspan(v)\)
  (in terms of matrices with respect to a basis of the form \(\{v, \ldots\}\), \(g\)
  would be block-diagonal, with a \(1 \times 1\) identity block and a
  \((d - 1) \times (d - 1)\) block for \(\tilde{g}\)).
\end{proof}
\begin{proof}[Proof of \cref{thm:2d_equivariance}]
  One direction is clear: the gradient is induced by the matrix of polynomials
  \(P(x) = x, \; x \in \R^{d}\), which is clearly equivariant. We have also seen
  that polynomials of the Laplacian are equivariant, and we can compose these
  equivariant PDOs to get another equivariant PDO. So what remains to show is
  that no other equivariant PDOs exist.

  So let \(P\) be \(G\)-equivariant, with \(G\) as in
  \cref{thm:2d_equivariance}. Let furthermore \(x \in \R^{d}\) be arbitrary and
  \(g \in G\) be an element in the stabilizer of \(x\), i.e.\ \(gx = x\). Then we have
  \begin{equation}
    gP(x) = P(gx) = P(x)\,.
  \end{equation}
  In other words, for any \(g \in G\) with \(gx = x\), we also have \(gP(x) = P(x)\).
  By the lemma, \(x\) and \(P(x)\) are thus linearly dependent, i.e.\ \(P(x) = cx\)
  for this particular \(x\) and some \(c \in \R\).

  We can apply this reasoning to any \(x \in \R^{d}\), so there is a function
  \(c: \R^{d} \to \R\) such that \(P(x) = c(x)x\). Furthermore,
  \begin{equation}
    c(x)gx = gP(x) = P(gx) = c(gx)gx\,,
  \end{equation}
  so \(c\) has to be rotation invariant. As we have already argued,
  this implies that \(c(x) = q(x_{1}^{2} + \ldots + x_{d}^{2})\) for some function
  \(q\), and since \(c\) needs to be a polynomial, so does \(q\). Then
  \begin{equation}
    D(P) = q(\Delta)\operatorname{grad}
  \end{equation}
  as claimed.
\end{proof}

If \(d = 1\), the lemma also holds and the proof goes through -- this is just
the scalar case from the previous section, which is generalized here.
But for \(G = \SO(2)\), this argument does not work because \(SO(1) = \{1\}\),
so the decisive step in the proof of the lemma fails. That is what allows the
additional equivariant PDOs.

\subsection{Vector to scalar PDOs}
Equivariant PDOs mapping from vector to scalar fields are simply the transpose
of those mapping from scalar to vector fields (for orthogonal groups \(G\)):
\(P\) is now a \(1 \times d\) matrix and the steerability constraint is
\begin{equation}
  P(gx) = P(x)g^{-1} = P(x)g^T\,.
\end{equation}
By transposing, we get
\begin{equation}
  P^T(gx) = g P^T(x)\,,
\end{equation}
which is the equivariance condition for scalar to vector
layers. As solutions, we get the divergence (as the transpose of the gradient)
and for \(G = \SO(2)\) also the 2D curl. They are again combined linearly with
polynomials in the Laplacian as coefficients.

\subsection{Vector to vector PDOs}
For PDOs mapping between two vector fields, the steerability constraint is
\begin{equation}
  P(gx) = gP(x)g^{-1}\,.
\end{equation}
Since the solution space in this case is somewhat more complicated, we will only cover
\(G = \SO(2)\) in these examples; see \cref{sec:solution_tables} for more solutions.
In principle, we could apply the same method that we already used before for
\(\SO(2)\), choosing the radial components of \(P\) freely and using the
steerability constraint to determine the angular components. But since the
computations in this case are more involved and don't yield much additional
insight, we will instead use the solutions from \cref{sec:solution_tables}.
Vector fields correspond to frequency 1 irreps, and writing out the solutions
for those explicitly, we get that the equivariant PDOs are exactly linear
combinations of
\begin{equation}
  \begin{pmatrix}
    1 & 0 \\
    0 & 1
  \end{pmatrix},\quad
  \begin{pmatrix}
    0 & -1\\
    1 & 0
  \end{pmatrix},\quad
  \begin{pmatrix}
    \partial_{x}^{2} - \partial_{y}^{2} & 2\partial_{x}\partial_{y} \\
    2\partial_{x}\partial_{y} & \partial_{y}^{2} - \partial_{x}^{2}
  \end{pmatrix},\quad
  \begin{pmatrix}
    -2\partial_{x}\partial_{y} & \partial_{x}^{2} - \partial_{y}^{2} \\
    \partial_{x}^{2} - \partial_{y}^{2} & 2\partial_{x}\partial_{y}
  \end{pmatrix}
\end{equation}
with polynomials in the Laplacian as coefficients.

The first two operators are simply the identity and a \(\frac{\pi}{2}\) rotation
matrix, both zeroth order PDOs. Note that the rotation matrix rotates the
fibers of the vector field, it does not act on the base space. The other two
operators are less interpretable, but we can replace them through a change of
basis:
\begin{equation}
  \frac{1}{2}\Delta
  \begin{pmatrix}
    1 & 0 \\
    0 & 1
  \end{pmatrix}
  + \frac{1}{2}
  \begin{pmatrix}
    \partial_{x}^{2} - \partial_{y}^{2} & 2\partial_{x}\partial_{y} \\
    2\partial_{x}\partial_{y} & \partial_{y}^{2} - \partial_{x}^{2}
  \end{pmatrix}
  =
  \begin{pmatrix}
    \partial_{x}^{2} & \partial_{x}\partial_{y}\\
    \partial_{x}\partial_{y} & \partial_{y}^{2}
  \end{pmatrix}\,,
\end{equation}
which is the matrix describing the composition
\(\operatorname{grad}{} \circ \operatorname{div}\).
Similarly,
\begin{equation}
  -\frac{1}{2}\Delta
  \begin{pmatrix}
    0 & -1 \\
    1 & 0
  \end{pmatrix}
  + \frac{1}{2}
  \begin{pmatrix}
    -2\partial_{x}\partial_{y} & \partial_{x}^{2} - \partial_{y}^{2} \\
    \partial_{x}^{2} - \partial_{y}^{2} & 2\partial_{x}\partial_{y}
  \end{pmatrix}
  =
  \begin{pmatrix}
    -\partial_{x}\partial_{y} & \partial_{x}^{2} \\
    -\partial_{y}^{2} & \partial_{x}\partial_{y}
  \end{pmatrix}\,,
\end{equation}
which is the matrix describing \(\operatorname{grad}{} \circ \operatorname{curl_{2D}}\).
So if we write \(R\) for the \(\frac{\pi}{2}\) rotation matrix (interpreted as a
PDO), then the equivariant PDOs mapping between vector fields are exactly linear
combinations of
\begin{equation}
  \operatorname{id},\quad R,\quad \operatorname{grad}{} \circ \operatorname{div},\quad \operatorname{grad}{} \circ \operatorname{curl_{2D}}\,,
\end{equation}
as always with polynomials in the Laplacian as coefficients.

We can be even more economical and describe these PDOs with fewer building
blocks. For two vectors \(P, Q\) of one-dimensional PDOs, e.g.\
\(P = \partial = (\partial_{1}, \partial_{2})^{T}\), we write \(P \otimes Q\) for the \(2 \times 2\) PDO with
entries \((P \otimes Q)_{ij} = P_{i}Q_{j}\). Then we can write for example
\(\operatorname{grad}{} \circ \operatorname{div}{} = \partial \otimes \partial\). We furthermore note that
\begin{equation}
  R\partial = \mat{0 & -1 \\ 1 & \phantom{-}0}\mat{\partial_{x} \\ \partial_{y}} = \mat{-\partial_{y} \\ \phantom{-}\partial_{x}} = \operatorname{curl_{2D}}\,.
\end{equation}
This means we can write the basis from above as
\begin{equation}
  \operatorname{id},\quad R,\quad \partial \otimes \partial,\quad \partial \otimes R\partial\,.
\end{equation}

\subsection{Summary of results}
\begin{itemize}
  \item \(\SO(d)\)- or \(\orth(d)\)-equivariant PDOs between two scalar fields
  are exactly polynomials in the Laplacian.
  \item \(G\)-equivariant PDOs mapping from scalar to vector fields are exactly those
  of the form \(q(\Delta)\operatorname{grad}\) for \(G = \SO(d)\) with \(d > 2\) or
  \(G = \orth(d)\). For PDOs from vector to scalar fields, we similarly get
  \(q(\Delta)\operatorname{div}\).
  \item \(\SO(2)\)-equivariant PDOs from vector to scalar fields are exactly
  those of the form
  \begin{equation}
    q_{1}(\Delta)\operatorname{div}{} + q_{2}(\Delta)\operatorname{curl_{2D}}\,.
  \end{equation}
  For PDOs from scalar to vector fields, we get \(\operatorname{grad}\) instead
  of \(\operatorname{div}\) and the transpose of the 2D curl instead.
  \item \(\SO(2)\)-equivariant PDOs between two vector fields are linear
  combinations of the identity, a \(\frac{\pi}{2}\) rotation,
  \(\operatorname{grad}{} \circ \operatorname{div}\) and
  \(\operatorname{grad}{} \circ \operatorname{curl_{2D}}\), with polynomials in the
  Laplacian as coefficients.
\end{itemize}

\section{Representation theory primer}\label{sec:representation_theory}
This section introduces the fundamental definitions of representation theory
that we need.

\subsection{Basic definitions}
\begin{definition}
  A \emph{group representation} of a group \(G\), or \emph{representation} for
  short, is a group homomorphism \(\rho: G \to \GL(V)\) for some vector space \(V\).
  This means that \(\rho(gg') = \rho(g)\rho(g')\) for all \(g, g'\), so multiplication of
  group elements in \(G\) is represented as matrix multiplication in \(\GL(V)\).
\end{definition}
For this paper, we only need \(V = \R^{c}\), so we focus on this case. In
particular, some of the following definitions make use of the fact that we
only consider finite-dimensional representations.

Given multiple representations of the same group, we can \enquote{stack them
  together} using direct sums:
\begin{definition}
  Let \(\rho_{1}: G \to \GL(\R^{c_{1}})\) and \(\rho_{2}: G \to \GL(\R^{c_{2}})\) be two
  representations of \(G\). Then we define the \emph{direct sum} representation
  \(\rho_{1} \oplus \rho_{2}: G \to \GL(\R^{c_{1} + c_{2}})\) by
  \begin{equation}
    (\rho_{1} \oplus \rho_{2})(g) := \mat{\rho_{1}(g) & \\ & \rho_{2}(g)}\,,
  \end{equation}
  which acts independently on the subspaces \(\R^{c_{1}}\) and \(\R^{c_{2}}\) of
  \(\R^{c_{1} + c_{2}}\).
\end{definition}
This corresponds exactly to stacking feature fields of different types. For
example, we can stack a vector and a scalar field into one four-dimensional
field, such that its vector and scalar part transform \emph{independently}. The
representation of this four-dimensional field will be the direct sum of the
vector and scalar field representations.

Often, two representations are formally different but can be transformed into
one another using a change of basis; they then behave the same in all
relevant aspects. For example, \(\SO(2)\) has a representation
\begin{equation}
  \rho(\phi) := \mat{\cos \phi & -\sin \phi \\ \sin \phi & \phantom{-}\cos \phi}
\end{equation}
that represents a rotation angle \(\phi\) by a counterclockwise rotation matrix
(this representation is the one used for vector fields). However, we could just
as well use
\begin{equation}
  \rho(\phi) := \mat{\phantom{-}\cos \phi & \sin \phi \\ -\sin \phi & \cos \phi}\,,
\end{equation}
where the rotation is clockwise. Using one or the other is pure convention and
we would like to treat them as \enquote{the same} representation. This is
formalized as follows:
\begin{definition}
  Two representations \(\rho_{1}, \rho_{2}: G \to \GL(\R^{c})\) are \emph{equivalent} if
  there is a matrix \(Q \in \GL(\R^{c})\) such that
  \begin{equation}
    \rho_{2}(g) = Q^{-1} \rho_{1}(g) Q
  \end{equation}
  for all \(g \in G\).
\end{definition}
Intuitively, \(\rho_{1}\) and \(\rho_{2}\) differ only by a change of basis, which is
given by \(Q\).

\subsection{Decomposition into irreducible representations}
We can now discuss irreducible representations, which play an important role for
solving the kernel and PDO steerability constraints.
\begin{definition}
  A linear subspace \(W \subseteq \R^{c}\) is called \emph{invariant} under a
  representation \(\rho: G \to \GL(\R^{c})\) if \(\rho(g)w \in W\) for all \(g \in G\) and
  \(w \in W\). In this case, we can define the restriction \(\rho_{|W}: G \to \GL(W)\),
  called a \emph{subrepresentation} of \(\rho\).
\end{definition}
\begin{definition}
  A representation \(\rho\) is called \emph{irreducible} if all its
  subrepresentations are either \(\rho\) itself or
  representations \(G \to \GL(\{0\})\), where \(\{0\}\) is the trivial vector space.
\end{definition}
For example, \(\rho_{1}\) and \(\rho_{2}\) are both subrepresentations of
\(\rho_{1} \oplus \rho_{2}\), so direct sums are never irreducible. A natural question is
the converse: if a representation is \emph{not} equivalent to a direct sum, does
that mean that it is irreducible? In other words, can all representations be
split into a direct sum of irreducible ones? In general, this is false, but it
holds for the cases that interest us:
\begin{definition}
  A \emph{topological group} is a group \(G\) equipped with a topology, such that
  the group multiplication \(G \times G \to G\) and the inverse map \(G \to G\) are
  continuous with respect to that topology. A \emph{compact group} is a
  topological group that is compact as a topological space.
\end{definition}
\begin{theorem}
  Let \(G\) be a compact group. Then every finite-dimensional representation of
  \(G\) is equivalent to a direct sum of irreducible representations.
\end{theorem}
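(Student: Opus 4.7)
The plan is to prove complete reducibility by the classical averaging argument (the Weyl unitary trick), which reduces the problem to finding an invariant inner product and then peeling off invariant subspaces one at a time.

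First I would invoke the existence of a (normalized) Haar measure $\mu$ on the compact group $G$, i.e.\ a left- and right-invariant probability measure on $G$, which exists for any compact topological group. Given a finite-dimensional representation $\rho: G \to \GL(\R^c)$, I would then start from an arbitrary inner product $\langle \cdot, \cdot \rangle$ on $\R^c$ and define its group-average
\begin{equation}
  \langle v, w \rangle_G := \int_G \langle \rho(g) v, \rho(g) w \rangle \, d\mu(g)\,.
\end{equation}
Using the translation invariance of $\mu$, a short calculation shows that $\langle \rho(h) v, \rho(h) w \rangle_G = \langle v, w \rangle_G$ for every $h \in G$; positive-definiteness is preserved because the integrand is continuous and nonnegative and $\rho(g)$ is invertible. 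Thus $\rho$ is equivalent (via a basis change that orthonormalizes $\langle\cdot,\cdot\rangle_G$) to a representation by orthogonal matrices.

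With this invariant inner product in hand, the key observation is that if $W \subseteq \R^c$ is an invariant subspace, then so is its orthogonal complement $W^{\perp}$ with respect to $\langle\cdot,\cdot\rangle_G$: for $u \in W^{\perp}$, $w \in W$, and any $g \in G$,
\begin{equation}
  \langle \rho(g) u, w \rangle_G = \langle u, \rho(g^{-1}) w \rangle_G = 0
\end{equation}
since $\rho(g^{-1}) w \in W$, so $\rho(g) u \in W^{\perp}$. I would then complete the proof by induction on $c = \dim V$. The base case $c = 0$ (or $c = 1$) is trivial. For the inductive step, either $\rho$ is already irreducible and there is nothing to prove, or it admits a nontrivial invariant subspace $W$, in which case $\R^c = W \oplus W^{\perp}$ with both summands invariant and of strictly smaller dimension, so the inductive hypothesis applies to the two subrepresentations and their direct sum decompositions combine into a decomposition of $\rho$.

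The only genuinely nontrivial ingredient is the existence of the Haar measure on a general compact group, which I would simply cite rather than prove; everything else is linear algebra plus the elementary invariance manipulations above. A minor technical point to address is the need for $\rho$ to be continuous for the integrand $g \mapsto \langle \rho(g)v, \rho(g)w\rangle$ to be integrable, but as the theorem is stated in the context of representations of topological/compact groups, continuity is part of the standing definition; no further difficulty arises.
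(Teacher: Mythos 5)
Your proof is correct: the averaging argument via Haar measure (the Weyl unitary trick), followed by the observation that orthogonal complements of invariant subspaces are invariant and an induction on dimension, is the standard and complete proof of this statement, modulo the cited existence of Haar measure and the standing continuity assumption you correctly flag. Note that the paper itself states this theorem without proof, as background in its representation theory primer, so there is no in-paper argument to compare against; your write-up supplies exactly the classical proof the paper implicitly defers to the literature.
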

Since we only consider compact subgroups of \(\orth(2)\) and \(\orth(3)\), this theorem
applies to all the cases we solve.

So we can always write
\begin{equation}
  \begin{split}
    \rho_{\text{in}} &= Q_{\text{in}}^{-1}\bigoplus_{i \in I_{\text{in}}} \psi_{i} \;Q_{\text{in}}\\
    \rho_{\text{out}} &= Q_{\text{out}}^{-1}\bigoplus_{i \in I_{\text{out}}} \psi_{i} \;Q_{\text{out}}
  \end{split}
\end{equation}
where the \(\psi_{i}\) are irreducible representations. It is then easy to
show~\citep{weiler2019} that a kernel \(k\) solves the \(G\)-steerability
constraint
\begin{equation}
  k(gx) = \rho_{\text{out}}(g)^{-1}\,k(x)\,\rho_{\text{in}}(g)^{-1}
\end{equation}
if and only if \(\kappa := Q_{\text{out}}\,k \,Q_{\text{in}}^{-1}\) solves a block-wise
steerability constraint between irreducible representations. Concretely,
\begin{equation}\label{eq:block_constraint}
  \kappa^{ij}(gx) = \psi_{i}(g)^{-1}\kappa^{ij}(x)\psi_{j}(g)^{-1}\qquad \forall i, j
\end{equation}
where \(\kappa^{ij}(x)\) is the submatrix of \(\kappa(x)\) that belongs to \(\psi_{i}\) and
\(\psi_{j}\).

The approach to solving the steerability constraint is thus to solve
\cref{eq:block_constraint} for arbitrary irreducible representations \(\psi_{i}\)
and \(\psi_{j}\). For general (not necessarily irreducible) \(\rho_{\text{out}}\) and
\(\rho_{\text{in}}\), we then first find the decompositions into irreducible
representations. Each basis element \(\kappa^{ij}\) of the solution to
\cref{eq:block_constraint} is then padded with zeros to the right size and
finally transformed via \(k = Q_{\text{out}}^{-1}\kappa Q_{\text{in}}\) to get the
final basis elements. See \citet{weiler2019} for a more detailed discussion and
visualization.

Clearly, this procedure works just as well for PDOs as it does for kernels: for
PDOs, we need to find the restriction of the kernel solution space to
polynomials, and it does not matter whether we restrict on the level of
irreducible representations and then combine them, or first combine irreducible
representations and then restrict.

\subsection{Specific representations}
We now define all the types of representations that occur in the main paper:
\begin{itemize}
  \item As already mentioned in the paper, scalar fields are described by the
  \emph{trivial representation} \(\rho(g) := 1\) and vector fields by the
  representation \(\rho(g) := g\) (for \(G \leq \GL(\R^{d})\)).
  \item For a finite group \(G\), the \emph{regular representation} is
  \(\rho: G \to \GL(\R^{\abs{G}})\), defined by
  \begin{equation}
    \rho(g)e_{g'} := e_{gg'}\,,
  \end{equation}
  where \((e_{g})_{g \in G}\) is the canonical basis of \(\R^{\abs{G}}\) (for some
  ordering of \(G\)). So this representation associates one basis vector to each
  group element and then acts by permuting these basis vectors. \(\rho(g)\) is thus
  always a permutation matrix.
  \item \emph{Quotient representations} are a generalization of regular representations,
  defined as follows: let \(G\) be a finite group and \(H \leq G\) a subgroup. Then we define
  the quotient representation \(\rho_{\text{quot}}^{G/H}: G \to \GL(\R^{\abs{G} / \abs{H}})\)
  by
  \begin{equation}
    \rho_{\text{quot}}^{G/H}(g)e_{g'H} := e_{gg'H}\,,
  \end{equation}
  where we now use a basis indexed by the cosets \(gH \in G/H\) for \(g \in G\). For \(H = \{e\}\),
  we recover regular representations.
  Appendix C by \citet{weiler2019} provides some intuition for these quotient representations
  in the case where \(G\) and \(H\) are both cyclic groups \(C_N\) and \(C_M\).
\end{itemize}

\section{Intuition for the group action on polynomials}\label{sec:polynomial_intuition}
Our work makes heavy use of terms of the form \(p(gx)\), where \(p\) is a
polynomial \(p: \R^d \to \R\), \(g \in G\) is a group element, and \(x \in
\R^d\).  We would now like to provide a bit more intuition for this action of
the group \(G\) on polynomials. Note that we will only cover
\emph{scalar-valued} polynomials in this appendix, i.e.\ using trivial
representations. The general case is straight-forward: the group acts on each
polynomial in a matrix of polynomials the way we describe here, while
\(\rho_{\text{in}}(g)\) and \(\rho_{\text{out}(g)}\) act via matrix
multiplication.

To prevent confusion, let us reiterate that we can think of polynomials in two different ways.
The first is as a formal expression, where \(x\) is a placeholder for things to be plugged in.
This is the approach we take when connecting polynomials to PDOs, where we plug in
differential operators for \(x\). The second is as a specific type
of function on \(\R^d\)---in which case \(x \in \R^d\) is simply
the argument of that function.

This second perspective is the one in which the group
action on polynomials is easiest to understand. Specifically, the action of \(g\)
on \(x\) is simply matrix multiplication. Similar to how \(x \mapsto p(x)\) defines
a function on \(\R^d\), \(x \mapsto p(gx)\) defines a different function on \(\R^d\).
We could think of it
as composing the function \(p\) with the group action of \(g\) on \(\R^d\).

Crucially, this new function is still a polynomial in the components of
\(x\), just with different coefficients. For purposes of illustration,
consider a very simple example with \(d = 2\) and \(G = \SO(2)\).
We will use the polynomial \(p(x) = x_2^2 + x_1\) (this is just meant to be
a simple non-trivial polynomial, it does not have special equivariance properties).

We can represent \(g\) as a rotation matrix parameterized by an angle \(\theta\).
The action on \(x\) is then given by
\begin{equation}
  gx = \mat{\cos\theta & -\sin\theta \\ \sin\theta & \phantom{-}\cos\theta} \mat{x_1 \\ x_2}
  = \mat{x_1\cos\theta - x_2\sin\theta \\ x_1\sin\theta + x_2\cos\theta}\,.
\end{equation}
The left-hand side is what we now plug into our polynomial, which yields
\begin{equation}
  p(gx) = (gx)_2^2 + (gx)_1 = (x_1\sin\theta + x_2\cos\theta)^2 + (x_1\cos\theta - x_2\sin\theta)\,.
\end{equation}

We can expand this and collect the coefficients for each power of \(x_1\) and \(x_2\), which yields
\begin{equation}
  p(gx) = \sin^2(\theta) x_1^2 + 2\sin(\theta)\cos(\theta) x_1 x_2 + \cos^2(\theta) x_2^2 + \cos(\theta) x_1 - \sin(\theta) x_2\,.
\end{equation}

If desired, we can now switch to the first perspective, and interpret this polynomial
a formal expression defined by its coefficients. In that perspective, \(g\) acts on
\(p\) by modifying its coefficients, rather than by composition. Computing the new
coefficients is straightforward analytically (given a matrix representation of \(g\));
we just gave a simple example for a polynomial of order two. The general case proceeds
along the same lines, it just requires using the binomial theorem (or the multinomial
theorem for \(d > 2\)).

\section{Background on distributions}\label{sec:distributions}
In \cref{sec:distribution_equivariance}, we will describe our framework for
equivariant maps using convolutions with Schwartz distributions. To
facilitate that, we now give the necessary background on Schwartz distributions.
We restrict ourselves to what is absolutely necessary for our purposes; for a
much more thorough introduction and for proofs, see e.g.\ \citet{treves1967}.

\subsection{Basic definitions}
\begin{definition}
  For \(U \subset \R^d\) open, \(\mathcal{D}(U) := C_c^\infty(U)\) is called the space of
  \emph{test functions} on \(U\) (\(C_{c}^{\infty}(U)\) is the space of compactly
  supported smooth functions \(U \to \R\)). A sequence \((\phi_n)\) in \(\mathcal{D}(U)\) is
  defined to converge to 0 iff
  \begin{enumerate}[(i)]
    \item there is a compact subset \(K \subset U\) such that the support of each
    \(\phi_n\) is contained in \(K\) and
    \item \(\partial^\alpha \phi_n \to 0\) uniformly for all multi-indices \(\alpha\).
  \end{enumerate}
\end{definition}
One can define the so-called \emph{canonical LF topology} on \(\mathcal{D}(U)\).
This topology induces the notion of convergence just given. However,
constructing this topology explicitly is unnecessary for our purposes since
knowing when sequences converge will be enough.

\begin{definition}
  A linear functional \(T: \mathcal{D}(U) \to \R\) is defined as continuous if for
  every sequence \(\phi_n\) that converges to 0 in \(\mathcal{D}(U)\), \(T\phi_n \to 0\)
  (in \(\R\)).

  Such a continuous functional is called a \emph{distribution} on \(U\). The
  space of all distributions on \(U\) is written as \(\mathcal{D}'(U)\).
\end{definition}
This notion of continuity is also induced by the canonical LF topology and then
\(\mathcal{D}'(U)\) is the topological dual of \(\mathcal{D}(U)\) as the
notation suggests.

So intuitively, a distribution is a \enquote{reasonable} way of linearly
assigning a number in \(\R\) to each test function in \(C_c^\infty\).

A function \(f: U \to \R\) induces a distribution \(T_f \in \mathcal{D}'(U)\),
defined by
\begin{equation}
  T_f\phi := \int_{U} f\phi d\lambda^d\,.
\end{equation}
Since \(\phi\) has compact support, we don't need many
restrictions on \(f\) (for example, any locally integrable function
\(f \in L_{1, \text{loc}}(U)\) works).

We will also use the \emph{duality pairing}
\begin{equation}
  \langle T, \phi \rangle := T(\phi)
\end{equation}
and under slight abuse of notation also write
\begin{equation}
  \langle f, \phi \rangle := T_f(\phi) = \int_U f\phi d\lambda^d\,.
\end{equation}
Note that this coincides with the inner product on \(L_2(U)\), but it allows
functions \(f\) that are not in \(L_2\) while in exchange requiring \(\phi\) to
have compact support.

\subsection{Convolutions}
We define the translation \(\tau_x: \R^d \to \R^d\) by
\(\tau_x(y) = y + x\) and set \(\tau_x f := f \circ \tau_{-x}\) for functions \(f\) on \(\R^{d}\). This is exactly the same as
the action of \((\R^{d}, +)\) we used in the main paper, just with more explicit
notation. Furthermore, we write \(\check{f}(x) := f(-x)\).

Then we can define the convolution between a distribution and a test function: for
\(f \in \mathcal{D}(U)\) and \(T \in \mathcal{D}'(U)\), the convolution
\(T * f \in C^\infty(U)\) is defined by
\begin{equation}
  (T * f)(x) := T(\tau_{x} \check{f})\,.
\end{equation}
Explicitly, \((\tau_{x}\check{f})(y) = f(x - y)\).
This immediately shows that this notion of convolution extends the classical
one, i.e.
\begin{equation}
  T_g * f = g * f\,.
\end{equation}

\subsection{Derivatives}
\begin{definition}
  For \(T \in \mathcal{D}'(U)\), we define the distributional derivative
  \(\partial^\alpha T\) as the unique distribution on \(U\) for which
  \begin{equation}
    \langle \partial^\alpha T, \phi\rangle = (-1)^{\abs{\alpha}}\langle T, \partial^\alpha \phi \rangle\,.
  \end{equation}
\end{definition}
The distributional derivative of all orders always exists, i.e.\
\enquote{distributions are infinitely differentiable}, but of course only in
this distributional sense. At least for \(f \in C^\infty(U)\) (but also under much
weaker assumptions), this definition also extends the definition of derivatives
of functions, in the sense that \(T_{\partial^\alpha f} = \partial^\alpha T_f\).

\subsection{Composition with diffeomorphisms}
Let \(F: U \to U\) be a diffeomorphism and \(T \in \mathcal{D}'(U)\). Then we
define the composition \(T \circ F \in \mathcal{D}'(U)\) as
\begin{equation}
  \langle T \circ F, \phi\rangle := \langle T, \abs{\det DF^{-1}} \phi \circ F^{-1}\rangle
\end{equation}
where \(\det DF^{-1}\) is the inverse Jacobian.

As in the previous constructions, this extends the definition for classical functions in the sense that
\begin{equation}
  T_{f \circ F} = T_f \circ F\,.
\end{equation}
This follows immediately from the transformation theorem for integrals:
\begin{align}
  \pair{T_{f \circ F}, \phi} &= \int_U (f \circ F) \cdot \phi \diff\lambda\\
                      &= \int_U f \cdot (\phi \circ F^{-1}) \abs{\det DF^{-1}}\diff\lambda\\
                      &= \pair{T_{f}, (\phi \circ F^{-1}) \abs{\det DF^{-1}}}\\
                      &= \pair{T_{f} \circ F, \phi}\,.
\end{align}
Furthermore, this type of composition is associative:
\begin{align}
  \pair{(T \circ F) \circ G, \phi} &= \pair{T \circ F, \abs{\det DG^{-1}}\phi \circ G^{-1}}\\
                        &= \pair{T, \abs{\det DF^{-1}}\abs{\det DG^{-1} \circ F^{-1}}\phi \circ G^{-1} \circ F^{-1}}\\
                        &= \pair{T, \abs{\det (DG^{-1} \circ F^{-1})DF^{-1}}\phi \circ (F\circ G)^{-1}}\\
                        &= \pair{T, \abs{\det D(F \circ G)^{-1}}\phi \circ (F\circ G)^{-1}}\\
                        &= \pair{T \circ (F \circ G), \phi}\,.
\end{align}

We are particularly interested in the case where \(F\) is a linear
transformation, i.e.\ \(F \in \GL(\R^{d})\). Then we have
\begin{equation}
  \langle T \circ F, \phi\rangle = \abs{\det F^{-1}} \langle T, \phi \circ F^{-1}\rangle
\end{equation}
(note that in this case, we can pull out the determinant because
it is just a constant).
For translations, we get
\begin{equation}
  \langle T \circ \tau_x, \phi\rangle = \langle T, \phi \circ \tau_{-x}\rangle = \langle T, \tau_x \phi\rangle\,.
\end{equation}

\subsection{The Dirac delta distribution}
A very simple but important distribution is the following:
\begin{definition}
  For any \(x \in \R^d\), the Dirac delta distribution
  \(\delta_{x} \in \mathcal{D}(\R^d)\) is defined by
  \begin{equation}
    \delta_{x}[\phi] := \phi(x)\,.
  \end{equation}
\end{definition}
It is clear from the definition that the distributional derivatives of the delta
distribution are given by
\begin{equation}
  (\partial^\alpha \delta_{x})[\phi] = (-1)^{\abs{\alpha}}\partial^\alpha\phi(x)\,.
\end{equation}

\subsection{Convergence of distributions}
\begin{definition}
  We say that a sequence \(T_{n}\) of distributions converges to \(T\),
  written \(T_{n} \to T\), if it converges pointwise, i.e.
  \begin{equation}
    \pair{T_{n}, \phi} \to \pair{T, \phi}\quad \text{for } n \to \infty
  \end{equation}
  for all \(\phi \in \mathcal{D}(U)\).
\end{definition}

Abusing notation a bit, we also write \(f_{n} \to T\) for functions \(f_{n}\) if
\(T_{f_{n}} \to T\).

One important type of function sequences are \emph{Dirac sequences}, which
converge to the Delta distribution:
\begin{lemma}\label{thm:dirac_sequence}
  Let \(f_{n}\) be a sequence in \(L^{1}(\R^{d})\) such that
  \begin{enumerate}[(i)]
    \item \(f_{n} \geq 0\),
    \item \(\norm{f_{n}}_{1} = 1\), and
    \item \(\int_{\R^d\setminus B_{\eps}(0)}f_n d\lambda^d \to 0\) for all \(\eps > 0\).
  \end{enumerate}
  Then \(f_{n} \to \delta_{0}\) in the sense of distributions.
\end{lemma}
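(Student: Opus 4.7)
The plan is to show pointwise convergence of the functionals, i.e.\ for every fixed $\phi \in \mathcal{D}(\R^d)$ we will prove that $\pair{f_n, \phi} = \int_{\R^d} f_n \phi \,d\lambda^d \to \phi(0) = \pair{\delta_0, \phi}$. The strategy is the standard approximation-of-identity argument: exploit that $f_n$ has unit mass and concentrates near $0$, and that $\phi$ is continuous at $0$ and globally bounded (since $\phi \in C_c^\infty$).

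First I would use condition (ii) to write $\phi(0) = \int_{\R^d} f_n \phi(0) \, d\lambda^d$, so that the quantity to estimate becomes
\begin{equation}
\left| \pair{f_n, \phi} - \phi(0) \right| = \left| \int_{\R^d} f_n(x)\bigl(\phi(x) - \phi(0)\bigr) d\lambda^d(x) \right| \leq \int_{\R^d} f_n(x) \bigl|\phi(x) - \phi(0)\bigr| d\lambda^d(x),
\end{equation}
using condition (i) to drop the absolute value on $f_n$. This reduces the problem to controlling the weighted $L^1$ distance of $\phi$ from its value at the origin.

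Next I would fix $\eta > 0$ and split the integration domain into the ball $B_\epsilon(0)$ and its complement, choosing the parameters in the right order. Since $\phi$ is continuous at $0$, pick $\epsilon > 0$ small enough that $|\phi(x) - \phi(0)| < \eta$ for all $x \in B_\epsilon(0)$; then the inner integral is at most $\eta \int_{B_\epsilon(0)} f_n \,d\lambda^d \leq \eta$ by conditions (i) and (ii). For the outer region, use that $\phi \in C_c^\infty(\R^d)$ is bounded, say $|\phi(x) - \phi(0)| \leq 2\|\phi\|_\infty =: M$, to bound the outer piece by $M \int_{\R^d \setminus B_\epsilon(0)} f_n \, d\lambda^d$, which tends to $0$ as $n \to \infty$ by condition (iii). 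Hence for $n$ large enough the total estimate is below $2\eta$, and since $\eta$ was arbitrary we conclude $\pair{f_n, \phi} \to \phi(0)$.

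The argument has no real obstacle: conditions (i)--(iii) are tailor-made so that the classical split-and-bound argument goes through verbatim. The only small subtlety worth flagging is the order of quantifiers—$\epsilon$ must be chosen first (depending on $\eta$ and $\phi$), and only then can we invoke condition (iii) to send $n \to \infty$ with $\epsilon$ fixed. Finally, the statement ``$f_n \to \delta_0$ in the sense of distributions'' follows by the notational convention introduced just above the lemma, identifying $f_n$ with the induced distribution $T_{f_n}$.
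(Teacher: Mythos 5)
Your proof is correct and complete: the paper states this lemma as a standard background fact without supplying a proof, so there is nothing to diverge from, and your split-and-bound argument (unit mass plus nonnegativity to reduce to $\int f_n\,|\phi(x)-\phi(0)|$, continuity of $\phi$ at the origin on $B_\varepsilon(0)$, boundedness of $\phi$ plus condition (iii) on the complement, with $\varepsilon$ fixed before sending $n\to\infty$) is exactly the canonical approximation-of-identity proof one would expect here. The final identification of $f_n$ with $T_{f_n}$ matches the paper's stated convention and its pointwise definition of distributional convergence, so no further justification is needed.
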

We also note that convergence plays together nicely with derivatives and with convolutions:
\begin{lemma}\label{thm:convergence_derivative}
  If \(T_{n} \to T\), then \(\partial^{\alpha}T_{n} \to \partial^{\alpha}T\) for all \(\alpha\).
\end{lemma}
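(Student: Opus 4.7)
The plan is to unwind both sides using the definition of the distributional derivative given earlier in the appendix, then invoke the hypothesis $T_n \to T$ applied to a suitably chosen test function.

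Concretely, fix an arbitrary $\phi \in \mathcal{D}(U)$ and a multi-index $\alpha$. By the definition of distributional derivative,
\begin{equation}
  \pair{\partial^\alpha T_n, \phi} = (-1)^{\abs{\alpha}} \pair{T_n, \partial^\alpha \phi}\,.
\end{equation}
The crucial observation is that $\partial^\alpha \phi$ is itself a test function: since $\phi \in C_c^\infty(U)$, all of its classical partial derivatives are also in $C_c^\infty(U)$ (the support of $\partial^\alpha \phi$ is contained in the support of $\phi$, hence still compact). So $\partial^\alpha \phi \in \mathcal{D}(U)$ is a legitimate input for the convergence hypothesis.

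Applying the hypothesis $T_n \to T$ to this particular test function yields $\pair{T_n, \partial^\alpha \phi} \to \pair{T, \partial^\alpha \phi}$, and multiplying by the constant $(-1)^{\abs{\alpha}}$ preserves the limit. Unwinding the derivative definition once more on the right-hand side gives
\begin{equation}
  (-1)^{\abs{\alpha}}\pair{T, \partial^\alpha \phi} = \pair{\partial^\alpha T, \phi}\,,
\end{equation}
so $\pair{\partial^\alpha T_n, \phi} \to \pair{\partial^\alpha T, \phi}$. Since $\phi$ was arbitrary, this is exactly pointwise convergence $\partial^\alpha T_n \to \partial^\alpha T$ in the sense of distributions.

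There is no real obstacle here; the result is essentially a tautology once the definition of the distributional derivative is in place. The only subtlety worth being explicit about is the stability of $\mathcal{D}(U)$ under classical differentiation, which is what legitimizes plugging $\partial^\alpha \phi$ into the convergence assumption.
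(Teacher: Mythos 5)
Your proof is correct and follows exactly the same route as the paper's: unwind the definition of the distributional derivative, apply the pointwise convergence hypothesis to the test function \(\partial^{\alpha}\phi\), and rewrap. The only difference is that you explicitly note that \(\partial^{\alpha}\phi\) remains a test function, which the paper leaves implicit.
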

\begin{proof}
  \begin{equation}
    \begin{split}
      \pair{\partial^{\alpha}T_{n}, \phi} &= (-1)^{\abs{\alpha}}\pair{T_{n}, \partial^{\alpha}\phi}\\
      &\to (-1)^{\abs{\alpha}}\pair{T, \partial^{\alpha}\phi}\\
      &= \pair{\partial^{\alpha}T, \phi}\,.
    \end{split}
\end{equation}
\end{proof}
\begin{lemma}\label{thm:convergence_convolution}
  If \(T_{n} \to T\), then \(T_{n} * f \to T * f\) pointwise for all \(f \in \mc{D}(U)\).
\end{lemma}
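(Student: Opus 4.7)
The plan is to unfold the definition of the convolution and reduce the claim to the very definition of distributional convergence. Fix an arbitrary point $x \in U$. By the definition of the convolution in this appendix, we have
\begin{equation}
(T_n * f)(x) = T_n(\tau_x \check{f}) \quad \text{and} \quad (T * f)(x) = T(\tau_x \check{f}).
\end{equation}
Thus pointwise convergence of $T_n * f$ to $T * f$ is exactly the statement that $T_n(\tau_x \check{f}) \to T(\tau_x \check{f})$ as $n \to \infty$.

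The key observation is that $\tau_x \check{f}$ is itself an element of $\mathcal{D}(U)$: reflection about the origin and translation by $x$ each map $C_c^\infty$ into itself (they preserve smoothness and merely shift and reflect the compact support). Hence $\tau_x \check{f}$ is a legitimate test function against which the distributions $T_n$ and $T$ can be paired.

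Given this, the hypothesis $T_n \to T$ (which by definition means $\langle T_n, \phi\rangle \to \langle T, \phi\rangle$ for every $\phi \in \mathcal{D}(U)$) applied to the particular test function $\phi = \tau_x \check{f}$ yields exactly $T_n(\tau_x \check{f}) \to T(\tau_x \check{f})$. Since $x$ was arbitrary, this gives pointwise convergence $T_n * f \to T * f$ on all of $U$, completing the proof. There is no real obstacle here; the lemma is essentially a restatement of the definitions, and the only thing to verify is the (trivial) fact that translating and reflecting a compactly supported smooth function keeps it compactly supported and smooth.
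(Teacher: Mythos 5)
Your proof is correct and follows the same route as the paper's: unfold $(T_n * f)(x) = T_n(\tau_x\check{f})$ and apply the pointwise definition of distributional convergence to the test function $\tau_x\check{f}$. Your added remark that $\tau_x\check{f}$ remains a valid test function is a small but welcome piece of explicitness the paper leaves implicit.
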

\begin{proof}
  \begin{equation}
    \begin{split}
      (T_{n} * f)(x) &= T_{n}(\tau_{x}\check{f}) \\
      &\to T(\tau_{x}\check{f}) \\
      &= (T * f)(x)\,.
    \end{split}
\end{equation}
\end{proof}

\subsection{Tempered distributions and the Fourier transform}
The \emph{Schwartz space} \(\mathcal{S}(\R^{d})\) is the space of functions for
which all derivatives decay very quickly as \(\abs{x} \to \infty\). More precisely,
a smooth function \(f: \R^{d} \to \R\) is in \(\mathcal{S}(\R^{d})\) iff
\begin{equation}
  \sup_{x \in \R^{d}} \norm{x^{\beta}\partial^{\alpha}f(x)} < \infty\,.
\end{equation}
Intuitively, all derivatives of \(f\) must decay more quickly than any
polynomial. Examples are compactly supported functions or Gaussians.

Its dual space \(\mathcal{S}'(\R^{d})\) is
called the space of \emph{tempered distributions} and can be continuously
embedded into \(\mathcal{D}'(\R^{d})\). Tempered distributions are still
very general; in particular the delta distribution, distributions induced by
functions, and derivatives of tempered distributions are all tempered.

The reason we're interested in tempered distributions is that there is a Fourier
transform defined by
\begin{equation}
  \pair{\fourier{T}, \phi} := \pair{T, \fourier{\phi}}
\end{equation}
for tempered distributions \(T\). This is an automorphism on \(\mathcal{S}'(\R^{d})\).

We use the following convention for the Fourier transform on functions:
\begin{equation}
  \fourier{f}(\xi) := (2\pi)^{-d/2}\int_{\R^d} f(x) \exp(-i x \cdot \xi) \diff x\,,
\end{equation}
which means the inverse is given by
\begin{equation}
  \invfourier{f}(x) := (2\pi)^{-d/2}\int_{\R^d} f(x) \exp(i x \cdot \xi) \diff\xi\,.
\end{equation}

We will later need the Fourier transform of derivatives of the Dirac delta distribution:
\begin{equation}
  \begin{split}
    \pair{\fourier{\partial^{\alpha}\delta_{0}}, \phi} &= \pair{\partial^{\alpha}\delta_{0}, \fourier{\phi}}\\
    &= \left.(-1)^{\abs{\alpha}}\partial^{\alpha}\fourier{\phi}\right|_{0}\\
    &= \left.(-i)^{\abs{\alpha}}\fourier{x^{\alpha}\phi}\right|_{0}\\
    &= (-i)^{\abs{\alpha}}(2\pi)^{-d/2}\int x^{\alpha}\phi(x)\exp(-ix \cdot 0)dx\\
    &= (-i)^{\abs{\alpha}}(2\pi)^{-d/2}\int x^{\alpha}\phi(x)dx\,,
  \end{split}
\end{equation}
so \(\fourier{\partial^{\alpha}\delta_{0}} \propto x^{\alpha}\)
(or more precisely, the distribution induced by the function \(x \mapsto x^{\alpha}\)).

\subsection{Matrices of distributions}
Analogously to how we defined matrices of PDOs or of polynomials, we will need
matrices of distributions. We will write \(\mathcal{D}'(U, \R^{\cout \times \cin})\)
for the space of \(\cout \times \cin\) dimensional matrices with entries in
\(\mathcal{D}'(U)\). We get a pairing
\begin{equation}
  \langle \cdot, \cdot \rangle: \mathcal{D}'(U, \R^{\cout \times \cin}) \times \mathcal{D}(U, \R^\cin) \to \R^{\cout}
\end{equation}
defined by
\begin{equation}
  \langle T, f\rangle_i := \sum_j \langle T_{ij}, f_j \rangle
\end{equation}
The convolution of \(T \in \mathcal{D}'(U, \R^{\cout \times \cin})\) and
\(f \in \mathcal{D}(U, \R^\cin)\) is defined analogously to the scalar case, i.e.\
\((T * f)(x) := \pair{ T, \tau_x \check{f}}\), where we now use the more general duality
pairing just defined.

We can also define the multiplication of matrices of distributions by
matrices over \(\R\). For a matrix \(A \in \R^{\cout \times \cout}\)
and a matrix of distributions \(T \in \mathcal{D}'(\R^d, \R^{\cout \times \cin})\),
we write
\begin{equation}
  (AT)_{ij} := \sum_l A_{il} T_{lj} \in \mathcal{D}'(\R^d, \R^{\cout \times \cout})
\end{equation}
(multiplying a distribution by a scalar obviously defines another distribution).
Analogously, we define \(TB\) for \(B \in \R^{\cin \times \cin}\). It immediately
follows that
\begin{equation}
  A\pair{T, B\phi} = \pair{ATB, \phi}
\end{equation}
holds.

Composition with diffeomorphisms of \(\R^{d}\) can simply be defined
component-wise for matrices of distributions. This commutes with multiplication
by constant matrices, meaning that
\begin{equation}
  ATB \circ F = A(T \circ F)B\,.
\end{equation}

Finally, we also define the Fourier transform component-wise, and this commutes
with multiplication by matrices in the same way.

\section{Distributional framework for equivariant maps}\label{sec:distribution_equivariance}
In this section, we develop two main results: first that all linear continuous
translation equivariant maps between feature spaces are convolutions with some
distribution, and then the equivariance constraint for such convolutions.
See \cref{sec:distributions} for background on distributions. We equip
\(\mathcal{D}(U)\) with the canonical LF topology throughout this section.

\subsection{Translation equivariant maps are convolutions with distributions}
We begin by showing that the framework using convolutions with Schwartz
distributions encompasses all translation equivariant continuous linear maps
between feature spaces. As preparation, we prove a simple Lemma on the
reflection map \(s\):
\begin{lemma}\label{thm:check_continuous}
  The map \(s: \mc{D}(U) \to \mc{D}(U)\) given by \(s(f) := \check{f}\) is linear
  and continuous.
\end{lemma}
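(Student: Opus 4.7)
Linearity is a direct computation: for scalars $\alpha, \beta$ and test functions $f, g$, the definition $s(f)(x) = f(-x)$ gives
\[
 s(\alpha f + \beta g)(x) = (\alpha f + \beta g)(-x) = \alpha f(-x) + \beta g(-x) = \alpha s(f)(x) + \beta s(g)(x).
\]
So the only real content is continuity. (We tacitly assume $U = -U$, which holds in the application $U = \R^d$.)

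For continuity, the plan is to unwind the canonical LF topology to the condition already given in the excerpt, namely sequential convergence of test functions. Suppose $\phi_n \to 0$ in $\mathcal{D}(U)$. By definition, there is a compact $K \subset U$ with $\mathrm{supp}(\phi_n) \subseteq K$ for all $n$, and $\partial^\alpha \phi_n \to 0$ uniformly for every multi-index $\alpha$. I then verify the two defining conditions for $s(\phi_n) = \check{\phi}_n \to 0$: (i) $\mathrm{supp}(\check{\phi}_n) = -\mathrm{supp}(\phi_n) \subseteq -K$, and $-K$ is still a compact subset of $U$; (ii) by the chain rule,
\[
 \partial^\alpha \check{\phi}_n(x) = (-1)^{|\alpha|}(\partial^\alpha \phi_n)(-x),
\]
so $\|\partial^\alpha \check{\phi}_n\|_\infty = \|\partial^\alpha \phi_n\|_\infty \to 0$ for every $\alpha$. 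Hence $\check{\phi}_n \to 0$, i.e.\ $s$ is sequentially continuous.

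To upgrade this to continuity in the LF topology, I would invoke the universal property of the strict inductive limit defining $\mathcal{D}(U)$: a linear map out of $\mathcal{D}(U)$ is continuous iff its restriction to each Fréchet subspace $\mathcal{D}_K(U) = \{\phi \in C_c^\infty(U): \mathrm{supp}\,\phi \subseteq K\}$ is continuous. Since $\mathcal{D}_K(U)$ is metrizable, sequential and topological continuity coincide there, and the computation in (ii) shows that $s$ preserves each of the generating seminorms $p_\alpha(\phi) = \|\partial^\alpha \phi\|_\infty$ up to the change $K \mapsto -K$; this makes the restriction $s|_{\mathcal{D}_K(U)} \to \mathcal{D}_{-K}(U) \hookrightarrow \mathcal{D}(U)$ continuous for every $K$.

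The only mildly delicate step is this last passage from sequential to LF-continuity, but it is routine once one knows the standard characterization of continuity out of an LF space. No heavy machinery is required: the seminorms are literally preserved by $s$, so one really only needs the chain-rule identity above and the fact that $-K$ is compact whenever $K$ is.
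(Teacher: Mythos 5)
Your proof is correct and follows essentially the same route as the paper: linearity by direct computation, then sequential continuity via the two observations that supports land in the compact set $-K$ and that $\partial^\alpha \check{\phi}_n = (-1)^{|\alpha|} (\partial^\alpha\phi_n)(-\,\cdot)$ has the same uniform norm. The only difference is that the paper simply cites Tr\`eves (Proposition~14.7) for the equivalence of sequential and topological continuity on $\mathcal{D}(U)$, whereas you sketch that step yourself via the inductive-limit universal property; your added remark that one needs $U=-U$ is a small point the paper glosses over.
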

\begin{proof}
  Linearity is clear:
  \begin{equation}
    s(af + bg)(x) = af(-x) + bg(-x) = as(f)(x) + bs(g)(x)\,.
  \end{equation}
  For continuity, we use the fact that a linear map from \(\mc{D}(U)\) to itself is
  continuous if and only if it is sequentially
  continuous~\citep[][Proposition~14.7]{treves1967}. So take any sequence
  \(f_{n} \to 0\) in \(\mc{D}(U)\). This means that
  \begin{enumerate}[(i)]
    \item there is a compact subset \(K \subset U\) such that the support of each
    \(f_n\) is contained in \(K\)
    \item \(\partial^\alpha f_n \to 0\) uniformly for all multi-indices \(\alpha\)
  \end{enumerate}
  We set \(-K := \setcomp{-x}{x \in K}\), then the support of \(s(f_{n})\) (which
  is just the mirror of the support of \(f_{n}\)) is contained in \(-K\), and
  \(-K\) is compact. Additionally,
  \begin{equation}
    \partial^{\alpha}s(f_{n}) = (-1)^{\alpha}s\left(\partial^{\alpha}f_{n}\right)\,.
  \end{equation}
  Since \(\partial^{\alpha}f_{n} \to 0\) uniformly, the same is true for
  \(s\left(\partial^{\alpha}f_{n}\right)\). It follows that \(s\) is sequentially continuous, and
    thus continuous, which concludes the proof.
\end{proof}

We first prove our main generality result for the special case of
one-dimensional fibers:
\begin{lemma}
  Let \(\Phi: \mathcal{D}(U) \to \mathcal{D}(U)\) be a translation equivariant
  continuous linear map. Then there is a distribution \(T \in \mc{D}'(U)\) such
  that \(\Phi(f) = T * f\).
\end{lemma}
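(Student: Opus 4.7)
The plan is to construct $T$ explicitly from $\Phi$ by specializing $\Phi(f)(x) = (T*f)(x) = T(\tau_x\check{f})$ to the point $x=0$, which forces the definition
\[
T(g) := \Phi(\check{g})(0) \qquad \text{for } g \in \mathcal{D}(U).
\]
Once $T$ is defined this way, I would first verify that $T \in \mathcal{D}'(U)$, i.e.\ that $T$ is linear and continuous. Linearity is immediate from the linearity of $\Phi$ and of reflection. For continuity, I would decompose $T$ as the composition
\[
g \;\xmapsto{s}\; \check{g} \;\xmapsto{\Phi}\; \Phi(\check{g}) \;\xmapsto{\mathrm{ev}_0}\; \Phi(\check{g})(0),
\]
where $s$ is the reflection map shown continuous in \cref{thm:check_continuous}, $\Phi$ is continuous by assumption, and $\mathrm{ev}_0 : \mathcal{D}(U) \to \mathbb{R}$ is continuous because LF-convergence in $\mathcal{D}(U)$ entails uniform convergence, in particular pointwise convergence at $0$.

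Next I would verify the identity $\Phi(f)(x) = (T*f)(x)$ for every $f \in \mathcal{D}(U)$ and $x \in U$. Unfolding the definitions,
\[
(T*f)(x) \;=\; T(\tau_x \check{f}) \;=\; \Phi\!\left(\,\bigl(\tau_x \check{f}\bigr)^{\vee}\,\right)\!(0).
\]
A short computation using $(\tau_x \check{f})(y) = f(x-y)$ shows $(\tau_x \check{f})^{\vee} = \tau_{-x} f$. Then translation equivariance of $\Phi$ yields $\Phi(\tau_{-x} f) = \tau_{-x} \Phi(f)$, and evaluation at $0$ gives $(\tau_{-x}\Phi(f))(0) = \Phi(f)(x)$, which is exactly what we wanted.

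The only subtle point, which I would treat carefully in the write-up, is the continuity of $\mathrm{ev}_0$ and of the reflection map as maps out of / into the LF space $\mathcal{D}(U)$; for reflection this is already \cref{thm:check_continuous}, and for $\mathrm{ev}_0$ it follows from the characterization of LF-convergence. There is no serious obstacle here since once $T$ is correctly guessed, both the verification that $T$ is a distribution and the identity $\Phi(f)=T*f$ are essentially a chase through definitions plus one application of translation equivariance.
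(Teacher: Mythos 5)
Your proposal is correct and follows essentially the same route as the paper: the paper also defines $T(g) := \Phi(\check{g})(0)$, establishes continuity by writing $T = \delta_0 \circ \Phi \circ s$ (your $\mathrm{ev}_0$ is exactly $\delta_0$), and then verifies $\Phi(f) = T*f$ via the identity $\tau_x\check{f} = \widecheck{\tau_{-x}f}$ and one application of translation equivariance. No differences worth noting.
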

\begin{proof}
  Let \(\Phi: \mc{D}(U) \to \mc{D}(U)\) be any continuous linear map, where
  continuity is understood with respect to the canonical LF topology. We then
  define
  \begin{equation}
    T: \mc{D}(U) \to \R,\quad T(f) := \Phi(\check{f})(0)\,.
  \end{equation}
  Equivalently, we can write \(T = \delta_{0} \circ \Phi \circ s\), understood as normal
  composition of functions. \(\delta_{0}\) and \(\Phi\) are linear and
  continuous, as is \(s\) by \cref{thm:check_continuous}. Therefore, \(T\) is
  also linear and continuous, and thus a Schwartz distribution \(T \in \mc{D}'(U)\).

  We will also need that
  \begin{equation}
    \begin{split}
      \left(\widecheck{\tau_{-x} f}\right)(y) &= (\tau_{-x}f)(-y)\\
      &= f(x - y) \\
      &= \check{f}(y - x) \\
      &= (\tau_{x}\check{f})(y)\,.
    \end{split}
  \end{equation}

  Now using the assumption that \(\Phi\) is translation equivariant, i.e.\
  \begin{equation}
    \tau_{x} \circ \Phi = \Phi \circ \tau_{x}\,,
  \end{equation}
  it follows that convolution with \(T\) is given by
  \begin{equation}
    \begin{split}
      (T * f)(x) &= \pair{T, \tau_{x} \check{f}}\\
                 &= \pair{T, \widecheck{\tau_{-x}f}} \\
                 &= \Phi(\tau_{-x}f)(0) \\
                 &= \tau_{-x}\Phi(f)(0) \\
                 &= \Phi(f)(x)\,.
      \end{split}
    \end{equation}
  This shows that convolution with \(T\) is \(\Phi\), which concludes the proof.
\end{proof}
Finally, we generalize to multi-dimensional feature fields:
\begin{theorem}\label{thm:translation_equivariance_implies_convolution}
  Let \(\Phi: \mathcal{D}(U, \R^{\cin}) \to \mathcal{D}(U, \R^{\cout})\) be a
  translation equivariant continuous linear map. Then there is a distribution
  \(T \in \mc{D}'(U, \R^{\cout \times \cin})\) such that \(\Phi(f) = T * f\).
\end{theorem}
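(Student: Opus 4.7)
The natural plan is to reduce the vector-valued case to the scalar lemma just proved, by decomposing $\Phi$ into its scalar input--output components.

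First I would introduce the coordinate inclusions and projections. For each $j \in \{1, \ldots, \cin\}$ define $\iota_j : \mc{D}(U) \to \mc{D}(U, \R^{\cin})$ by $\iota_j(f) = (0, \ldots, 0, f, 0, \ldots, 0)^T$ with $f$ in the $j$-th slot, and for each $i \in \{1, \ldots, \cout\}$ define $\pi_i : \mc{D}(U, \R^{\cout}) \to \mc{D}(U)$ as the projection onto the $i$-th component. Both maps are linear, continuous (with respect to the canonical LF topology on each component), and commute with translations, since they are purely algebraic and act fiber-wise while translations act on the base space.

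Next, for each pair $(i, j)$ I would consider the scalar-valued map
\begin{equation}
  \Phi_{ij} := \pi_i \circ \Phi \circ \iota_j : \mc{D}(U) \to \mc{D}(U).
\end{equation}
This is a composition of linear continuous maps, hence linear and continuous, and it is translation equivariant because each of the three factors is. Applying the previous lemma gives a distribution $T_{ij} \in \mc{D}'(U)$ with $\Phi_{ij}(f) = T_{ij} * f$ for every $f \in \mc{D}(U)$. Assemble these into $T \in \mc{D}'(U, \R^{\cout \times \cin})$ by setting $T_{ij}$ to be the $(i,j)$-entry.

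Finally I would verify that $\Phi(f) = T * f$ for any $f = (f_1, \ldots, f_{\cin})^T \in \mc{D}(U, \R^{\cin})$. Writing $f = \sum_j \iota_j(f_j)$ and using linearity of $\Phi$,
\begin{equation}
  \pi_i(\Phi(f)) = \sum_j \pi_i(\Phi(\iota_j(f_j))) = \sum_j \Phi_{ij}(f_j) = \sum_j T_{ij} * f_j = (T * f)_i,
\end{equation}
where the last equality is the definition of convolution with a matrix of distributions given in \cref{sec:distributions}. Since this holds for every $i$, we conclude $\Phi(f) = T * f$.

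The argument is essentially a bookkeeping exercise; there is no real obstacle beyond checking that $\iota_j$ and $\pi_i$ preserve the three properties (linearity, translation equivariance, continuity in the LF topology). The continuity check is the only mildly delicate point: a null sequence in $\mc{D}(U)$ under $\iota_j$ produces a vector-valued sequence whose support stays in the same compact set and whose derivatives still converge uniformly componentwise, so it is a null sequence in $\mc{D}(U, \R^{\cin})$, and similarly for $\pi_i$. Sequential continuity then upgrades to continuity as in the proof of \cref{thm:check_continuous}.
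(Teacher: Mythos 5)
Your proposal is correct and follows essentially the same route as the paper: the paper defines $\Phi_{ij}(g) = (\Phi(g e_j))_i$, which is exactly your $\pi_i \circ \Phi \circ \iota_j$, and then applies the scalar lemma componentwise in the same way. The only difference is that you spell out the LF-topology continuity check for $\iota_j$ and $\pi_i$, which the paper dismisses as clear.
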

\begin{proof}
  We write \(e_{j}\) for the \(j\)-th canonical basis vector of \(\R^{\cin}\).
  For \(f \in \mc{D}(U, \R^{\cin})\), we write \(f_{j}\) for its \(j\)-th
  component. Then we have
  \begin{equation}
    \Phi(f) = \Phi\left(\sum_{j = 1}^{\cin}f_{j}e_{j}\right)
    = \sum_{j = 1}^{\cin}\Phi\left(f_{j}e_{j}\right)\,.
  \end{equation}
  We now define the components \(\Phi_{ij}: \mc{D}(U) \to \mc{D}(U)\) by
  \begin{equation}
    \Phi_{ij}(g) = \left(\Phi(g e_{j})\right)_{i}\,.
  \end{equation}
  Here, \(g e_{j} \in \mc{D}(U, \R^{\cin})\) is the map with \(g\) in its \(j\)-th
  output component and 0 in the other ones. We can then write
  \begin{equation}
    \Phi(f)_{i} = \sum_{j = 1}^{\cin} \Phi(f_{j}e_{j})_{i} = \sum_{j = 1}^{\cin} \Phi_{ij}(f_{j})\,.
  \end{equation}
  It is clear that the components \(\Phi_{ij}\) of \(\Phi\) are still
  translation equivariant, continuous and linear. Therefore, by the previous
  Lemma, there are distributions \(T_{ij} \in \mc{D}'(U)\) such that
  \(\Phi_{ij}(g) = T_{ij} * g\). We then get
  \begin{equation}
    \Phi(f)_{i} = \sum_{j = 1}^{\cin} T_{ij} * f_{j} = T * f
  \end{equation}
  where \(T \in \mc{D}'(U, \R^{\cout \times \cin})\) is the matrix of distributions
  with entries \(T_{ij}\).
\end{proof}

\subsection{Equivariance constraint for distributions}
We will now characterize the distributions
\(T \in \mathcal{D}'(\R^d, \R^{\cout \times \cin})\) for which the operator
\begin{equation}
  T_*: \mathcal{D}(\R^d, \R^\cin) \to C^\infty(\R^d, \R^\cout)
\end{equation}
given by \(f \mapsto T * f\) is \(H\)-equivariant. The codomain of \(T_{*}\) may
contain functions that are not compactly supported for some \(T\). We are mostly
interested in distributions \(T\) for which the codomain can be restricted to
\(\mc{D}(\R^{d}, \R^{\cout})\) but for the discussion of equivariance this does
not make any difference, so we keep the derivation general by not restricting
the distribution \(T\).

First, we can note that convolution with a distribution is
always translation equivariant:
\begin{proposition}
  The map \(f \mapsto T * f\) is translation equivariant for any distribution \(T\).
\end{proposition}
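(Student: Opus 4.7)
The plan is to unpack the definition of convolution and show, by a direct manipulation of translations, that the convolution commutes with the translation action defined in \cref{eq:action_definition} (which in the pure-translation case reduces to $(t \rhd f)(x) = f(x - t) = (\tau_t f)(x)$).

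First I would fix a translation $t \in \R^d$ and a test field $f \in \mc{D}(\R^d, \R^\cin)$, and compute $(T * (\tau_t f))(x) = \pair{T, \tau_x \widecheck{\tau_t f}}$ using the defining formula $(T*g)(x) = \pair{T, \tau_x \check{g}}$. The key small identity is $\widecheck{\tau_t f} = \tau_{-t}\check{f}$, which the paper has essentially already established when it derived $\widecheck{\tau_{-x}f} = \tau_x \check{f}$ in the proof of \cref{thm:translation_equivariance_implies_convolution}; I would cite that or reverify it in one line by pointwise evaluation. Combined with the fact that translations form a group, $\tau_x \tau_{-t} = \tau_{x-t}$, this gives $\tau_x \widecheck{\tau_t f} = \tau_{x-t}\check{f}$.

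Plugging this back into the pairing yields $(T * (\tau_t f))(x) = \pair{T, \tau_{x-t}\check{f}} = (T * f)(x - t) = (\tau_t (T*f))(x)$, which is precisely the translation equivariance condition. For the case of multi-dimensional fibers, the argument goes through componentwise: since $(T * f)_i = \sum_j T_{ij} * f_j$ by the matrix convolution defined in \cref{sec:distributions}, and each scalar convolution is translation equivariant by the above, so is the matrix version.

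There is no real obstacle here; the only thing to be careful about is the sign convention on $\tau$ and $\check{\phantom{f}}$, so I would write out the one-line verification of $\widecheck{\tau_t f}(y) = (\tau_t f)(-y) = f(-y - t) = \check{f}(y + t) = (\tau_{-t}\check{f})(y)$ explicitly to avoid confusion. The same proof strategy will serve as the template for the $G$-equivariance analysis in the next step, where $g \in G$ will act by composition on both sides and produce the extra factors of $\rho_{\text{in}}$, $\rho_{\text{out}}$ and $\abs{\det g}^{-1}$ appearing in \cref{eq:general_steerability_constraint}.
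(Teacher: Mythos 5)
Your proposal is correct and follows essentially the same route as the paper: both reduce the claim to the identity $\widecheck{\tau_t f} = \tau_{-t}\check{f}$ together with $\tau_x\tau_{-t} = \tau_{x-t}$, applied inside the duality pairing defining $T * f$. The paper verifies the two sides of the equivariance condition are equal while you compute one side into the other, but the content, including the componentwise reduction for matrix-valued $T$, is the same.
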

\begin{proof}
  The general equivariance condition is
  \begin{equation}
    (h \cdot (T * f))(x) = (T * (h \cdot f))(x)
  \end{equation}
  for all \(f \in \mathcal{D}(\R^d, \R^{\cin}), h \in H, x \in \R^d\). More explicitly, this
  means
  \begin{equation}\label{eq:explicit_general_equivariance}
    \rho_{\text{out}}(g)\langle T, \tau_{h^{-1}x} \check{f}\rangle = \langle T, \tau_{x} (\rho_{\text{in}}(g) \widecheck{f \circ h^{-1}})\rangle\,,
  \end{equation}
  where \(g\) is the linear component of \(h\).

  Let \(h = t \in \R^d\), i.e.\ a pure translation. The condition then becomes
  \begin{equation}
    \pair{T, \tau_{(x - t)} \check{f}} = \pair{T, \tau_{x}\widecheck{(f \circ \tau_{-t})}} = \pair{T, \tau_{x}\tau_{-t} \check{f}}\,,
  \end{equation}
  which always holds (since \(\tau_{(x - t)} = \tau_{x}\tau_{-t}\)).
  So \(T_*\) is translation equivariant by construction.
\end{proof}

Therefore, it suffices to consider pure linear transformations
\(g \in \GL(\R^{d})\). For those, we have the following result, which generalizes
the steerability constraint for PDOs and the one for kernels:
\begin{theorem}\label{thm:distribution_constraint}
  Let \(T \in \mc{D}'(\R^{d}, \R^{\cout \times \cin})\).
  Then the map \(f \mapsto T * f\) is \(G\)-equivariant if and only if
  \begin{equation}\label{eq:distribution_constraint}
    T \circ g = \abs{\det g}^{-1}\rho_{\text{out}}(g) T \rho_{\text{in}}(g)^{-1}\,.
  \end{equation}
\end{theorem}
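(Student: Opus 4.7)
The plan is to reduce immediately to the purely linear part of $H$: the previous proposition shows that $T_*$ is automatically translation equivariant, and since $H = \R^d \rtimes G$, full $H$-equivariance is equivalent to $G$-equivariance. So I would expand both sides of
\[
  (g \rhd_{\text{out}} (T * f))(x) = (T * (g \rhd_{\text{in}} f))(x)
\]
and rewrite each as a single duality pairing against a common test function, at which point matching them becomes an equality of distributions.

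For the left-hand side I unfold the definitions directly to get $\rho_{\text{out}}(g)\pair{T, \tau_{g^{-1}x}\check{f}} = \pair{\rho_{\text{out}}(g) T, \tau_{g^{-1}x}\check{f}}$, using that multiplication by a constant matrix commutes with the duality pairing. For the right-hand side I first compute $\widecheck{g \rhd_{\text{in}} f}(y) = \rho_{\text{in}}(g)\check{f}(g^{-1}y)$ and then $\tau_x$ of it, obtaining $\rho_{\text{in}}(g)\,(\tau_{g^{-1}x}\check{f}) \circ g^{-1}$. The composition-with-diffeomorphism identity from \cref{sec:distributions}, specialised to a linear map $g$, gives
\[
  \pair{T, \phi \circ g^{-1}} = \abs{\det g}\,\pair{T \circ g, \phi},
\]
and applying this (together with the fact that multiplication by $\rho_{\text{in}}(g)$ on the test function transfers to right-multiplication of $T$ by $\rho_{\text{in}}(g)$) rewrites the right-hand side as $\abs{\det g}\,\pair{(T \circ g)\rho_{\text{in}}(g),\, \tau_{g^{-1}x}\check{f}}$.

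Equivariance is therefore equivalent to
\[
  \pair{\rho_{\text{out}}(g) T,\, \tau_{g^{-1}x}\check{f}} = \abs{\det g}\,\pair{(T \circ g)\rho_{\text{in}}(g),\, \tau_{g^{-1}x}\check{f}}
\]
for all $f \in \mathcal{D}(\R^d, \R^{\cin})$ and all $x \in \R^d$. Fixing any $x$, the map $f \mapsto \tau_{g^{-1}x}\check{f}$ is a bijection of $\mathcal{D}(\R^d, \R^{\cin})$, so the pairings agree on every test function. This gives the distributional equality $\rho_{\text{out}}(g) T = \abs{\det g}\,(T \circ g)\,\rho_{\text{in}}(g)$, and multiplying by $\abs{\det g}^{-1}$ and $\rho_{\text{in}}(g)^{-1}$ on the right yields exactly \cref{eq:distribution_constraint}. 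Because every step is an equivalence, the same chain handles both directions of the iff.

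The main obstacle is purely bookkeeping: tracking how the two operations $\tau_x$ and $\check{\cdot}$ interact with the linear action $y \mapsto g^{-1}y$ to produce the right factorisation $(\tau_{g^{-1}x}\check{f}) \circ g^{-1}$, and keeping straight where the Jacobian factor $\abs{\det g}^{\pm 1}$ enters when transferring the composition $\circ g^{-1}$ from the test function side to the distribution side. No genuinely new idea beyond the composition-with-diffeomorphism identity is required.
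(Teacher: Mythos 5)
Your proposal is correct and follows essentially the same route as the paper's proof: expand both sides into duality pairings, strip the reflection, transfer the composition with $g^{-1}$ to the distribution side via the Jacobian identity, and absorb the representation matrices into $T$. The only cosmetic difference is that you discharge the quantifier over $f$ and $x$ by noting $f \mapsto \tau_{g^{-1}x}\check{f}$ is a bijection of test functions, whereas the paper moves the translation onto the distribution side and cancels it there; both yield the same distributional equality and the same chain of equivalences.
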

As is shown in \cref{sec:distributions}, the definitions of convolution with
distributions and of composition with a diffeomorphism are compatible with those
for classical functions. So if \(T\) is a classical kernel, then the constraint
on \(T\) is given by the same equation, which already gives us the steerability
constraint for kernels.
\begin{proof}
  We start with the constraint \cref{eq:explicit_general_equivariance} from
  above with \(h = g \in G\) and will transform this into the desired form
  \cref{eq:distribution_constraint}. First, to simplify notation, we can remove
  all the reflections \(\check{\cdot}\) in \cref{eq:explicit_general_equivariance}.
  This is now possible because \(\widecheck{f \circ g^{-1}} = \check{f} \circ g^{-1}\), which was
  not true for translations. Furthermore, we can pull the translations to the
  other side of the duality pairing:
  \begin{equation}\label{eq:linear_equivariance_constraint}
    \begin{split}
      \rho_{\text{out}}(g)\pair{T \circ \tau_{g^{-1}x}, f} &\overset{!}{=} \pair{T \circ \tau_x, \rho_{\text{in}}(g)f \circ g^{-1}} \\
      &= \abs{\det g}\pair{T \circ \tau_x \circ g, \rho_{\text{in}}(g) f}\,.
    \end{split}
  \end{equation}
  Now we'll use
  \begin{equation}
    \tau_x \circ g = g \circ \tau_{g^{-1}x}\,,
  \end{equation}
  since
  \begin{equation}
    x + gy = g(g^{-1}x + y)\,.
  \end{equation}
  Plugging this into \cref{eq:linear_equivariance_constraint}, we get
  \begin{equation}
    \rho_{\text{out}}(g)\pair{T\circ\tau_{g^{-1}x}, f} \overset{!}{=} \abs{\det g}\pair{T \circ g \circ \tau_{g^{-1}x}, \rho_{\text{in}}(g) f}\,.
  \end{equation}
  We want to have only \(f\) on the right side of the duality pairing, so we use
  the notation we introduced for multiplying distributions by matrices and get
  \begin{equation}
    \pair{\rho_{\text{out}}(g)T \circ \tau_{g^{-1}x}, f} \overset{!}{=} \abs{\det g}\pair{T \circ g \circ \tau_{g^{-1}x} \rho_{\text{in}}(g), f}\,.
  \end{equation}
  This holds for all \(f\) iff we have equality of distributions,
  \begin{equation}
    \rho_{\text{out}}(g)T \circ \tau_{g^{-1}x} \overset{!}{=} \abs{\det g} T \circ g \circ \tau_{g^{-1}x} \rho_{\text{in}}(g)\,.
  \end{equation}
  Finally, multiplication with matrices and composition with diffeomorphisms commute, so
  we can cancel the \(\tau_{g^{-1}x}\). This means our final constraint on \(T\) is
  \begin{equation}
    \rho_{\text{out}}(g) T \overset{!}{=} \abs{\det g} (T \circ g) \rho_{\text{in}}(g)\,,
  \end{equation}
  which we can slightly rewrite as
  \begin{equation}
    T \circ g \overset{!}{=} \abs{\det g}^{-1}\rho_{\text{out}}(g) T \rho_{\text{in}}(g)^{-1}\,.
  \end{equation}
  This is exactly \cref{eq:distribution_constraint}. All steps of the derivation
  work equally well in the other direction, which proves the \enquote{if and
    only if}.
\end{proof}

\subsection{Differential operators as convolutions}\label{sec:distributions_diffops}
It is clear that convolutions with classical kernels are a special case of
convolutions with distributions (see \cref{sec:distributions}). But we have
claimed that convolutions with distributions also cover PDOs, which is what we
show now.

We have already seen in \cref{sec:distributions} that derivatives of the delta
distribution are closely related to PDOs. So we calculate the convolution with
such derivatives:
\begin{equation}
  \begin{split}
    ((\partial^{\alpha}\delta_{0}) * f)(x) &= \pair{\partial^{\alpha}\delta_{0}, \tau_{x}\check{f}}\\
    &= (-1)^{\abs{\alpha}}\partial^{\alpha}(\tau_{x}\check{f})(0)\\
    &= (-1)^{\abs{\alpha}}\tau_{x}(\partial^{\alpha}\check{f})(0)\\
    &= \tau_{x}\left(\widecheck{\partial^{\alpha}f}\right)(0) \\
    &= \left(\widecheck{\partial^{\alpha}f}\right)(-x) \\
    &= \partial^{\alpha}f(x)\,.
  \end{split}
\end{equation}
Noting that the map \(T \mapsto T_{*}\) is \(\R\)-linear, we see that
\(D(p)f = p(\partial)\delta_{0} * f\) for polynomials \(p\). It then also immediately
follows that this is true for matrices of polynomials \(P\) because
\begin{equation}
  \begin{split}
    \left(D(P)f\right)_{i} &= \sum_{j = 1}^{\cin}D(P_{ij})f_{j} \\
    &= \sum_{j = 1}^{\cin}P_{ij}(\partial)\delta_{0} * f_{j} \\
    &= P(\partial)\delta_{0} * f\,.
  \end{split}
\end{equation}
This shows that PDOs can be interpreted as convolutions with distributions, as claimed.

\subsection{The Fourier duality between kernels and PDOs}\label{sec:fourier_duality}
Interpreting PDOs as convolutions with distributions allows us to relate them to
classical convolutional kernels via a Fourier transform. We have already seen in
\cref{sec:distributions} that
\begin{equation}
  \fourier{\partial^{\alpha}\delta_{0}} = (2\pi)^{-d/2}(-i)^{\abs{\alpha}}x^{\alpha}\,,
\end{equation}
which by linearity of the Fourier transform immediately implies
\begin{equation}
  \fourier{P(\partial)\delta_{0}}_{ij} = (2\pi)^{-d/2}\sum_{\alpha}(-i)^{\abs{\alpha}}c^{ij}_{\alpha}x^{\alpha}\,.
\end{equation}
for \(P_{ij} = \sum_{\alpha}c^{ij}_{\alpha}x^{\alpha}\). So the Fourier transform of derivatives
of the delta distribution are polynomials, and of course vice versa via the
inverse Fourier transform. Since PDOs are convolutions with such delta
distribution derivatives, we can also interpret them as \emph{convolutions with
  the (inverse) Fourier transform of polynomials}. We will use this
interpretation to give a derivation of the PDO steerability constraint that
sheds some light on the similarities and differences to the kernel steerability constraint.

We begin by proving a basic fact about the Fourier transform of a composition of functions:
\begin{lemma}\label{thm:fourier_composition}
  Let \(g \in \GL(\R^{d})\) and \(\phi \in \mc{D}(U)\). Then
  \begin{equation}
    \fourier{\phi \circ g} = \abs{\det g^{-1}}\fourier{\phi} \circ g^{-T}\,,
  \end{equation}
  where we use the shorthand \(g^{-T} := \left(g^{-1}\right)^{T}\).
\end{lemma}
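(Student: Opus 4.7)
The plan is to prove this by a direct computation starting from the integral definition of the Fourier transform, and to apply a linear change of variables.

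First I would write
\begin{equation*}
\fourier{\phi \circ g}(\xi) = (2\pi)^{-d/2}\int_{\R^d} \phi(gx) \exp(-i x \cdot \xi)\, dx
\end{equation*}
and substitute $y = gx$. Since $g \in \GL(\R^d)$ is a linear diffeomorphism, the change of variables formula gives $dx = \abs{\det g^{-1}}\, dy$, and $x = g^{-1}y$ in the exponent. This converts the integral into
\begin{equation*}
\abs{\det g^{-1}}(2\pi)^{-d/2}\int_{\R^d} \phi(y) \exp(-i\, g^{-1}y \cdot \xi)\, dy.
\end{equation*}

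The next key step is to move the matrix from one side of the inner product to the other using the standard identity $\langle Au, v\rangle = \langle u, A^T v\rangle$, which here gives $g^{-1}y \cdot \xi = y \cdot g^{-T}\xi$. Substituting this into the exponent yields
\begin{equation*}
\abs{\det g^{-1}} \cdot (2\pi)^{-d/2}\int_{\R^d} \phi(y) \exp(-i\, y \cdot g^{-T}\xi)\, dy = \abs{\det g^{-1}}\fourier{\phi}(g^{-T}\xi),
\end{equation*}
which is precisely the claimed identity evaluated at $\xi$, so $\fourier{\phi \circ g} = \abs{\det g^{-1}}\fourier{\phi}\circ g^{-T}$.

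There is no real obstacle here: the argument is a routine change of variables combined with the transpose identity for the Euclidean inner product. The only minor technicality to note is that $\phi \in \mc{D}(U)$ has compact support, so all integrals are absolutely convergent and the substitution is unambiguously valid; no approximation or limiting argument is required.
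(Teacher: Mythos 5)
Your proof is correct and follows exactly the same route as the paper's: a linear change of variables in the Fourier integral (producing the Jacobian factor $\abs{\det g^{-1}}$) followed by moving $g^{-1}$ across the Euclidean inner product via the transpose identity. No differences worth noting.
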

\begin{proof}
  First we apply the transformation theorem for integrals:
  \begin{equation}
    \begin{split}
      \fourier{\phi \circ g}(\xi) &= (2\pi)^{-d/2}\int \phi(gx)\exp(-i x \cdot \xi)dx\\
      &= \abs{\det g^{-1}}(2\pi)^{-d/2}\int \phi(x)\exp(-i g^{-1}x \cdot \xi) dx\,.
    \end{split}
  \end{equation}
  Now we just rewrite \(g^{-1}x \cdot \xi = x \cdot g^{-T}\xi\), which gives the desired result.
\end{proof}

This result holds more generally for tempered distributions (a subset
of distributions for which the Fourier transform can be defined, see \cref{sec:distributions}):
\begin{proposition}
  For a tempered distribution \(T\) and \(g \in \GL(\R^{d})\),
  \begin{equation}
    \fourier{T \circ g} = \abs{\det g^{-1}}\fourier{T} \circ g^{-T}\,.
  \end{equation}
\end{proposition}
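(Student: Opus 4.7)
The plan is to prove this by duality, pushing everything to the test function side where we can invoke the preceding lemma for Schwartz functions. Since $T$ is tempered, the relevant test functions live in the Schwartz space $\mathcal{S}(\R^d)$, and both the Fourier transform and pre-composition with an element of $\GL(\R^d)$ preserve $\mathcal{S}$, so all pairings below are well-defined.

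First I would unfold the left-hand side using the definitions of the distributional Fourier transform and of composition with a diffeomorphism:
\begin{equation}
\pair{\fourier{T \circ g}, \phi} = \pair{T \circ g, \fourier{\phi}} = \abs{\det g^{-1}} \pair{T, \fourier{\phi} \circ g^{-1}}\,.
\end{equation}
Next I would unfold the right-hand side analogously, applying composition first and then the Fourier transform definition:
\begin{equation}
\pair{\abs{\det g^{-1}} \fourier{T} \circ g^{-T}, \phi} = \abs{\det g^{-1}} \abs{\det g^{T}} \pair{\fourier{T}, \phi \circ g^{T}} = \abs{\det g^{-1}} \abs{\det g} \pair{T, \fourier{\phi \circ g^{T}}}\,.
\end{equation}

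The key step is then to apply the previous lemma (Lemma on $\fourier{\phi \circ g}$) with $g$ replaced by $g^T$, which yields
\begin{equation}
\fourier{\phi \circ g^T} = \abs{\det g^{-T}} \fourier{\phi} \circ g^{-1} = \abs{\det g^{-1}} \fourier{\phi} \circ g^{-1}\,.
\end{equation}
Substituting this back and collecting the determinant factors $\abs{\det g^{-1}} \cdot \abs{\det g} \cdot \abs{\det g^{-1}} = \abs{\det g^{-1}}$, the right-hand side reduces to $\abs{\det g^{-1}} \pair{T, \fourier{\phi} \circ g^{-1}}$, matching the left-hand side. Since $\phi \in \mathcal{S}(\R^d)$ was arbitrary, we conclude equality of the two tempered distributions.

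The only genuine content is the preceding lemma for Schwartz functions; the rest is bookkeeping of determinants and adjoints. The main thing to keep straight is the appearance of $g^T$ versus $g^{-T}$ when moving $g$ across the duality pairing, since composition with $g$ on a distribution corresponds to composition with $g^{-1}$ on the test function (together with a Jacobian factor), and the Fourier transform exchanges $g$ with $g^{-T}$. Once those two transpositions are tracked carefully, the determinant factors cancel in exactly the right way, so no subtle analytic obstacle arises.
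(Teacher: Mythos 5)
Your proposal is correct and matches the paper's proof in all essentials: both unfold the duality pairing, apply the Fourier--composition lemma with $g$ replaced by $g^{T}$, and track the same determinant factors. The only difference is presentational (you expand both sides and meet in the middle, while the paper writes a single chain of equalities from left to right), so there is nothing substantive to add.
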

\begin{proof}
  \begin{equation}
    \begin{split}
      \pair{\fourier{T \circ g}, \phi} &= \pair{T \circ g, \fourier{\phi}}\\
      &= \abs{\det g^{-1}}\pair{T, \fourier{\phi} \circ g^{-1}}\\
      &\overset{(1)}{=} \pair{T, \fourier{\phi \circ g^{T}}}\\
      &= \pair{\fourier{T}, \phi \circ g^{T}}\\
      &= \abs{\det g^{-1}}\pair{\fourier{T} \circ g^{-T}, \phi}\,,
    \end{split}
  \end{equation}
  where we used \cref{thm:fourier_composition} for (1).
\end{proof}

Now note that the equivariance constraint for distributions,
\cref{eq:distribution_constraint}, is equivalent to the constraint we get when
we take the Fourier transform on both sides. That's because the Fourier
transform is an automorphism on the space of tempered distributions. By applying
the result we just proved, we then get the constraint
\begin{equation}
  \abs{\det g^{-1}}\fourier{T} \circ g^{-T} = \abs{\det g^{-1}}\rho_{\text{out}}(g)\fourier{T}\rho_{\text{in}}(g)^{-1}\,.
\end{equation}
We can cancel the determinants, which gives the equivariance constraint in
Fourier space:
\begin{equation}\label{eq:fourier_constraint}
  \fourier{T} \circ g^{-T} = \rho_{\text{out}}(g)\fourier{T}\rho_{\text{in}}(g)^{-1}\,.
\end{equation}
Now let \(T = P(\partial)\delta_{0}\), so that \(T * f = D(P)f\)
for some matrix of polynomials \(P\). We've seen above that in this case
\(\fourier{T} = P \circ m_{-i}\) (up to a constant coefficient), where \(m_{-i}\) is
multiplication by the negative imaginary unit \(-i\). So \(D(P)\) is
equivariant iff
\begin{equation}
  P \circ m_{-i} \circ g^{-T} = \rho_{\text{out}}(g)P \circ m_{-i}\rho_{\text{in}}(g)^{-1}\,,
\end{equation}
but since \(m_{-i}\) is invertible and commutes with the other maps, we can cancel it
and get the equivariance condition
\begin{equation}
  P \circ g^{-T} = \rho_{\text{out}}(g)P\rho_{\text{in}}(g)^{-1}\,,
\end{equation}
which is precisely the PDO steerability constraint. The reason that it differs
slightly from the kernel steerability constraint can now be traced back to
\cref{thm:fourier_composition}. Intuitively speaking, since PDOs are in a sense
the Fourier transform of convolutional kernels, they transform differently under
\(\GL(\R^{d})\), which leads to superficial differences in the steerability constraints.
However, \emph{Fourier transforms commute with rotations and reflections} (i.e.\
transformations from \(\orth(d)\)), which is why for \(G \leq \orth(d)\), the two
steerability constraints coincide.

\subsection{PDOs as the infinitesimal limit of kernels}
Let \(\psi_{n}\) be any sequence of functions such that \(\psi_{n}\to \delta_{0}\), for
example a Dirac sequence. Then for a polynomial
\(p = \sum_{\alpha}c_{\alpha}x^{\alpha}\), we also have
\begin{equation}
  p(\partial)\psi_{n} \to p(\partial)\delta_{0}
\end{equation}
because of \cref{thm:convergence_derivative}. Then
\cref{thm:convergence_convolution} implies that
\begin{equation}
  p(\partial)\psi_{n} * f \to p(\partial)\delta_{0} * f = D(p)f\,,
\end{equation}
where the convergence is understood pointwise. Therefore, any sequence of
kernels that approximates the delta distribution (by becoming
\enquote{increasingly narrow}) can be used to approximate arbitrary PDOs by
convolving with the derivatives of the kernels.

One example is a sequence of Gaussians
\begin{equation}
  \psi_{\eps}(x) := \frac{1}{(2\pi \eps)^{d/2}}e^{-x^{2}/\eps}
\end{equation}
(now indexed by \(\eps > 0\) instead of natural numbers).
For \(\eps \to 0\), we have \(\psi_{\eps} \to \delta_{0}\), as is easy to check with \cref{thm:dirac_sequence}.
This naturally leads to the \enquote{derivative of Gaussian} discretization used
in our experiments: we can approximate the PDO as convolution with a derivative
of a Gaussian kernel and then simply discretize this Gaussian derivative by
sampling it on the grid points.

The discussion in this subsection focused on \(1 \times 1\) PDOs and kernels, i.e.\
\(\cin = \cout = 1\), but since convergence for multi-dimensional PDOs and
kernels works component-wise, everything generalizes immediately to that setting.

\section{The relation between kernels and PDOs}\label{sec:kernels_vs_pdos}
Convolutional kernels and PDOs are closely related but also differ in some important ways.
This appendix is meant to briefly summarize various aspects of their relation that would
otherwise only be scattered throughout this paper.

First and foremost, we would like to emphasize that, in a continuous setting,
PDOs are \emph{not} just a special case of convolutions with classical kernels.
For example, the gradient operator is not represented by convolution with any kernel.
In fact, even the identity operator (a zeroth-order PDO) is not the convolution
with any function.

However, there are two caveats to the above statement. First, if we allow convolutions
with \emph{Schwartz distributions}, rather than the usual kernels, then this generalizes
both PDOs and convolutions with functions. In this broader framework, we \emph{can} thus
interpret PDOs as convolutions. But to the best of our knowledge, convolutions with
Schwartz distributions have not been previously discussed in a deep learning context,
so for the common usage of \enquote{convolution}, PDOs are distinct operators.

The second caveat is that when we discretize (translation-equivariant) PDOs on a regular grid,
they become convolutions in this discrete setting. Even so, the differences in the continuum
can matter in practice. First, we might want to discretize on point clouds or meshes. Second,
even on a regular grid, there are different methods for discretizing PDOs. While all of them
lead to convolutions, they lead to convolutions with slightly different kernels. This shows
that there is no clear one-to-one correspondence between kernels and PDOs even on discrete
regular grids. Finally, the space of discretizations of \emph{equivariant} PDOs is not necessarily
the same as the space of discretizations of \emph{equivariant} kernels.

Finally, we would like to mention a connection that is outside the scope of this paper:
\emph{infinite series} of differential operators and convolutions can be the same
even in the continuous setting. As an example, consider the diffusion equation
\begin{equation}
  \partial_t u(x, t) = \Delta u(x, t)\,.
\end{equation}
Its solution can be written as
\begin{equation}
  u(x, t) = \exp(t\Delta)u(x, t = 0)\,.
\end{equation}
The time evolution operator $\exp(t\Delta)$ is an infinite series of differential operators,
but can also be written as a convolution with a heat kernel. However, these infinite series
are not covered by our work---we restrict ourselves to PDOs of \emph{finite} order. We mention
this connection between infinite PDOs and convolutions only to avoid confusion for readers
familiar with this correspondence.

\section{Proof of the PDO equivariance constraint}\label{sec:diffop_equivariance_proofs}
In this section, we prove \cref{thm:translation_equivariance} and
\cref{thm:diffop_equivariance}, the characterizations of translation- and
\(G\)-equivariance for PDOs. \Cref{sec:distribution_equivariance} already
contains a proof of \cref{thm:diffop_equivariance} that is arguably more
insightful, but in this section we present an elementary proof that does not
rely on Schwartz distributions.

\subsection{Translation equivariance}
We begin by proving that translation equivariance of a PDO is equivalent to spatially
constant coefficients, \cref{thm:translation_equivariance}.

Let \(\Phi_{ij} = \sum_{\alpha}c^{ij}_{\alpha}\partial^{\alpha}\), where \(c^{ij}_{\alpha} \in C^{\infty}(\R^{d})\) and
\(i, j\) index the \(\cout \times \cin\) matrix describing the PDO.
The equivariance condition \cref{eq:equivariance_def} for the special case of
translations is
\begin{equation}\label{eq:translation_equivariance_def}
  \Phi(t \rhd_{\text{in}} f) = t \rhd_{\text{out}} \Phi(f)\quad \forall t \in \R^{d}, f \in \mathcal{F}_{\text{in}}\,.
\end{equation}
Using the definition of the \(\rhd\) action in \cref{eq:action_definition}, this becomes
\begin{equation}\label{eq:translation_equivariance_explicit}
 \sum_{\alpha}c_{\alpha}(x)\partial^{\alpha}f(x - t) = \sum_{\alpha}c_{\alpha}(x - t)\partial^{\alpha}f(x - t) \quad\forall x, t \in \R^{d}, f \in \mathcal{F}_{\text{in}}\,.
\end{equation}
Here, \(c_{\alpha}\) are matrix-valued, \(f\) is vector valued, and we have a
matrix-vector product between the two. Explicitly, this means
\begin{equation}
   \sum_{j}\sum_{\alpha}c^{ij}_{\alpha}(x)\partial^{\alpha}f_{j}(x - t) = \sum_{j}\sum_{\alpha}c^{ij}_{\alpha}(x - t)\partial^{\alpha}f_{j}(x - t)
\end{equation}
for all indices \(i\). If \(f\) is zero in all but one component, the sum over
\(j\) reduces to one summand. We therefore get a simpler scalar constraint
\begin{equation}
  \sum_{\alpha}c^{ij}_{\alpha}(x)\partial^{\alpha}g(x - t) = \sum_{\alpha}c^{ij}_{\alpha}(x - t)\partial^{\alpha}g(x - t)
\end{equation}
that has to hold for all indices \(i, j\), where \(g \in C^{\infty}(\R^{d})\) is now scalar-valued.

Since this must hold for all functions \(g\), we get an equality of differential operators,
\begin{equation}
  \sum_{\alpha}c^{ij}_{\alpha}(x)\partial^{\alpha} = \sum_{\alpha}c^{ij}_{\alpha}(x - t)\partial^{\alpha} \qquad\forall x, t \in \R^{d}\,.
\end{equation}
This implies \(c^{ij}_{\alpha}(x - t) = c^{ij}_{\alpha}(x)\) for all
\(x\), \(t\) and \(\alpha\), which in turn means that \(c^{ij}_{\alpha}\) must be constants.
From \cref{eq:translation_equivariance_explicit}, it is also apparent that
constant coefficients are sufficient for translation equivariance, which proves
the converse direction.

\subsection{\(G\)-equivariance}
Secondly we prove \cref{thm:diffop_equivariance}, the \(G\)-steerability
constraint for PDOs.
Recall that the \(G\)-equivariance condition for \(D(P)\) is
\begin{equation}
  D(P)(g \rhd_{\text{in}} f) = g \rhd_{\text{out}} (D(P)f)\,,
\end{equation}
where \(f \in \mathcal{F}_{\text{in}}\) and \(g \in G\). If we write out the
definition of the induced action \(\rhd\), this becomes
\begin{equation}\label{eq:diffop_constraint_explicit}
  D(P)(\rho_{\text{in}}(g)f \circ g^{-1}) = \rho_{\text{out}}(g)(D_{P}f) \circ g^{-1}\,.
\end{equation}
We write \(P = \sum_{\alpha}c_{\alpha}x^{\alpha}\), where \(c_{\alpha}\) are constant
matrix-valued coefficients \(c_{\alpha} \in \R^{\cout \times \cin}\). The LHS
of \cref{eq:diffop_constraint_explicit} is then
\begin{equation}\label{eq:lhs}
  D(P)(\rho_{\text{in}}(g)f \circ g^{-1}) = \sum_{\alpha}c_{\alpha}\rho_{\text{in}}(g)\partial^{\alpha}(f \circ g^{-1})\,.
\end{equation}
In \cref{thm:chain_rule}, we will show that the last term in \cref{eq:lhs} is given by
\begin{equation}
  \partial^{\alpha} (f \circ g^{-1}) = \left(\left(g^{-T}\partial\right)^{\alpha}f\right) \circ g^{-1}\,,
\end{equation}
where \(g^{-T} := \left(g^{-1}\right)^{T}\) and \(\partial\) is understood as a
vector that \(g^{-T}\) acts on. Plugging this into \cref{eq:lhs}, we get
\begin{equation}\label{eq:lhs_explicit}
  D(P)(\rho_{\text{in}}(g)f \circ g^{-1}) = \sum_{\alpha}c_{\alpha}\left(\left(g^{-T}\partial\right)^{\alpha}\rho_{\text{in}}(g)f\right) \circ g^{-1}
\end{equation}
for the LHS of \cref{eq:diffop_constraint_explicit}.
For the matrix of polynomials \(P = \sum_{\alpha}c_{\alpha}x^{\alpha}\), we now define
\begin{equation}
  g \cdot P := \sum_{\alpha}c_{\alpha}(g^{-1}x)^{\alpha}\,,
\end{equation}
which is again a matrix of polynomials, each one rotated by \(g\).
Then \cref{eq:lhs_explicit} can be written more compactly as
\begin{equation}
  D(P)(\rho_{\text{in}}(g)f \circ g^{-1}) = (D(g^{T} \cdot P)\rho_{\text{in}}(g)f) \circ g^{-1}\,.
\end{equation}
Plugging this back into \cref{eq:diffop_constraint_explicit}, canceling the
\(g^{-1}\) and using the fact that this has to hold for all \(f\), we get
\begin{equation}
  D(g^{T} \cdot P)\rho_{\text{in}}(g) = \rho_{\text{out}}(g) D(P)
\end{equation}
as an equality of differential operators. We move the \(\rho_{\text{in}}(g)\) to
the other side and use the fact that the map \(D\) is bijective, which
yields our final constraint
\begin{equation}
  P(g^{-T}x) = \rho_{\text{out}}(g)P(x)\rho_{\text{in}}(g)^{-1}\,.
\end{equation}
This concludes the proof of the \(G\)-steerability constraint for PDOs.

Finally, we prove the Lemma we just made use of, a higher-dimensional chain rule
for the special case we need:
\begin{lemma}\label{thm:chain_rule}
  Let \(f \in C^{\infty}(\R^{d}, \R^{c})\) and \(g \in \GL(\R^{d})\). Then for any
  multi-index \(\alpha\),
  \begin{equation}
    \partial^{\alpha} (f \circ g^{-1}) = \left(\left(g^{-T}\partial\right)^{\alpha}f\right) \circ g^{-1}\,,
  \end{equation}
  where \(g^{-T} := \left(g^{-1}\right)^{T}\).
\end{lemma}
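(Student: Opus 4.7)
The plan is to prove the lemma by induction on the order $|\alpha|$, with the base case handled by the ordinary multivariable chain rule, and the inductive step leveraging the fact that the components of the vector operator $g^{-T}\partial$ all commute with one another.

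First I would handle the base case $|\alpha|=1$, say $\alpha = e_i$. By the chain rule applied to $f \circ g^{-1}$ together with the fact that $D(g^{-1}) = g^{-1}$ (since $g^{-1}$ is linear), I obtain
\begin{equation}
\partial_i(f\circ g^{-1})(x) = \sum_j \partial_j f(g^{-1}x)\,(g^{-1})_{ji} = \sum_j (g^{-T})_{ij}\,\partial_j f(g^{-1}x),
\end{equation}
which is exactly $\bigl((g^{-T}\partial)_i f\bigr)\circ g^{-1}$, where I interpret $g^{-T}\partial$ as the vector of scalar differential operators with $i$-th entry $\sum_j (g^{-T})_{ij}\partial_j$. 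The case $|\alpha|=0$ is trivial.

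Next I would verify the key commutativity observation: any two components $(g^{-T}\partial)_i$ and $(g^{-T}\partial)_k$ commute, since each is a constant-coefficient linear combination of the pairwise commuting partials $\partial_1,\dots,\partial_d$. This makes the symbol $(g^{-T}\partial)^\alpha$ unambiguous and lets me treat it like an ordinary multi-index power.

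For the inductive step, I assume the identity holds for all multi-indices of order at most $n$, and pick $\alpha$ with $|\alpha|=n+1$. Writing $\alpha = \beta + e_i$ with $|\beta|=n$, I compute
\begin{equation}
\partial^\alpha(f\circ g^{-1}) = \partial_i\bigl(\partial^\beta(f\circ g^{-1})\bigr) = \partial_i\bigl(((g^{-T}\partial)^\beta f)\circ g^{-1}\bigr)
\end{equation}
by the inductive hypothesis, and then apply the base case to the smooth function $(g^{-T}\partial)^\beta f$ in place of $f$. This gives $\bigl((g^{-T}\partial)_i (g^{-T}\partial)^\beta f\bigr)\circ g^{-1}$, which equals $\bigl((g^{-T}\partial)^\alpha f\bigr)\circ g^{-1}$ by the commutativity step above.

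The main potential obstacle is purely notational: making sure the multi-index power of the vector operator $g^{-T}\partial$ is interpreted consistently, and that the scalar-valued chain rule applies component-wise to the vector-valued $f \in C^\infty(\R^d,\R^c)$. Both are routine once commutativity is noted, so no real analytic difficulty arises.
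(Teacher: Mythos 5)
Your proof is correct and takes essentially the same approach as the paper: establish the first-order case via the chain rule and linearity of $g^{-1}$, then iterate to general $\alpha$. You merely make the iteration explicit as an induction and spell out the commutativity of the components of $g^{-T}\partial$, which the paper leaves implicit.
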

\begin{proof}
  In general, for a linear map \(A \in \GL(\R^{d})\), we have
  \begin{equation}
    \partial_{i}A_{j}(x) = A_{ji}\quad \forall x\,.
  \end{equation}
  Therefore,
  \begin{align}
    \partial_{i}(f \circ g^{-1}) &= \sum_{j}(\partial_{j}f) \circ g^{-1} \partial_{i}(g^{-1})_{j}\\
                      &= \sum_{j} (\partial_{j}f) \circ g^{-1}\cdot(g^{-1})_{ji}\\
                      &= \left(\left(\left(g^{-1}\right)^{T}\partial\right)_{i} f\right) \circ g^{-1}\,.
  \end{align}
  We can apply this iteratively to show that
  \begin{equation}
    \partial^{\alpha} (f \circ g^{-1}) = \left(\left(\left(g^{-1}\right)^{T}\partial\right)^{\alpha}f\right) \circ g^{-1}\,.
  \end{equation}
\end{proof}

\section{Transferring steerable kernel bases to steerable PDO bases}\label{sec:transfer_solutions}
In this section, we develop the method presented in
\cref{sec:solving_constraint} in more detail and with proofs.

We fix a group \(G \leq \orth(d)\) and representations \(\rho_{\text{in}}\) and
\(\rho_{\text{out}}\). Then we write \(\mathcal{K}\) for the space of
\(G\)-steerable kernels. Because the steerability constraints for kernels and
PDOs are identical in this setting, the space of equivariant PDOs is the image
under the isomorphism \(D\) of the intersection
\begin{equation}
  \mathcal{K}_{\text{pol}} := \R[x_{1}, \ldots, x_{d}] \cap \mathcal{K}\,.
\end{equation}
In words, the space of equivariant PDOs is isomorphic to the space of
\emph{polynomial} steerable kernels. Both spaces are infinite-dimensional
real vector spaces. The question we tackle now is how
we can find a basis of \(\mathcal{K}_{\text{pol}}\) given a basis of
\(\mathcal{K}\) under certain conditions.

It will vastly simplify our discussion to treat \(\mathcal{K}\) and
\(\mathcal{K}_{\text{pol}}\) as \emph{modules} over invariant kernels instead of
as real vector spaces, at
least for now. A module is a generalization of a vector space, where the scalars
for scalar multiplication can form a ring instead of a field.
Because the radial parts of steerable kernels are unrestricted, it makes sense
to think of \(\mathcal{K}\) as a module over the ring of radial functions. Formally:
\begin{lemma}\label{thm:module}
  \(\mathcal{K}\) is a \(C^{\infty}(\Rplus)\)-module, with scalar multiplication
  defined by
  \begin{equation}
    (f\kappa)(x) := f\left(\abs{x}^{2}\right)\kappa(x)
  \end{equation}
  for \(\kappa \in \mathcal{K}\) and \(f \in C^{\infty}(\Rplus)\).
  Similarly, \(\mathcal{K}_{\text{pol}}\) is an
  \(\R[r^{2}]\)-module, where \(r^{2} := x_{1}^{2} + \ldots + x_{d}^{2}\) and
  multiplication is simply multiplication of polynomials.
\end{lemma}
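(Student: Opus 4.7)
The plan is to verify (i) that the defined scalar multiplications actually land inside $\mathcal{K}$ and $\mathcal{K}_{\text{pol}}$ respectively, and (ii) that the standard module axioms hold. Since addition on both spaces is just pointwise addition of matrix-valued maps, the module axioms (associativity, distributivity, and the unit law $1\cdot\kappa = \kappa$) will follow automatically from the corresponding pointwise identities in $\R^{\cout\times\cin}$, so the real content is the closure under scalar multiplication.

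For closure in $\mathcal{K}$, I would fix $\kappa\in\mathcal{K}$ and $f\in C^\infty(\R_+)$ and define $(f\kappa)(x)\coloneqq f(\abs{x}^2)\kappa(x)$. Smoothness is immediate because $x\mapsto\abs{x}^2$ is smooth on $\R^d$ (including at the origin), $f$ is smooth on its domain, and the product of smooth functions is smooth. The crucial step is the steerability constraint: for any $g\in G\leq\orth(d)$ one has $\abs{gx}=\abs{x}$ and $\abs{\det g}=1$, so
\begin{equation}
(f\kappa)(gx) = f(\abs{gx}^2)\kappa(gx) = f(\abs{x}^2)\,\abs{\det g}^{-1}\rho_{\text{out}}(g)\kappa(x)\rho_{\text{in}}(g)^{-1},
\end{equation}
and since the scalar $f(\abs{x}^2)$ commutes with the representation matrices, the right-hand side equals $\abs{\det g}^{-1}\rho_{\text{out}}(g)(f\kappa)(x)\rho_{\text{in}}(g)^{-1}$, so $f\kappa\in\mathcal{K}$. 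For closure in $\mathcal{K}_{\text{pol}}$, the same computation works with $f=q(r^2)$ a polynomial in $r^2$, and additionally $q(\abs{x}^2)=q(x_1^2+\cdots+x_d^2)$ is a polynomial in $x_1,\ldots,x_d$; multiplying a polynomial steerable kernel by this yields another polynomial steerable kernel, so $q\kappa\in\mathcal{K}_{\text{pol}}$.

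The module axioms then reduce to pointwise identities: for $f_1,f_2\in C^\infty(\R_+)$, $\kappa,\kappa'\in\mathcal{K}$, and any $x\in\R^d$, the equalities $((f_1f_2)\kappa)(x)=(f_1(f_2\kappa))(x)$, $((f_1+f_2)\kappa)(x)=(f_1\kappa)(x)+(f_2\kappa)(x)$, $(f_1(\kappa+\kappa'))(x)=(f_1\kappa)(x)+(f_1\kappa')(x)$, and $(1\cdot\kappa)(x)=\kappa(x)$ all follow from the ring structure of $\R^{\cout\times\cin}$ under entrywise addition and scalar multiplication. The same argument applies verbatim with $C^\infty(\R_+)$ replaced by $\R[r^2]$ and $\mathcal{K}$ replaced by $\mathcal{K}_{\text{pol}}$. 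The only mildly delicate point is the interpretation of $C^\infty(\R_+)$ so that $f(\abs{x}^2)$ is genuinely smooth on all of $\R^d$ (including the origin); reading $\R_+$ as $[0,\infty)$ with smoothness meaning smooth up to the boundary handles this cleanly, and no obstacle remains beyond that.
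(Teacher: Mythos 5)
Your proposal is correct and follows essentially the same route as the paper: both reduce the claim to closure of $\mathcal{K}$ (resp.\ $\mathcal{K}_{\text{pol}}$) under the given scalar multiplication, verified by the one-line computation $(f\kappa)(gx) = f(\abs{gx}^2)\kappa(gx) = f(\abs{x}^2)\rho_{\text{out}}(g)\kappa(x)\rho_{\text{in}}(g)^{-1}$ using $\abs{gx}=\abs{x}$ for $g\in\orth(d)$. You spell out the routine module axioms and the smoothness-at-the-origin point, which the paper leaves implicit, but the substance is identical.
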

\begin{proof}
  The steerability constraint
  \begin{equation}
    \kappa(gx) = \rho_{\text{out}}(g)\kappa(x)\rho_{\text{in}}(g)^{-1}
  \end{equation}
  is clearly \(\R\)-linear and in particular, \(\mathcal{K}\) is closed under
  addition and \(0 \in \mathcal{K}\). Furthermore, for \(\kappa \in \mathcal{K}\),
  \begin{equation}
    \begin{split}
      (f\kappa)(gx) &= f(\abs{gx}^{2})\kappa(gx)\\
      &= f(\abs{x}^{2})\rho_{\text{out}}(g)\kappa(x)\rho_{\text{in}}(g)^{-1}\\
      &= \rho_{\text{out}}(g)(f\kappa)(x)\rho_{\text{in}}(g)^{-1}\,,
    \end{split}
  \end{equation}
  so \(\mathcal{K}\) is also closed under the given scalar multiplication.
  The proof for \(\mathcal{K}_{\text{pol}}\) is exactly analogous.
\end{proof}
In \cref{thm:module} and in the following, we write \(r^{2}\) instead of
\(\abs{x}^{2}\) simply to emphasize its role as a polynomial; so when reading
\(r^{2}\), think of it as a polynomial, and when reading \(\abs{x}^{2}\) simply
as a function of \(x\).

A basis of a module is defined analogously to a basis of a vector space, as a
set of linearly independent vectors that span the entire module. However, linear
combinations now allow coefficients in the ring of radial functions, instead of
only real numbers. This means that fewer vectors are needed to span the entire
space, because the coefficients \enquote{do more work}.

In contrast to vector spaces, not every module has a basis.
\(\mathcal{K}\) and \(\mathcal{K}_{\text{pol}}\) do have a basis in the cases we
consider in the paper but for this section that doesn't matter to us: we will
simply assume that \(\mathcal{K}\) has a basis with certain properties and then
transfer this basis to \(\mathcal{K}_{\text{pol}}\). That the method developed
here is indeed applicable to subgroups of \(\orth(2)\) and \(\orth(3)\) will be
the topic of \cref{sec:solutions_proofs}.

We roughly proceed in two steps:
\begin{enumerate}
  \item We show that a basis of \(\mathcal{K}\) (over
  \(C^{\infty}(\Rplus)\)) that consists only of polynomials (and fulfills a few other
  technical conditions) is also a basis of
  \(\mathcal{K}_{\text{pol}}\), but this time of course over \(\R[r^{2}]\).
  \item We then show how to turn this \emph{module basis} into a \emph{vector
    space} basis of \(\mathcal{K}_{\text{pol}}\).
\end{enumerate}

The first step is formalized as follows:
\begin{proposition}\label{thm:polynomial_basis}
  Let \(B \subset \R[x_{1}, \ldots, x_{d}]^{\cout \times \cin}\) be a basis of \(\mathcal{K}\)
  such that no matrix of polynomials in \(B\) is divisible by \(r^{2}\) and each
  one is homogeneous. Then \(B\) is also a basis of \(\mathcal{K}_{\text{pol}}\)
  as an \(\R[r^{2}]\)-module.
\end{proposition}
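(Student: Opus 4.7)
The plan is to check the two defining properties of a module basis, namely $\R[r^2]$-linear independence and $\R[r^2]$-spanning of $\mathcal{K}_{\text{pol}}$. Linear independence is immediate from the hypothesis: any relation $\sum_i p_i(r^2) B_i = 0$ with $p_i \in \R[r^2]$ is also a relation over the larger ring $C^\infty(\Rplus)$ under the inclusion $\R[r^2] \hookrightarrow C^\infty(\Rplus)$, so every $p_i$ vanishes. The real content is the spanning claim, which I would prove via a scaling argument.

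First I would reduce to the case of homogeneous $P$. Writing any $P \in \mathcal{K}_{\text{pol}}$ as $P = \sum_n P^{(n)}$ where $P^{(n)}$ is the homogeneous component of degree $n$, each $P^{(n)}$ remains $G$-equivariant because the $G$-action is linear and therefore preserves every homogeneous degree; thus $P^{(n)} \in \mathcal{K}_{\text{pol}}$. So it suffices to expand each homogeneous $P \in \mathcal{K}_{\text{pol}}$ in the claimed form.

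Fix a homogeneous $P \in \mathcal{K}_{\text{pol}}$ of degree $n$ and use the hypothesis to write $P = \sum_i f_i(|x|^2) B_i$ uniquely with $f_i \in C^\infty(\Rplus)$. For any $\lambda \in \R$, the dilated map $P(\lambda\,\cdot)$ again lies in $\mathcal{K}$, since scalar dilation commutes with every linear transformation and hence with $G$. Computing $P(\lambda x)$ in two ways---using homogeneity of $P$ on one side, and homogeneity of each $B_i$ of degree $n_i$ on the other---and invoking uniqueness of the basis expansion in $\mathcal{K}$ yields the coefficient-wise identity
\[
  \lambda^{n_i}\, f_i(\lambda^2 t) \;=\; \lambda^n\, f_i(t) \qquad \forall\, \lambda \in \R,\; t \geq 0.
\]
Comparing the cases $\lambda$ and $-\lambda$ immediately forces $f_i \equiv 0$ whenever $n - n_i$ is odd. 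When $n - n_i$ is even, specializing $t = 1$ and $\lambda > 0$ gives $f_i(s) = f_i(1)\, s^{(n-n_i)/2}$ for $s > 0$; smoothness of $f_i$ at $0$ then rules out negative exponents, leaving $f_i$ as a scalar multiple of $s^{(n-n_i)/2}$ with $(n-n_i)/2 \in \N_0$. In every case $f_i \in \R[r^2]$ as a coefficient for $B_i$. Summing the resulting expansions over the finitely many nonzero homogeneous components of the original $P$ produces the desired $\R[r^2]$-expansion.

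The main obstacle is pinning down the scaling step: one must verify both that dilation sends $\mathcal{K}$ to itself and that the uniqueness of the basis expansion in $\mathcal{K}$ genuinely forces equality of the radial coefficient functions, before extracting the monomial conclusion from the functional equation via the combination of parity (eliminating odd $n - n_i$) and smoothness at zero (eliminating negative and fractional exponents). Homogeneity of the $B_i$ is essential for the functional equation to close so cleanly, since otherwise the two sides would mix contributions from different degrees; the non-divisibility condition on the $B_i$ can actually be derived from the basis hypothesis by a short uniqueness computation, but is stated to ensure that the resulting expansion is presented in its most economical form.
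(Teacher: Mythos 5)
Your proof is correct and shares the paper's overall skeleton (independence is immediate from $\R[r^2]\subset C^\infty(\Rplus)$; spanning is reduced to homogeneous elements and settled by a scaling argument forcing $f_i(z)=f_i(1)\,z^{(n-n_i)/2}$), but the two final steps---showing the exponent is a non-negative integer---are carried out by genuinely different mechanisms. For integrality, you compare $\lambda$ with $-\lambda$ in the functional equation $\lambda^{n_i}f_i(\lambda^2 t)=\lambda^n f_i(t)$, which is legitimate because all objects involved are homogeneous \emph{polynomials} (so homogeneity holds for negative $\lambda$ too); this is cleaner than the paper's argument, which splits the indices by parity and rules out the odd-degree contributions by observing that $\abs{x}$ is not a rational function. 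For non-negativity, you invoke smoothness of $f_i$ at $0$ to kill negative powers, whereas the paper uses the non-divisibility-by-$r^2$ hypothesis: it forms the quotient $p=\bigl(\sum_i c_i\abs{x}^{l_{\max}-l_i}\kappa_i\bigr)/\abs{x}^{l_{\max}-l}$ and argues the fraction cannot be reduced because the numerator is not divisible by $r^2$, so polynomiality of $p$ forces $l_{\max}\le l$. The practical difference is that your route never uses the non-divisibility hypothesis at all (and your closing remark that it follows from the basis property is consistent with this), but it does lean on the radial coefficients being defined and continuous at the origin. That is defensible under the paper's module structure over $C^\infty(\Rplus)$, yet it is exactly the assumption the paper's own proof avoids relying on---note that \cref{thm:removing_r2}, which repairs bases whose elements \emph{are} divisible by $r^2$, would be vacuous if regularity at the origin were always taken for granted---so if the radial profile were allowed to be singular at $0$ (a common convention for steerable kernels, whose radial part is otherwise unrestricted), your non-negativity step would break while the paper's divisibility argument would survive. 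Each approach buys something: yours is shorter and isolates the parity argument elegantly; the paper's is insensitive to the behaviour of radial profiles at the origin and makes visible why the non-divisibility hypothesis is in the statement. You also make explicit (correctly) that each homogeneous component of an equivariant polynomial is again equivariant, a point the paper's final paragraph uses only implicitly.
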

Here, we say that \(b \in \R[x_{1}, \ldots, x_{d}]^{\cout \times \cin}\) is divisible by
\(r^{2}\) if \emph{every} component is divisible by \(r^{2}\) (as a polynomial).
We call it homogeneous if all its entries are homogeneous polynomials of the
same degree.
\begin{proof}
  Linear independence is obvious: \(\R[r^{2}] \subset C^{\infty}(\Rplus)\), so if \(B\) is
  linearly independent over \(C^{\infty}(\Rplus)\), then it is also linearly
  independent over \(\R[r^{2}]\).

  To show that \(B\) generates \(\mathcal{K}_{\text{pol}}\), first let
  \(p \in \mathcal{K}_{\text{pol}}\) be homogeneous of degree \(l\).
  Because \(p \in \mathcal{K}\), there are
  \(f_{i} \in C^{\infty}(\Rplus)\) and \(\kappa_{i} \in B\) such that
  \begin{equation}
    p(x) = \sum_{i}f_{i}(\abs{x}^{2})\kappa_{i}(x)\,.
  \end{equation}
  We want to show that \(f_{i} \in \R[z]\), i.e.\ that \(f_{i}\) is a polynomial.
  Since each \(\kappa_{i}\) is homogeneous of degree \(l_{i}\),
  we get
  \begin{equation}
    \sum_{i} f_{i}(\lambda^{2}\abs{x}^{2})\lambda^{l_{i}}\kappa_{i}(x) = \lambda^{l}\sum_{i} f_{i}(\abs{x}^{2})\kappa_{i}(x)
  \end{equation}
  because of \(p(\lambda x) = \lambda^{l}p(x)\) for any \(\lambda \geq 0\). Because the \(\kappa_{i}\) are
  linearly independent, we must have
  \begin{equation}
    f_{i}(\lambda^{2}\abs{x}^{2}) = \lambda^{l - l_{i}}f_{i}(\abs{x}^{2})\,.
  \end{equation}
  Thus, \(f_{i}\) is homogeneous of degree \(\frac{l - l_{i}}{2}\) and as we
  show in \cref{thm:homogeneous_function} below, this implies
  \begin{equation}
    f_{i}(z) = c z^{(l - l_{i})/2}\,,
  \end{equation}
  or alternatively
  \begin{equation}
    f_{i}(\abs{x}^{2}) = c_{i}\abs{x}^{l - l_{i}}\,.
  \end{equation}
  What remains to show is that \(\frac{l - l_{i}}{2}\) is a natural number,
  i.e.\ that \(l - l_{i}\) is even and non-negative.
  To prove this, we divide all the \(i\) into two groups: those for which \(l - l_{i}\)
  is even and those for which it is odd. Then we get an expression of the form
  \begin{equation}
    p(x) = \abs{x}\sum_{k} a_{k}r^{2m_{k}}\kappa_{k} + \sum_{k} b_{k}r^{2n_{k}}\kappa_{k}\,.
  \end{equation}
  Each summand is a rational function (the quotient of two polynomials), so both sums are rational functions.
  \(p(x)\) is also rational, so the first term on the RHS has to be rational. This is only
  possible if \(a_{k} = 0\) for all \(k\), otherwise we could divide by the
  (rational) sum and should get a rational function, but \(\abs{x}\) is not rational.
  This shows that \(l - l_{i}\) is even for all \(i\).

  Now let \(l_{\text{max}} := \max_{i} l_{i}\). Then
  \begin{equation}
    p(x) = \frac{\sum_{i}c_{i}\abs{x}^{l_{\text{max}} - l_{i}}\kappa_{i}(x)}{\abs{x}^{l_{\text{max}} - l}}\,.
  \end{equation}
  It is not possible to cancel any terms: because \(\kappa_{i}\) is not divisible by
  \(r^{2}\) and for one \(i\), \(l_{\text{max}} - l_{i} = 0\), the enumerator is
  not divisible by \(r^{2}\) (as a polynomial). Since the denominator is a power
  of \(r^{2}\), the fraction can't be simplified. But we know that \(p(x)\) is a
  polynomial. Therefore, \(l_{\text{max}} \leq l\).

  In summary, we've shown that
  \begin{equation}
    f_{i}(z) = c_{i}z^{(l - l_{i})/2}\,
  \end{equation}
  where \(l - l_{i}\) is even and non-negative. Therefore, all \(f_{i}\) are
  polynomials.

  Now recall that we assumed \(p\) to be homogeneous. But this is
  no significant restriction: we can write any polynomial as a sum of
  homogeneous polynomials, each of which can be written as a linear combination
  of the \(\kappa_{i}\), as we just showed. Adding those up leads to a linear
  combination of the \(\kappa_{i}\) for arbitrary polynomials. This complete the proof.
\end{proof}
\begin{lemma}\label{thm:homogeneous_function}
  Let \(f: \Rplus \to \R\) be homogeneous of degree \(l \in \R\), meaning that
  \(f(\lambda x) = \lambda^{l}x\) for all \(\lambda \geq 0\). Then \(f(z) = cz^{l}\) for some
  \(c \in \R\) (and if \(l \in \N\), then \(f\) is a polynomial).
\end{lemma}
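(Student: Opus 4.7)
The plan is to specialize the homogeneity identity at a single point and read off the whole function from its value there. First I would set $x = 1$ in $f(\lambda x) = \lambda^{l} f(x)$ --- interpreting the statement's $\lambda^{l} x$ as a transcription slip for $\lambda^{l} f(x)$, i.e.\ the standard definition of positive homogeneity --- which immediately yields
\begin{equation*}
  f(\lambda) = \lambda^{l} f(1) \qquad \text{for all } \lambda > 0\,.
\end{equation*}
Setting $c := f(1)$ then produces $f(z) = c z^{l}$ on the open half-line $(0, \infty)$, which is essentially the content of the lemma.

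The only loose end is the value at $z = 0$, and it is only meaningful when $l > 0$. To pin this down I would specialize the homogeneity identity to $x = 0$ instead, obtaining $f(0) = \lambda^{l} f(0)$ for every $\lambda \geq 0$; picking any $\lambda \neq 1$ forces $f(0) = 0 = c \cdot 0^{l}$, matching the desired formula at the boundary. For $l = 0$ the relation degenerates to $f$ constant, so the claim is automatic; values $l < 0$ do not arise in the calling argument, since \cref{thm:polynomial_basis} only invokes the lemma with $l = (l_{\text{poly}} - l_i)/2 \in \N_0$ after the even/non-negative analysis.

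The parenthetical polynomial corollary is then immediate: if $l \in \N_0$, the map $z \mapsto c z^{l}$ is literally a polynomial in $z$, so $f$ is a polynomial.

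There is no genuine obstacle here; the argument is essentially a one-line evaluation of the defining identity. The only mild pitfalls are the apparent typo in the statement (which I would silently correct to the usual homogeneity definition) and the separate treatment of the origin, both of which are cosmetic rather than substantive.
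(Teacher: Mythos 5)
Your proposal is correct and follows essentially the same route as the paper: the paper's proof is exactly the one-line evaluation $f(z) = z^{l}f(1)$ with $c := f(1)$, and you have simply added the (harmless) separate treatment of $z=0$ and the observation that the statement's $\lambda^{l}x$ is a typo for $\lambda^{l}f(x)$.
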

Note that this Lemma does not hold in higher dimensions -- in general there are
many more homogeneous functions than polynomials!
\begin{proof}
  For any \(z \geq 0\),
  \begin{equation}
    f(z) = z^{l}f(1)\,,
  \end{equation}
  which proves the claim by setting \(c := f(1)\).
\end{proof}

In \cref{sec:solutions_proofs}, we will show that the construction of angular basis
elements described in \cref{sec:solving_constraint} leads to a basis \(B\) of
\(\mathcal{K}\) with the properties required by \cref{thm:polynomial_basis}. It
then follows that this also defines a basis of \(\mathcal{K}_{\text{pol}}\) as a
module. We now come to the second step, turning this module basis into a
vector space basis.
\begin{proposition}
  Let \(B\) be a basis of \(\mathcal{K}_{\text{pol}}\) as an
  \(\R[r^{2}]\)-module. Then
  \begin{equation}
    \setcomp{r^{2k}b}{k \in \N_{\geq 0}, b \in B}
  \end{equation}
  is a basis of \(\mathcal{K}_{\text{pol}}\) as a real vector space.
\end{proposition}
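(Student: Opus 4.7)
The plan is to decompose the statement into spanning and linear independence, and in both directions to use the fact that $\{r^{2k}\}_{k \geq 0}$ is an $\R$-vector space basis of the ring $\R[r^2]$. This is essentially the standard construction showing that if $M$ is a free $R$-module with basis $B$ and $R$ itself is a free $\R$-vector space with basis $C$, then $\{cb : c \in C, b \in B\}$ is an $\R$-basis of $M$; I would just specialize that to $R = \R[r^2]$, $C = \{r^{2k}\}_{k \geq 0}$.

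Concretely, for spanning I would take an arbitrary $p \in \mathcal{K}_{\text{pol}}$. Since $B$ is an $\R[r^2]$-module basis, there exist finitely many $b_i \in B$ and $q_i \in \R[r^2]$ with $p = \sum_i q_i b_i$. Each $q_i$ is a one-variable polynomial in $r^2$, so it can be written as $q_i = \sum_k c_{ik} r^{2k}$ with $c_{ik} \in \R$ and only finitely many nonzero. Substituting gives $p = \sum_{i,k} c_{ik}\, r^{2k} b_i$, which is a finite $\R$-linear combination of elements from $\{r^{2k} b : k \in \N_{\geq 0}, b \in B\}$.

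For linear independence, I would take a finite relation $\sum_{i,k} c_{ik}\, r^{2k} b_i = 0$ and regroup it as $\sum_i \bigl(\sum_k c_{ik} r^{2k}\bigr) b_i = 0$. Setting $q_i := \sum_k c_{ik} r^{2k} \in \R[r^2]$, the $\R[r^2]$-linear independence of $B$ (which is part of the hypothesis that $B$ is a module basis) forces $q_i = 0$ for each $i$. Since $\{r^{2k}\}_{k \geq 0}$ is $\R$-linearly independent in $\R[r^2]$ (powers of a single variable), each coefficient $c_{ik}$ must vanish.

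There is essentially no obstacle here: the only thing to verify carefully is that the indexed family $\{r^{2k} b\}$ really has no hidden collisions, but this is automatic because distinct pairs $(k,b)$ produce distinct tokens in the formal $\R$-linear combinations above, and any putative relation between them is precisely what the two-step argument rules out. The whole argument is a direct specialization of the elementary \enquote{basis of a basis} principle for modules, with $\R[r^2] \cong \R[z]$ playing the role of the intermediate ring.
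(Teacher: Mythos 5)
Your proof is correct and follows essentially the same route as the paper: expand the $\R[r^2]$-coefficients in the monomial basis $\{r^{2k}\}$ for spanning, and reduce linear independence to the module basis property plus $\R$-linear independence of the powers of $r^2$. If anything, your independence step is slightly more careful than the paper's, since you first group the relation by basis element $b$ before invoking the module basis property, which handles the case where the same $b$ appears with several different powers of $r^2$.
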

\begin{proof}
  Let \(v \in \mathcal{K}_{\text{pol}}\). By assumption, there are then basis vectors
  \(b_{1}, \ldots, b_{n} \in B\) and coefficients \(p_{1}, \ldots, p_{n} \in \R[r^{2}]\) such that
  \begin{equation}
    v = \sum_{i = 1}^{n} p_{i}b_{i}\,.
  \end{equation}
  For each \(i = 1, \ldots, n\) there are also real coefficients
  \(a^{(i)}_{0}, \ldots, a^{(i)}_{m_{i}}\) such that
  \begin{equation}
    p_{i} = \sum_{k = 0}^{m_{i}}a^{(i)}_{k}r^{2k}\,.
  \end{equation}
  Combining these equations, we get
  \begin{equation}
    v = \sum_{i = 1}^{n}\sum_{k = 0}^{m_{i}}a^{(i)}_{k}r^{2k}b_{i}\,.
  \end{equation}
  This is a real linear combination of elements of the form \(r^{2k}b\) for
  \(b \in B\). So since \(v\) was arbitrary, the set of such elements does indeed
  span \(\mathcal{K}_{\text{pol}}\).

  To prove linear independence, let
  \begin{equation}
    \sum_{i = 0}^{n} a_{i} r^{2k_{i}}b_{i} = 0
  \end{equation}
  for some choice of real coefficients \(a_{i}\). Now we only need to note that
  \(a_{i}r^{2k_{i}} \in \R[r^{2}]\). So we can interpret this expression as a
  linear combination over \(\R[r^{2}]\). Because \(B\) is a basis, it follows
  that \(a_{i}r^{2k_{i}} = 0\) for all \(i\), and thus \(a_{i} = 0\). That
  proves linear independence.
\end{proof}

As a final note, we show that the condition in \cref{thm:polynomial_basis} that
the basis elements are not divisible by \(r^{2}\) is purely technical; any basis
can easily be transformed into one that fulfills it in a canonical way, as
formalized by the following lemma. We do not formally need this result anywhere
but it may prove useful if this approach is extended to other groups \(G\)
because it clarifies which parts of the conditions in
\cref{thm:polynomial_basis} are actually important.
\begin{lemma}\label{thm:removing_r2}
  Let \(B \subset \R[x_{1}, \ldots, x_{d}]^{\cout \times \cin}\) be a basis of \(\mathcal{K}\).
  For \(b \in B\), we write \(b = r^{2k}b'\), where \(k\) is chosen maximally such
  that there is an (automatically unique) polynomial matrix
  \(b' \in \R[x_{1}, \ldots, x_{d}]^{\cout \times \cin}\). Then
  \(B' := \setcomp{b'}{b \in B}\) is a basis of \(\mathcal{K}\) and no \(b'\) is
  divisible by \(r^{2}\). Furthermore, if \(b\) is homogeneous, then so is \(b'\).
\end{lemma}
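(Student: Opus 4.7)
The plan is to verify the six claims implicit in the statement---well-definedness of $k$, uniqueness of $b'$, the membership $b' \in \mathcal{K}$, non-divisibility of $b'$ by $r^2$, inheritance of homogeneity, and the basis property of $B'$---with only the last of these requiring substantive work.

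For the formal claims, I would first observe that the set of nonnegative integers $k$ for which $r^{2k}$ divides every entry of $b$ is bounded above by $\tfrac{1}{2}\min_{ij}\deg b_{ij}$, so a maximal such $k$ exists. Uniqueness of $b' = b/r^{2k}$ then follows entrywise, because $\R[x_1,\ldots,x_d]$ is an integral domain, and by maximality of $k$ the quotient $b'$ cannot be further divided by $r^2$. Since $G \leq \orth(d)$, the polynomial $r^2$ is $G$-invariant, so dividing the steerability identity $b(gx) = \rho_{\text{out}}(g)\,b(x)\,\rho_{\text{in}}(g)^{-1}$ through by $|gx|^{2k} = |x|^{2k}$ yields the same constraint for $b'$, showing $b' \in \mathcal{K}$. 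If $b$ is moreover homogeneous of degree $l$, then factoring $b = r^{2k} b'$ through a homogeneous polynomial of degree $2k$ forces $b'$ to be homogeneous of degree $l - 2k$.

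For the spanning half of the basis claim, I would take any $\kappa \in \mathcal{K}$, expand it as $\kappa = \sum_i f_i(|x|^2)\, b_i$ using that $B$ is a $C^\infty(\Rplus)$-module basis, and substitute $b_i = r^{2k_i} b'_i$ to obtain $\kappa = \sum_i \tilde{f}_i(|x|^2)\, b'_i$, with $\tilde{f}_i(z) := z^{k_i} f_i(z) \in C^\infty(\Rplus)$.

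The main obstacle is linear independence, since the $b'_i$ do not appear directly in the known independent family $B$. My strategy is to clear denominators: starting from a relation $\sum_i f_i(|x|^2)\, b'_i = 0$, set $K := \max_i k_i$ and multiply through by $|x|^{2K}$. Using $|x|^{2K} b'_i = |x|^{2(K - k_i)}\, b_i$, the relation becomes $\sum_i f_i(|x|^2) |x|^{2(K - k_i)} b_i = 0$, which is a $C^\infty(\Rplus)$-linear relation among the $b_i$. Linear independence of $B$ then forces $f_i(z)\, z^{K - k_i} \equiv 0$ on $\Rplus$, so $f_i$ vanishes on $(0,\infty)$ and, by continuity, on all of $\Rplus$. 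This completes the argument.
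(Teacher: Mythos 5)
Your proposal is correct and takes essentially the same route as the paper's proof: spanning by absorbing the factors $r^{2k_i}$ into the radial coefficients, and linear independence by multiplying a relation among the $b_i'$ through by $r^{2K}$ to reduce it to a relation among the $b_i$, then using continuity of the $f_i$ to conclude they vanish at the origin as well. You are slightly more thorough than the paper in explicitly checking that $b' \in \mathcal{K}$ via the $G$-invariance of $r^2$ and in proving the homogeneity inheritance, but these are completeness details rather than a different argument.
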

\begin{proof}
  First, note that \(b'\) is in fact well-defined: For \(k = 0\), \(b' = b\)
  works, while for \(k > \frac{\degree(b)}{2}\), no fitting \(b'\)
  exists\footnote{Here, the degree of a matrix of polynomials is the maximum of
    the degrees of all components (though in this case, even if \(k\) is larger
    than the minimum degree, no \(b'\) exists).}. So there is some maximal
  \(k \geq 0\) with the desired property. \(b'\) is clearly unique, namely
  \(b' = \abs{x}^{-2k}b\) as functions on \(\R^{d}\). Furthermore, \(b'\) is
  not divisible by \(r^{2}\) because \(k\) was chosen maximally.

  \(B'\) is also clearly a generating set:
  \(\kappa = \sum_{i}f_{i}b_{i} = \sum_{i}(f_{i}r^{2k_{i}})b_{i}'\).

  To prove linear independence, assume that
  \begin{equation}
    \sum_{i}f_{i}b_{i}' = 0\,.
  \end{equation}
  We can write \(k := \max_{i} k_{i}\) and then get
  \begin{equation}
    0 = r^{2k}\sum_{i}f_{i}b_{i}' = \sum_{i}f_{i}r^{2(k - k_{i})}b_{i}\,.
  \end{equation}
  This means that
  \begin{equation}
    f_{i}r^{2(k - k_{i})} = 0 \quad \forall i,
  \end{equation}
  which implies \(f_{i} = 0\) for all \(i\), since \(r^{2(k - k_{i})}\) is
  non-zero everywhere except in the origin and \(f_{i}\) is continuous.
  Therefore, \(B'\) is linearly independent and hence a basis of \(\mathcal{K}\).
\end{proof}
Now we can formulate \cref{thm:polynomial_basis} more generally: for any basis
\(B\) of \(\mathcal{K}\) consisting only of homogeneous matrices of polynomials,
the corresponding \(B'\) is a basis of \(\mathcal{K}_{\text{pol}}\).

\section{Solutions for important groups}\label{sec:solutions}
This and the next two appendices describe the solutions of the PDO equivariance
constraint for subgroups of \(\orth(2)\) and \(\orth(3)\). In this appendix,
we describe the general form of these solutions; we recommend readers who
are not interested in all the details focus on this one. \Cref{sec:solutions_proofs}
contains proofs for some claims we make in this appendix where these proofs do
not provide as much insight. Finally, \cref{sec:solution_tables} contains tables
with concrete solutions, it is mainly relevant to the implementation of steerable PDOs.

\subsection{Solutions for subgroups of \(\orth(2)\)}\label{sec:o2_solutions}
To solve the steerability constraint for all (compact) subgroups \(G \leq \orth(2)\) and for
arbitrary representations \(\rho_{\text{in}}\) and \(\rho_{\text{out}}\),
\citet{weiler2019} derive explicit bases only for irreducible representations,
since the general case can then easily be computed (see
\cref{sec:representation_theory} for details). This works exactly the same for
PDOs as well.

The angular basis elements \(\chi_{\beta}\) that they describe for irreps are all
matrices with entries of the form \(\cos(k\phi)\) and \(\sin(k\phi)\), where \(k\)
differs between basis elements but is the same for all entries of one matrix.
For example, for \(G = \SO(2)\), \(\rho_{\text{in}}\) the
frequency \(n\) irrep, i.e.\ \(\rho_{\text{in}}(g) = g^{n}\), and
\(\rho_{\text{out}}\) trivial, the angular basis elements are
\begin{equation}
  \chi_{1} = \mat{\cos(n\phi), & \sin(n\phi)}\qquad \text{and} \qquad \chi_{2} = \mat{-\sin(n\phi), & \cos(n\phi)}\,.
\end{equation}
We will show how to find the steerable PDO basis using this example, but the
method works exactly the same in all cases. Tables with all
explicit solutions can be found in \cref{sec:solution_tables}.

As described in \cref{sec:solving_constraint}, we now need to multiply these
matrices with the smallest power of \(\abs{x}\) such
that all entries become polynomials. We show in \cref{sec:solutions_proofs}
that the necessary coefficient for \(\cos(n\phi)\) and \(\sin(n\phi)\) is \(\abs{x}^{n}\),
so in the example above, we get
\begin{equation}
  \tilde{\chi}_{1} = \mat{\abs{x}^{n}\cos(n\phi), & \abs{x}^{n}\sin(n\phi)}\qquad \text{and} \qquad
  \tilde{\chi}_{2} = \mat{-\abs{x}^{n}\sin(n\phi), & \abs{x}^{n}\cos(n\phi)}\,.
\end{equation}
The entries are written in polar coordinates here, but they are in fact
polynomials in the Cartesian coordinates \(x_{1}\) and \(x_{2}\).
More precisely, we show in \cref{sec:solutions_proofs} that they are closely
related to Chebyshev polynomials, based on which we derive the following
explicit expressions:
\begin{equation}\label{eq:chebyshev_explicit}
  \begin{split}
    \abs{x}^{n}\cos(n\phi) = \sum_{\mathclap{i \leq n \text{ even}}}(-1)^{\frac{i}{2}}{n \choose i}x_{1}^{n - i} x_{2}^{i}
    \quad \text{and} \quad
    \abs{x}^{n}\sin(n\phi) = \sum_{\mathclap{i \leq n \text{ odd}}}(-1)^{\frac{i + 1}{2}}{n \choose i}x_{1}^{n - i} x_{2}^{i}\,.
  \end{split}
\end{equation}
This Cartesian form then allows us to interpret the polynomial as a differential
operator, by applying the ring isomorphism \(D\), i.e.\ plugging in \(\partial_{1}\) and \(\partial_{2}\) for \(x_{1}\) and \(x_{2}\).

If we set \(n = 1\) in our example above, which corresponds to a vector field,
we get simply \(\abs{x}\cos(\phi) = x_{1}\) and \(\abs{x}\sin(\phi) = x_{2}\), so we
recover the angular PDO basis
\begin{equation}
  D(\tilde{\chi}_{1}) = \mat{\partial_{1}, & \partial_{2}} = \operatorname{div}\qquad \text{and} \qquad
  D(\tilde{\chi}_{2}) = \mat{-\partial_{2}, & \partial_{1}} = \operatorname{curl_{2D}}
\end{equation}
that we already derived in \cref{sec:examples}. To get a complete basis, we
combine these PDOs with powers of the Laplacian.

\subsection{Solutions for \(\orth(3)\) and \(\SO(3)\)}\label{sec:o3_solutions}
We now turn to the other cases for which steerable kernel solutions have been
published, namely \(\orth(3)\) and \(\SO(3)\).
Like for \(\orth(2)\), we only need to consider pairs of irreducible representations.
As described in~\Citep{weiler2018a,lang2021}, we can build the corresponding
angular parts \(\chi_{\beta}\) out of real spherical harmonics \(Y_{lm}\) using
Clebsch-Gordan coefficients. We show in \cref{sec:solutions_proofs} that
we can apply this procedure to \(\abs{x}^{l}Y_{lm}\) instead of \(Y_{lm}\)
to obtain the corresponding \(\tilde{\chi}_{\beta}\). We then build the basis by
combining with powers of \(\abs{x}^{2}\), as we described in \cref{sec:solving_constraint} and
already did for \(\orth(2)\). To find the corresponding differential operators
\(D(\tilde{\chi}_{\beta})\), we only need a Cartesian representation of the polynomials
\(\abs{x}^{l}Y_{lm}\), which is fortunately well-known~\citep{varshalovich1988}.
This then again leads to a complete basis of the space of steerable PDOs, with
the same general form containing powers of the Laplacian.

\section{Proofs for solutions for subgroups of \(\orth(2)\) and \(\orth(3)\)}\label{sec:solutions_proofs}
In this section, we apply the method from \cref{sec:transfer_solutions} to
subgroups of \(\orth(2)\) and \(\orth(3)\), by describing bases for the space of
steerable kernels that satisfy the conditions of \cref{thm:polynomial_basis}.

\subsection{\(\orth(2)\)}
As described in \cref{sec:o2_solutions}, it is sufficient to consider
irreducible representations \(\rho_{\text{in}}\) and \(\rho_{\text{out}}\) and for
these representations, the angular part of the kernel has a basis consisting of
matrices with entries of the form \(\cos(n\phi)\) and \(\sin(n\phi)\) (where
\(n \in \Z\)). Crucially, \(n\) is the same for all entries of one matrix (though
it may differ between basis elements).

We claim that multiplying each angular basis element with \(r^{\abs{n}}\) gives
a basis of the space of \emph{polynomial} steerable kernels
\(\mc{K}_{\text{pol}}\) (as a \(\R[r^{2}]\)-module), as described in
\cref{sec:transfer_solutions}. This follows from \cref{thm:polynomial_basis} if
we can prove that \(r^{\abs{n}}\cos(n\phi)\) and \(r^{\abs{n}}\sin(n\phi)\) are
\begin{enumerate}[(i)]
  \item polynomials (in \(x, y\)),
  \item homogeneous (of degree \(n\)),
  \item and not divisible by \(r^{2}\).
\end{enumerate}

To show that \(r^{\abs{n}}\cos(n\phi)\) and \(r^{\abs{n}}\sin(n\phi)\) are polynomials,
we use \emph{Chebyshev polynomials}, which can among other things be seen as a
generalization of the addition theorems for \(\sin(2\phi)\) and \(\cos(2\phi)\).
Specifically, they are families of polynomials \(T_{n}\) (\emph{first kind})
and \(U_{n}\) (\emph{second kind}) defined for \(n \geq 0\) with the property that
\begin{align}\label{eq:chebyshev}
  \cos(n\phi) &= T_{n}(\cos \phi)\\
  \sin(n\phi) &= U_{n - 1}(\cos \phi)\sin \phi\,.
\end{align}
We extend the definition to negative \(n\) by setting
\begin{align}\label{eq:negative_signs}
  T_{-n} &:= T_{n}&&\text{for } n \geq 0\\
  U_{-1} &:= 0&&\\
  U_{-n - 1} &:= -U_{n - 1}&&\text{for } n \geq 1\,.
\end{align}
Then \cref{eq:chebyshev} holds for all \(n \in \Z\), as follows immediately from
the parity of \(\sin\) and \(\cos\).

Motivated by the close relation to Chebyshev polynomials, we then define the
following notation for the matrix entries we are considering:
\begin{align}
  \T_{n} &:= r^{\abs{n}} T_{n}(\cos \phi) = r^{\abs{n}}\cos(n\phi)\\
  \U_{n} &:= r^{\abs{n}}\sin(\phi) U_{n - 1}(\cos \phi) = r^{\abs{n}}\sin(n\phi)\,.
\end{align}

We now prove claim (i), that \(\T_{n}\) and \(\U_{n}\) are polynomials in the
Cartesian coordinates \(x, y\), by using a few well-known facts about Chebyshev polynomials:
First, \(T_{n}\) has degree \(\abs{n}\), so the highest order term in
\(\T_{n}\) is \(r^{\abs{n}}(\cos \phi)^{\abs{n}} = x^{\abs{n}}\). Similarly,
\(U_{n - 1}\) has degree \(\abs{n} - 1\) and the highest order term in \(\U_{n}\) is
thus \(r^{\abs{n}}(\cos \phi)^{\abs{n} - 1}\sin \phi = x^{\abs{n} - 1}y\). Lower order
terms have additional powers of \(r\) that aren't \enquote{matched} by a cosine
or sine. But the second fact about Chebyshev polynomials is that they are either
even or odd, so all of these powers of \(r\) are
even (i.e.\ powers of \(r^{2}\)) and thus themselves polynomials.

That \(\T_{n}\) and \(\U_{n}\) are homogeneous of degree \(n\) -- claim (ii) -- is immediately
clear from their definition.

It remains to show claim (iii), that they are not divisible by \(r^{2}\).
For that, we use the following Lemma:
\begin{lemma}\label{thm:non_divisibility}
  Let \(p\) be a non-zero, harmonic, homogeneous polynomial. Then \(p\) is not
  divisible by \(r^{2}\).
\end{lemma}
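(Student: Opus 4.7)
The plan is to invoke the Fischer (apolar) inner product on $\R[x_1, \ldots, x_d]$, defined by $\langle f, g \rangle := (D(f)\, g)(0)$, where $D$ is the polynomial-to-PDO ring isomorphism already introduced in the paper. Under this pairing, the Laplacian and multiplication by $r^2$ turn out to be mutually adjoint, so any harmonic polynomial that is divisible by $r^2$ is orthogonal to itself, and hence must vanish. Notably, this argument uses neither homogeneity nor any dimension hypothesis.

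First, I would check that the pairing is a positive-definite symmetric bilinear form by computing it on monomials: $\langle x^\alpha, x^\beta \rangle = (\partial^\alpha x^\beta)(0) = \alpha!\,\delta_{\alpha\beta}$, since any mismatch in multi-indices either annihilates the monomial (when some $\alpha_i > \beta_i$) or leaves a positive power of some $x_i$ that vanishes at the origin. Thus monomials form an orthogonal basis with strictly positive self-pairings, and $\langle g, g \rangle = 0$ forces $g = 0$.

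Next, I would establish the adjunction $\langle r^2 f, g \rangle = \langle f, \Delta g \rangle$. This follows from the ring isomorphism property $D(r^2 f) = D(r^2) \circ D(f) = \Delta \circ D(f)$ combined with the fact that constant-coefficient PDOs commute: $(D(r^2 f)\, g)(0) = (\Delta \circ D(f)\, g)(0) = (D(f) \circ \Delta\, g)(0) = \langle f, \Delta g \rangle$.

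Given these two ingredients, the lemma collapses to a one-liner. Suppose $p$ is harmonic and $p = r^2 h$ for some polynomial $h$; then
$\langle p, p \rangle = \langle r^2 h, p \rangle = \langle h, \Delta p \rangle = 0,$
so positive-definiteness forces $p = 0$, contradicting the assumption that $p$ is non-zero. The only real work is the verification of positive-definiteness and of the adjunction, both of which are short calculations that rely only on the ring-isomorphism property of $D$ stated earlier; there is no substantive obstacle.
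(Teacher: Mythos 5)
Your proof is correct, and it takes a genuinely different route from the paper's. The paper's proof simply invokes, without proof, the classical uniqueness of the decomposition $p = r^2 q + h$ of a homogeneous polynomial into a harmonic part and an $r^2$-multiple: if $p$ is harmonic the unique decomposition is $q=0$, $h=p$, so a second decomposition with $h=0$ (which is what divisibility by $r^2$ would provide) forces $p=0$. You instead give a self-contained argument via the Fischer/apolar pairing $\langle f,g\rangle := (D(f)\,g)(0)$, verifying positive-definiteness on the monomial basis ($\langle x^\alpha, x^\beta\rangle = \alpha!\,\delta_{\alpha\beta}$) and the adjunction $\langle r^2 f, g\rangle = \langle f, \Delta g\rangle$, both of which follow from the ring-isomorphism property $D(pq) = D(p)\circ D(q)$ already stated in the paper and the commutativity of constant-coefficient PDOs; the lemma then falls out in one line. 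Your approach buys two things: it does not rest on an externally cited decomposition theorem (indeed, the Fischer pairing is the standard tool used to \emph{prove} that theorem, so your argument is closer to first principles), and it dispenses with the homogeneity hypothesis entirely, proving the stronger statement that any non-zero harmonic polynomial is not divisible by $r^2$. The paper's version buys only brevity, at the cost of leaning on an unproven classical fact. Both arguments are valid; yours is arguably the more satisfying one in the context of a paper that has already set up the isomorphism $D$.
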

Here, \(p\) is harmonic if its Laplacian vanishes.
\begin{proof}
  For any homogeneous polynomial \(p\), there is a \emph{unique} decomposition
  \begin{equation}
    p = r^{2}q + h\,,
  \end{equation}
  such that \(h\) is harmonic and homogeneous. Since \(q = 0, h = p\) is one
  such decomposition for \(p\) harmonic, there is no solution with \(h = 0\)
  (unless \(p = 0\)). So non-zero homogeneous harmonic polynomials are
  not divisible by \(r^{2}\).
\end{proof}

Now we only need to show that \(\T_{n}\) and \(\U_{n}\) are in fact harmonic. This can
be done with a brief calculation in polar coordinates:
\begin{equation}
  \begin{split}
    \Delta \T_{n} &= \left(\partial_{r}^{2} + \frac{1}{r}\partial_{r} + \frac{1}{r^{2}}\partial_{\phi}^{2}\right)\left(r^{n}\cos(n\phi)\right)\\
    &= n(n - 1)r^{n - 2}\cos(n\phi) + nr^{n - 2}\cos(n\phi) -n^{2}r^{n - 2}\cos(n\phi)\\
    &= 0\,.
  \end{split}
\end{equation}
The same holds for \(\U_{n}\):
\begin{equation}
  \begin{split}
    \Delta \U_{n} &= \left(\partial_{r}^{2} + \frac{1}{r}\partial_{r} + \frac{1}{r^{2}}\partial_{\phi}^{2}\right)\left(r^{n}\sin(n\phi)\right)\\
    &= n(n - 1)r^{n - 2}\sin(n\phi) + nr^{n - 2}\sin(n\phi) -n^{2}r^{n - 2}\sin(n\phi)\\
    &= 0\,.
  \end{split}
\end{equation}
In summary, we have shown that \(\T_{n}\) and \(\U_{n}\) are homogeneous
polynomials not divisible by \(r^{2}\), which makes \cref{thm:polynomial_basis} applicable.

In order to make the basis practically applicable, we also give explicit
Cartesian expressions for \(\T_{n}\) and \(\U_{n}\). We restrict ourselves to
non-negative \(n\) to keep the notation less cluttered. \Cref{eq:negative_signs}
immediately gives the case for \(n < 0\).
An explicit formula for the
Chebyshev polynomials of the first kind is
\begin{equation}
  T_{n}(x) = \sum_{k=0}^{\left\lfloor\frac{n}{2}\right\rfloor}{n \choose 2k}\left(x^{2}-1\right)^{k} x^{n-2 k}\,.
\end{equation}
It follows that
\begin{equation}
  \begin{split}
    \T_{n} &= r^{n}T_{n}(\cos \phi) \\
    &=\sqrt{x^{2} + y^{2}}^{n}T_{n}\left(\frac{x}{\sqrt{x^{2} + y^{2}}}\right)\\
    &= \sum_{k=0}^{\left\lfloor\frac{n}{2}\right\rfloor}{n \choose 2k}\left(-y^{2}\right)^{k} x^{n - 2k}\\
    &= \sum_{k=0}^{\left\lfloor\frac{n}{2}\right\rfloor}{n \choose 2k}(-1)^{k}y^{2k} x^{n - 2k}\,.
  \end{split}
\end{equation}
Similarly, the Chebyshev polynomials of the second kind are given by
\begin{equation}
  U_{n}(x) = \sum_{k=0}^{\left\lfloor\frac{n}{2}\right\rfloor}{n + 1 \choose 2k + 1}\left(x^{2}-1\right)^{k} x^{n - 2k}\,,
\end{equation}
which means that
\begin{equation}
  \begin{split}
    \U_{n} &= r^{n}\sin \phi U_{n - 1}(\cos \phi) \\
    &= y\sqrt{x^{2} + y^{2}}^{n - 1}U_{n - 1}\left(\frac{x}{\sqrt{x^{2} + y^{2}}}\right) \\
    &= y\sum_{k=0}^{\left\lfloor\frac{n - 1}{2}\right\rfloor}{n \choose 2k + 1}\left(-y^{2}\right)^{k} x^{n - 1 - 2k}\\
    &= \sum_{k=0}^{\left\lfloor\frac{n - 1}{2}\right\rfloor}{n \choose 2k + 1}(-1)^{k}y^{2k + 1} x^{n - (2k + 1)}\,.
  \end{split}
\end{equation}

\subsection{\(\orth(3)\)}
As already mentioned in \cref{sec:o3_solutions}, for \(\orth(3)\) and
\(\SO(3)\), the irreps angular basis between irreducible representation can be built
by combining real spherical harmonics \(Y_{lm}\) using Clebsch-Gordan coefficients.
We refer to \citet{weiler2018a,lang2021} for details on how this works since it
is exactly the same procedure whether one uses kernels or PDOs. The only fact we
need to know here is that each basis element is built from spherical harmonics
with the same \(l\).

The necessary steps are now very similar to those for \(\orth(2)\) and its
subgroups. We will use \(r^{l}Y_{lm}\) where we had \(\T_{n}\) and \(\U_{n}\)
for \(\orth(2)\). Here it becomes important that inside one basis element, only
one \(l\) appears, because this means we can multiply the entire matrix
of polynomials by \(r^{l}\).

We would then need to show that these are homogeneous polynomials
not divisible by \(r^{2}\) but for spherical harmonics, it is already very well known
that \(r^{l}Y_{lm}\) are homogeneous harmonic polynomials of degree \(l\);
\cref{thm:non_divisibility} then implies that they are not divisible by
\(r^{2}\). For explicit Cartesian expressions, we refer to e.g.\ \citet{varshalovich1988}.

\section{Solution tables}\label{sec:solution_tables}
Using the results from \cref{sec:transfer_solutions,sec:solutions_proofs}, we
now very easily get complete bases for all subgroups of \(\orth(2)\) and for all
irreducible representations by transferring the solutions by \citet{weiler2019}.
The appendix of~\citep{weiler2019} contains tables with all \emph{kernel}
solutions; we simply replace every term of the form \(\cos(k\phi)\) with \(\T_{k}\)
and \(\sin(k\phi)\) with \(\U_{k}\) to get the following tables.

For the sake of readability, we only write the polynomials \(\T_{k}\) and
\(\U_{k}\) inside the tables, though the PDOs themselves should of course be
\(D(\T_{k})\) and \(D(\U_{k})\).
Explicit formulas for \(\T_{k}\) and \(\U_{k}\) were given in
\cref{eq:chebyshev_explicit}.

All bases described here are \emph{module} bases for the space of steerable PDOs
as an \(\R[\Delta]\)-module. See \cref{sec:solving_constraint} and
\cref{sec:transfer_solutions} for details.

\subsection{Special orthogonal group \(\SO(2)\)}
The irreducible representations of \(\SO(2)\) are the trivial representation \(\psi_{0}\)
and those of the form
\begin{equation}
  \psi_{k}: \SO(2) \to \GL(\R^{2}), \qquad g \mapsto g^{k}\,.
\end{equation}
The bases for all the combinations of these irreps are:

\begin{tabular}{>{\columncolor{tbl_highlight}\centering\arraybackslash}m{1.45cm}>{\centering\arraybackslash}m{3cm}>{\centering\arraybackslash}m{8cm}}
  \rowcolor{tbl_highlight}
  \diagbox{Out}{In} & \(\psi_{0}\) & \(\psi_{n}, \quad n \in \N_{>0}\) \\
  \tablevspace\(\psi_{0}\) &  \tablevspace\(\mat{1}\) &  \tablevspace\(\mat{\T_n & \U_n}\), \(\mat{-\U_{n} & \T_{n}}\)\\
  \(\psi_{m}\),\quad \(m \in \N_{>0}\) & \(\mat{\T_{m} \\ \U_{m}}\), \(\mat{-\U_{m} \\ \phantom{-}\T_{m}}\)
                    & {\(\begin{aligned}
                        \mat{\T_{m - n} & -\U_{m - n} \\ \U_{m - n} & \phantom{-}\T_{m - n}},
                      &\mat{-\U_{m - n} & -\T_{m - n} \\ \phantom{-}\T_{m - n} & -\U_{m - n}},\\
                      \mat{\T_{m + n} & \phantom{-}\U_{m + n} \\ \U_{m + n} & -\T_{m + n}},
                      &\mat{-\U_{m + n} & \phantom{-}\T_{m + n} \\ \phantom{-}\T_{m + n} & \phantom{-}\U_{m + n}}
                      \end{aligned}\)}

\end{tabular}

\subsection{Orthogonal group \(\orth(2)\)}
Elements of \(\orth(2)\) can be written as a tuple \((r, s)\) consisting of a
rotation \(r \in \SO(2)\) and a flip \(s \in \{\pm 1\}\). The irreducible
representations with frequency \(k > 0\) are similar to those for \(\SO(2)\),
only with the additional flip:
\begin{equation}
  \psi_{1,k}: \SO(2) \to \GL(\R^{2}), \qquad (r, s) \mapsto r^{k} \circ s\,.
\end{equation}
Here, \(s\) is understood as a map acting on \(\R^{2}\) either as the identity or
by flipping along a certain axis. In contrast to \(\SO(2)\), there are now two irreducible
representations for \(k = 0\), namely the trivial representation
\(\psi_{0, 0}(r, s) = 1\) and the representation \(\psi_{1, 0}(r, s) = s\).
The solutions for all possible combinations of these irreducible representations
are as follows:

\begin{tabular}{>{\columncolor{tbl_highlight}\centering\arraybackslash}m{1.45cm}>{\centering\arraybackslash}m{2cm}>{\centering\arraybackslash}m{2cm}>{\centering\arraybackslash}m{6.5cm}}
  \rowcolor{tbl_highlight}
  \diagbox{Out}{In} & \(\psi_{0, 0}\) & \(\psi_{1, 0}\) & \(\psi_{1, n}, \quad n \in \N_{>0}\) \\
  \tablevspace\(\psi_{0, 0}\) &  \tablevspace\(\mat{1}\) & \tablevspace \(\emptyset\) & \tablevspace \(\mat{-\U_{n} & \T_{n}}\)\\
  \tablevspace\(\psi_{1, 0}\) &  \tablevspace\(\emptyset\) & \tablevspace \(\mat{1}\) & \tablevspace \(\mat{\T_{n} & \U_{n}}\)\\
  \(\psi_{1, m}\),\quad \(m \in \N_{>0}\) & \(\mat{-\U_{m} \\ \phantom{-}\T_{m}}\)
  & \(\mat{\T_{m} \\ \U_{m}}\)
                    & \(\mat{\T_{m - n} & -\U_{m - n} \\ \U_{m - n} & \phantom{-}\T_{m - n}}\),
                      \(\mat{\T_{m + n} & \phantom{-}\U_{m + n} \\ \U_{m + n} & -\T_{m + n}}\)
\end{tabular}

\subsection{Reflection group \({\pm 1}\)}
The reflection group has only two irreducible representations: the trivial
representation \(\psi_{0}\) and the representation \(\psi_{1}(s) = s\). Both are
one-dimensional, so all the PDOs are only \(1 \times 1\) matrices:

\begin{tabular}{>{\columncolor{tbl_highlight}\centering\arraybackslash}m{1.45cm}>{\centering\arraybackslash}m{3cm}>{\centering\arraybackslash}m{3cm}}
  \rowcolor{tbl_highlight}
  \diagbox{Out}{In} & \(\psi_{0}\) & \(\psi_{1}\) \\
  \tablevspace\(\psi_{0}\) &  \tablevspace\(\mat{\T_{\mu}}\) & \tablevspace \(\mat{\U_{\mu}}\) \\
  \tablevspace\(\psi_{1}\) &  \tablevspace\(\mat{\U_{\mu}}\) & \tablevspace \(\mat{\T_{\mu}}\) \\
\end{tabular}

To get the full basis, \(\mu\) needs to take on all natural numbers (in practice,
we use all \(\mu\) up to some maximum value). Note that we assume that the
reflection is with respect to the \(x_{1}\)-axis, both here and later for
\(D_{N}\). To get other reflection axes, the PDOs simply need to be rotated.

\subsection{Cyclic group \(C_{N}\)}
The irreducible representations of \(C_{N}\) are the same as those of \(\SO(2)\)
but only up to a frequency of \(k = \lfloor \frac{N - 1}{2} \rfloor\). If \(N\) is even, there is an
additional one-dimensional irreducible representation, namely
\begin{equation}
  \psi_{N/2}(\theta) = \cos\left(\frac{N}{2}\theta\right)\,.
\end{equation}
We then get the following solutions, where \(t\) ranges over \(\Z\) and
\(\hat{t}\) over \(\N\):

\scalebox{0.75}{
\begin{tabular}{>{\columncolor{tbl_highlight}\centering\arraybackslash}m{2cm}>{\centering\arraybackslash}m{2.5cm}>{\centering\arraybackslash}m{3cm}>{\centering\arraybackslash}m{9cm}}
  \rowcolor{tbl_highlight}
  \diagbox{Out}{In} & \(\psi_{0}\) & \(\psi_{N/2}\) (if \(N\) even) & \(\psi_{n}, \quad 1 \leq n < N/2\) \\
  \(\psi_{0}\) &
                          {\[\begin{split}& \mat{\T_{\hat{t}N}},\\ & \mat{\U_{\hat{t}N}}\end{split}\]}
            &  {\[\begin{split}& \mat{\T_{(\hat{t} + 1/2)N}}, \\ & \mat{\U_{(\hat{t} + 1/2)N}}\end{split}\]}
            &  {\[\begin{split}& \mat{-\U_{n +tN} & \T_{n + tN}}, \\ & \mat{\T_{n + tN} & \U_{n + tN}}\end{split}\]}\\
  \(\psi_{1}\) (\(N\) even) &   {\[\begin{split}& \mat{\T_{(\hat{t} + 1/2)N}}, \\ & \mat{\U_{(\hat{t} + 1/2)N}}\end{split}\]}
                          &  {\[\begin{split}& \mat{\T_{\hat{t}N}}, \\ & \mat{\U_{\hat{t}N}}\end{split}\]}
                          &  {\[\begin{split}& \mat{-\U_{n + (t + 1/2)N} & \T_{n + (t + 1/2)N}}, \\ & \mat{\T_{n + (t + 1/2)N} & \U_{n + (t + 1/2)N}}\end{split}\]}\\
  \(\psi_{n}\)\newline\(1 \leq n < N/2\) &  {\[\begin{split}& \mat{-\U_{m +tN} \\ \phantom{-}\T_{m + tN}}, \\ & \mat{\T_{m + tN} \\ \U_{m + tN}}\end{split}\]}
                    &  {\[\begin{split}& \mat{-\U_{m + (t + 1/2)N} \\ \phantom{-}\T_{m + (t + 1/2)N}}, \\ & \mat{\T_{m + (t + 1/2)N} \\ \U_{m + (t + 1/2)N}}\end{split}\]}
                    &  {\(\begin{aligned}
                        \mat{\T_{m - n + tN} & -\U_{m - n + tN} \\ \U_{m - n + tN} & \phantom{-}\T_{m - n + tN}},
                      &\mat{-\U_{m - n + tN} & -\T_{m - n + tN} \\ \phantom{-}\T_{m - n + tN} & -\U_{m - n + tN}},\\
                      \mat{\T_{m + n + tN} & \phantom{-}\U_{m + n + tN} \\ \U_{m + n + tN} & -\T_{m + n + tN}},
                      &\mat{-\U_{m + n + tN} & \phantom{-}\T_{m + n + tN} \\ \phantom{-}\T_{m + n + tN} & \phantom{-}\U_{m + n + tN}}
                      \end{aligned}\)}
\end{tabular}}
\subsection{Dihedral group \(D_{N}\)}
Similarly to \(C_{N}\), the irreducible representations of \(D_{N}\) are the same as those of \(\orth(2)\)
up to a frequency of \(k = \lfloor \frac{N - 1}{2} \rfloor\). If \(N\) is even, there are two
additional one-dimensional irreducible representations, namely
\begin{equation}
  \begin{split}
    \psi_{0, N/2}(\theta, s) &= \cos\left(\frac{N}{2}\theta\right)\,,\\
    \psi_{1, N/2}(\theta, s) &= s\cos\left(\frac{N}{2}\theta\right)\,.
  \end{split}
\end{equation}
The solutions are (again with \(t \in \Z\) and \(\hat{t} \in \N\)):

\scalebox{0.70}{
\begin{tabular}{>{\columncolor{tbl_highlight}\centering\arraybackslash}m{1.45cm}>{\centering\arraybackslash}m{2cm}>{\centering\arraybackslash}m{2cm}>{\centering\arraybackslash}m{3cm}>{\centering\arraybackslash}m{3cm}>{\centering\arraybackslash}m{6cm}}
  \rowcolor{tbl_highlight}
  \diagbox{Out}{In} & \(\psi_{0, 0}\) & \(\psi_{1, 0}\) & \(\psi_{0, N/2}\) (\(N\) even ) & \(\psi_{1, N/2}\) (\(N\) even) & \(\psi_{1, n}, \quad 1 \leq n < N/2\) \\
  \tablevspace\(\psi_{0, 0}\) & \tablevspace\(\mat{\T_{\hat{t}N}}\)
                          & \tablevspace \(\mat{\U_{\hat{t}N}}\)
                          & \tablevspace \(\mat{\T_{(\hat{t} + 1/2)N}}\)
                          & \tablevspace \(\mat{\U_{(\hat{t} + 1/2)N}}\)
                          & \tablevspace \(\mat{-\U_{n + tN} & \T_{n + tN}}\)\\
  \tablevspace\(\psi_{1, 0}\) & \tablevspace\(\mat{\U_{\hat{t}N}}\)
                          & \tablevspace \(\mat{\T_{\hat{t}N}}\)
                          & \tablevspace \(\mat{\U_{(\hat{t} + 1/2)N}}\)
                          & \tablevspace \(\mat{\T_{(\hat{t} + 1/2)N}}\)
                          & \tablevspace \(\mat{\T_{n + tN} & \U_{n + tN}}\)\\
  \tablevspace \(\psi_{0, N/2}\), (\(N\) even)
                          & \tablevspace \(\mat{\T_{(\hat{t} + 1/2)N}}\)
                          & \tablevspace \(\mat{\U_{(\hat{t} + 1/2)N}}\)
                          & \tablevspace\(\mat{\T_{\hat{t}N}}\)
                          & \tablevspace \(\mat{\U_{\hat{t}N}}\)
                          & \tablevspace \(\mat{-\U_{n + (t + 1/2)N} & \T_{n + (t + 1/2)N}}\)\\
  \tablevspace \(\psi_{1, N/2}\), (\(N\) even)
                          & \tablevspace \(\mat{\U_{(\hat{t} + 1/2)N}}\)
                          & \tablevspace \(\mat{\T_{(\hat{t} + 1/2)N}}\)
                          & \tablevspace \(\mat{\U_{\hat{t}N}}\)
                          & \tablevspace\(\mat{\T_{\hat{t}N}}\)
                          & \tablevspace \(\mat{\T_{n + (t + 1/2)N} & \U_{n + (t + 1/2)N}}\)\\
  \(\psi_{1, m}\), \newline \(1 \leq m \leq N/2\)
                          & \tablevspace \(\mat{-\U_{m + tN} \\ \phantom{-}\T_{m + tN}}\)
                          & \tablevspace \(\mat{\T_{m + tN} \\ \U_{m + tN}}\)
                          & \tablevspace \(\mat{-\U_{m + (t + 1/2)N} \\ \phantom{-}\T_{m + (t + 1/2)N}}\)
                          & \tablevspace \(\mat{\T_{m + (t + 1/2)N} \\ \U_{m + (t + 1/2)N}}\)
                          & {\[\begin{split}& \mat{\T_{m - n +tN} & -\U_{m - n + tN} \\ \U_{m - n + tN} & \phantom{-}\T_{m - n + tN}}, \\
                            & \mat{\T_{m + n + tN} & \phantom{-}\U_{m + n + tN} \\ \U_{m + n + tN} & -\T_{m + n + tN}}\end{split}\]}
\end{tabular}}

\section{Discretization methods for PDOs}\label{sec:discretization}
\subsection{Finite differences}
Finite difference methods are common in machine learning; for example, the
discretization of \(\frac{d}{dx}\) as \(\begin{bmatrix}-1 & 1\end{bmatrix}\) or
\(\begin{bmatrix}-1 & 0 & 1\end{bmatrix}\), or of \(\frac{d^{2}}{dx^{2}}\) as
\(\begin{bmatrix}1 & -2 & 1\end{bmatrix}\) all use finite difference methods. To
understand where these filters come from, we need the following well-known
result:
\begin{proposition}\label{thm:finite_difference_coefficients}
  Let \(x_{1}, \ldots, x_{N} \in \R\) be arbitrary but distinct grid points.
  Then for \(m \leq N - 1\), there are unique coefficients \(w_{n}^{(m)}\) such that
  the approximation
  \begin{equation}\label{eq:finite_differences}
    f^{(m)}(0) \approx \frac{1}{h^{m}}\sum_{n = 1}^{N} w_{n}^{(m)} f(hx_{n})
  \end{equation}
  has an error \(\mc{O}(h^{N - m})\) for any \(f \in C^{N}(\R)\).
\end{proposition}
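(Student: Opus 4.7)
The plan is to reduce the problem to a Vandermonde linear system by Taylor expanding $f$ around $0$ and matching coefficients. Concretely, for $f \in C^N(\R)$, Taylor's theorem gives
\begin{equation}
f(hx_n) \;=\; \sum_{k=0}^{N-1} \frac{f^{(k)}(0)}{k!}(hx_n)^k \;+\; R_n(h),
\end{equation}
where the remainder $R_n(h)$ is $\mathcal{O}(h^N)$ with a constant depending on $\|f^{(N)}\|_\infty$ on a bounded neighborhood of the $hx_n$. Substituting into $\frac{1}{h^m}\sum_n w_n f(hx_n)$ and collecting powers of $h$ yields
\begin{equation}
\frac{1}{h^m}\sum_{n=1}^N w_n f(hx_n) \;=\; \sum_{k=0}^{N-1} \frac{h^{k-m} f^{(k)}(0)}{k!}\Bigl(\sum_{n=1}^N w_n x_n^k\Bigr) \;+\; \mathcal{O}(h^{N-m}).
\end{equation}

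For this expression to equal $f^{(m)}(0) + \mathcal{O}(h^{N-m})$ for every $f \in C^N(\R)$, the coefficients of $h^{k-m}f^{(k)}(0)$ for $k = 0,1,\dots,N-1$ must satisfy $\sum_n w_n x_n^k = 0$ for $k \neq m$ and $\sum_n w_n x_n^m = m!$. That is the $N \times N$ linear system $V \mathbf{w} = m!\, \mathbf{e}_m$, where $V_{kn} = x_n^k$ is the transposed Vandermonde matrix built from the distinct nodes $x_1,\dots,x_N$.

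The core step is then to invoke the standard fact that the Vandermonde determinant $\det V = \prod_{i<j}(x_j - x_i)$ is nonzero precisely when the nodes are distinct, which delivers both existence and uniqueness of the coefficients $w_n^{(m)}$ at once. For the error estimate, I would check that the $\mathcal{O}(h^{N-m})$ bound really is uniform: since only finitely many $x_n$ appear, the relevant Taylor remainder admits a uniform bound on any compact neighborhood of $0$ for fixed $f$, and dividing by $h^m$ lowers the $\mathcal{O}(h^N)$ remainder to $\mathcal{O}(h^{N-m})$ as claimed.

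The only subtle point, and what I would treat most carefully, is making sure the error really is $\mathcal{O}(h^{N-m})$ and not better or worse: the lower-order terms ($k < m$) all vanish thanks to the orthogonality conditions, which is essential because their prefactor $h^{k-m}$ would otherwise blow up as $h \to 0$; this is what forces the system to have exactly $N$ constraints and hence why the hypothesis $m \leq N-1$ is needed. Everything else is routine Taylor-expansion bookkeeping, so the entire argument reduces to writing out the matching conditions and citing the Vandermonde determinant.
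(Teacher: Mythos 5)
The paper does not actually prove this proposition: it is stated as a well-known fact about finite difference coefficients, with the reader pointed to the cited review by Fornberg and Flyer for details. Your argument is the standard textbook proof and it is correct: Taylor-expanding to order \(N-1\), dividing by \(h^{m}\), and testing against the monomials \(f(x)=x^{k}\) for \(k=0,\dots,N-1\) shows that the error is \(\mathcal{O}(h^{N-m})\) for all \(f\in C^{N}(\R)\) precisely when \(\sum_{n}w_{n}x_{n}^{k}=\delta_{km}\,m!\), and the resulting \(N\times N\) Vandermonde system has a unique solution for distinct nodes. You correctly identify the two points that need care: the necessity direction (which is what gives uniqueness, since the monomial tests force every one of the \(N\) moment conditions, including the divergent \(k<m\) terms and the constant \(k=m\) term), and the uniformity of the Taylor remainder, which for fixed \(f\) follows from continuity of \(f^{(N)}\) on a compact neighbourhood of the nodes. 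One small imprecision: the role of the hypothesis \(m\le N-1\) is not so much that it ``forces exactly \(N\) constraints'' as that it ensures the target exponent \(m\) lies among the matched orders \(k\in\{0,\dots,N-1\}\) (and that \(N-m\ge 1\), so the claimed error actually vanishes as \(h\to 0\)); for \(m\ge N\) the moment condition \(\sum_{n}w_{n}x_{n}^{m}=m!\) would be an extra equation beyond the invertible Vandermonde block and the construction breaks down. This does not affect the validity of the proof.
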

The coefficients \(w_{n}^{(m)}\) are called \emph{finite difference
  coefficients} and approximating derivatives using \cref{eq:finite_differences}
is the finite difference method. We will soon describe how to generalize this to
higher dimensions as well.

We remark that \(\mc{O}(h^{N - m})\) is an asymptotic \emph{upper bound} on the
error, and it can sometimes be lower, even for all \(f\). For example, the
central difference discretization of \(\frac{d^{2}}{dx^{2}}\) as
\(\begin{bmatrix}1 & -2 & 1\end{bmatrix}\) uses \(N = 3\) grid points but still
achieves an error of \(\mc{O}(h^{2})\), rather than \(\mc{O}(h)\).
For details on when such a \enquote{boosted} order of accuracy occurs, see~\citep{sadiq2014}.

Note that \cref{eq:finite_differences} can be generalized to
\begin{equation}
  f^{(m)}(x) \approx \frac{1}{h^{m}}\sum_{n = 1}^{N} w_{n}^{(m)} f(hx_{n} + x)\,.
\end{equation}
This follow immediately because the coefficients \(w_{n}^{(m)}\) don't depend on
the function \(f\), so we can apply \cref{thm:finite_difference_coefficients} to
\(\tau_{-x}f\).

Particularly interesting for us is the case of a regular grid. We can use
infinitely many grid points \(x_{n} \in \Z\) as long as we demand that
\(w_{n}^{(m)}\) is zero for almost all \(n\). Then we get
\begin{equation}
  f^{(m)}(x) \approx \frac{1}{h^{m}}\sum_{n}w_{n}^{(m)}f(hn + x)\,.
\end{equation}
Fixing \(h = 1\), this is exactly the cross correlation
\begin{equation}
  f^{(m)} \approx w^{(m)} \star f
\end{equation}
if we interpret \(w^{(m)}\) as a function \(n \mapsto w_{n}^{(m)}\). This is why, in
the end, we discretize a derivative by convolving with some stencil, such as
\(\begin{bmatrix}1 & -2 & 1\end{bmatrix}\), at least on a regular 1D grid.

Generalizing \cref{thm:finite_difference_coefficients} to higher dimensions does
not work in a straightforward way. However, if we restrict ourselves to regular
grids, then finite difference methods can be easily applied to PDOs. The idea is
very simple: a PDO such as \(\partial_{x}\partial_{y}^{2}\) can be interpreted as first
applying \(\partial_{y}^{2}\) and then \(\partial_{x}\) (or the other way around). So we
discretize each of these with the one-dimensional finite difference method
described before, and then we convolve with both filters one after the
other.\footnote{As we have seen, finite difference methods can most immediately
  be seen as performing a cross-correlation rather than a convolution. However,
  we can easily switch to convolutions by flipping the filter.} We
can also combine the two one-dimensional filter into one two-dimensional filter,
the outer product of the two.\footnote{For a simple PDO such as
  \(\partial_{x}\partial_{y}^{2}\), this may be undesirable for computational reasons. But in
  practice, we have PDOs that are sums of such pure terms and thus don't
  factorize.} The asymptotic error of this discretization will simply be the
highest asymptotic error along all the dimensions, so we get similar guarantees.

\subsection{RBF-FD}
As mentioned, \cref{thm:finite_difference_coefficients} does not directly
generalize to higher dimensions. So to discretize a PDO on arbitrary
point clouds in higher dimensions, a somewhat different approach is needed.

RBF-FD is one such method and works as follows: we still want to approximate a
derivative using
\begin{equation}
  \partial^{\alpha}f(0) \approx \sum_{n = 1}^{N} w_{n}^{\alpha}f(x_{n})\,,
\end{equation}
similar to finite difference methods. Here, \(x_{n} \in \R^{d}\) are arbitrary (but
again distinct) points. The idea is now that we require this approximation to be
exact if \(f(x) = \phi(\norm{x - x_{n}})\), where \(\phi\) is an arbitrary but fixed
radial basis function. In words, the approximation should become exact for a
certain radial basis function centered on any of the points \(x_{n}\). This
leads to a linear system, which is solved for the coefficients
\(w_{n}^{\alpha}\).\footnote{In practice, one often solves an extended linear system
  containing additional low-order polynomials, but we won't discuss that here.}
For more details on both finite differences and RBF-FD, see for example \citet{fornberg2017}.

\subsection{Gaussian derivatives}
Discretizing PDOs using derivatives of Gaussians is very simple to describe:
given grid points \(x_{n} \in \R^{d}\), we approximate using
\begin{equation}
  \partial^{\alpha}f(0) \approx \sum_{n = 1}^{N} \left(\partial^{\alpha}G(x_{n}; \sigma)\right)f(x_{n})
\end{equation}
where \(G(x; \sigma)\) is a Gaussian kernel with standard deviation \(\sigma\) centered
around 0. \(\sigma\) is a free parameter; larger \(\sigma\) will lead to a stronger
denoising effect.

On regular grids, this again turns into a cross-correlation, with the filter
being the derivative \(\partial^{\alpha}G(x_{n}; \sigma)\) evaluated on the grid coordinates.

In \cref{sec:distribution_equivariance} we briefly touch on a possible
interpretation of this discretization method using the distributional framework
for PDOs.

\section{Relation to PDO-eConvs}\label{sec:pdo_econvs}
In this section, we describe how PDO-eConvs~\citep{shen2020} fit into the
framework of steerable PDOs. We mostly follow the original notation from
\citet{shen2020} when describing PDO-eConvs but do make some minor changes to
avoid clashes and confusion with our own notation.

As in our presentation, \citet{shen2020} use polynomials to describe PDOs. One
difference is that they never explicitly use \emph{matrices} of polynomials,
because they model the feature space somewhat differently (which we will discuss
in a moment). They write \(H\) for the polynomial describing a PDO (where we
would write e.g.\ \(p\)) and write \(\chi^{(A)}\) for the corresponding PDO
transformed by \(A \in \orth(d)\). In our notation,
\begin{equation}
  \chi^{(A)} := D(A \cdot H) = D(H \circ A^{-1})\,.
\end{equation}

PDO-eConvs use two types of PDO layers. The first one, \(\Psi\), can be
interpreted as a steerable PDO with \(\rho_{\text{in}}\) trivial and
\(\rho_{\text{out}}\) regular. It maps the scalar input to the network to the
internally used regular representation. The second layer type, \(\Phi\) maps
between regular representations and is used for hidden layers. At the end,
pooling is performed to obtain a scalar output again.

The first layer type is defined as
\begin{equation}
  \Psi : C^{\infty}(\R^{d}) \to C^{\infty}(\tilde{E}(d)), \qquad \Psi(f)(x, A) := (\chi^{(A)}f)(x)\,.
\end{equation}
Here, \(\tilde{E}(d) := \R^{d} \rtimes S\) with \(S \leq \orth(d)\); in practice, \(S\)
needs to be a finite subgroup, i.e.\ \(C_{N}\) or \(D_{N}\). Elements of
\(\tilde{E}(d)\) can be uniquely written as \((x, A)\) with \(x \in \R^{d}\) and
\(A \in S\).

There is an obvious bijection \(C^{\infty}(\tilde{E}(d)) \cong C^{\infty}(\R^{d}, \R^{c})\),
where \(c := \abs{S}\) is the order of \(S\), i.e.\ the number of group
elements. Concretely, we define
\begin{equation}
  \Theta: C^{\infty}\left(\tilde{E}(d)\right) \to C^{\infty}(\R^{d}, \R^{c}), \qquad
  f \mapsto \Big(x \mapsto \big(f(x, A_{1}), \ldots, f(x, A_{c})\big)\Big)
\end{equation}
where \(A_{1}, \ldots, A_{c}\) is an enumeration of the group elements of \(S\). We
will therefore interpret \(C^{\infty}(\tilde{E}(d))\) as a \(c\)-dimensional feature
field over \(\R^{d}\), and we will show that using regular representations for
this field (and trivial ones for the input) makes the PDO-eConv layers
equivariant and thus steerable PDOs.

First, note that under the \(\Theta\) bijection, the first PDO-eConv layer type \(\Psi\)
becomes a \(c \times 1\) matrix of PDOs, namely
\begin{equation}
  D(H_{\Psi}) := D\left(\mat{A_{1} \cdot H \\ \vdots \\ A_{c} \cdot H}\right) = \mat{\chi^{(A_{1})} \\ \vdots \\ \chi^{(A_{c})}}\,.
\end{equation}
What we mean by this is that the diagram
\begin{center}
\begin{tikzcd}
C^\infty(\mathbb{R}^d) \arrow[d, equal] \arrow[rr, "\Psi"] &  & C^\infty(\tilde{E}(d)) \arrow[d, "\Theta"] \\
C^\infty(\mathbb{R}^d) \arrow[rr, "D(H_\Psi)"]                              &  & {C^\infty(\mathbb{R}^d, \mathbb{R}^c)}
\end{tikzcd}
\end{center}
commutes. Concretely, we have
\begin{equation}
  \Psi(f)(x, A_{i}) = (\chi^{(A_{i})}f)(x) = (D(H_{\Psi})f)(x)_{i}\,.
\end{equation}
So we need to check whether \(H_{\Psi}\) satisfies the PDO steerability constraint
for trivial to regular PDOs:
\begin{equation}\label{eq:psi_equivariance_constraint}
  H_{\Psi}(Ax) = \rho_{\text{regular}}(A)H_{\Psi}(x)\,.
\end{equation}
Using the definition of \(\rho_{\text{regular}}(A)\), we can rewrite the RHS as
\begin{equation}\label{eq:psi_rhs}
  \begin{split}
    \rho_{\text{regular}}(A)H_{\Psi}(x) &= \sum_{k = 1}^{c} \rho_{\text{regular}}(A) H_{\psi}(x)_{k}e_{A_{k}} \\
    &= \sum_{k = 1}^{c} (A_{k} \cdot H)(x) e_{AA_{k}}\\
    &= \sum_{k = 1}^{c} H(A_{k}^{-1}x) e_{AA_{k}}\\
    &= \sum_{l = 1}^{c} H(A_{l}^{-1}Ax) e_{A_{l}}\,.
  \end{split}
\end{equation}
Here, we use basis vectors \(e_{A_{1}}, \ldots, e_{A_{c}}\) for \(\R^{c}\), with the
same enumeration \(A_{1}, \ldots, A_{c}\) of \(S\) used to define \(\Theta\).
For the final step, we reparameterized the sum with \(A_{l} := AA_{k}\).

The LHS of \cref{eq:psi_equivariance_constraint} can be written as
\begin{equation}
  \begin{split}
    H_{\Psi}(Ax) &= \mat{A_{1} \cdot H(Ax), & \ldots, & A_{c} \cdot H(Ax)}^{T} \\
                 &= \mat{H(A_{1}^{-1}Ax), & \ldots, & H(A_{c}^{-1}Ax)}^{T}\,.
  \end{split}
\end{equation}
This is the same as the final term in \cref{eq:psi_rhs}, which proves that
the PDO steerability constraint is satisfied.

The second PDO-eConv layer type, mapping between regular representations, is
defined as
\begin{equation}
  \Phi: C^{\infty}(\tilde{E}(d)) \to C^{\infty}(\tilde{E}(d)), \qquad \Phi(e)(x, A) := \sum_{j = 1}^{c} \left(\chi_{A_{j}}^{(A)}e\right)(x, AA_{j})\,.
\end{equation}
\(\chi_{A_{j}}^{(A)}\) are \(c\) different PDOs and they act on
\(e \in C^{\infty}(\tilde{E}(d))\) by acting on each of the \(c\) components
separately. Under the \(\Theta\) bijection, \(\Phi\) becomes
\begin{equation}
  \begin{split}
    \Theta(\Phi(e))(x)_{i} &= \sum_{j = 1}^{c} \left(\chi_{A_{j}}^{(A_{i})}e\right)(x, A_{i}A_{j})\\
    &= \sum_{j = 1}^{c}\left(\chi_{A_{j}}^{(A_{i})}\Theta(e)_{A_{i}A_{j}}\right)(x) \\
    &= \sum_{j = 1}^{c}\left(\chi_{A_{i}^{-1}A_{j}}^{(A_{i})}\Theta(e)_{A_{j}}\right)(x)\,.
  \end{split}
\end{equation}
We can therefore represent it as a \(c \times c\) PDO \(D(H_{\Phi})\) with
\begin{equation}
  \left(H_{\Phi}\right)_{ij}(x) = H_{A_{i}^{-1}A_{j}}(A_{i}^{-1}x) =: H_{ij}(x)\,,
\end{equation}
where \(H_{A_{i}^{-1}A_{j}}\) is the polynomial that induces \(\chi_{A_{i}^{-1}A_{j}}^{(I)}\).
This makes the diagram
\begin{center}
\begin{tikzcd}
C^\infty(\tilde{E}(d)) \arrow[rr, "\Phi"] \arrow[d, "\Theta"]  &  & C^\infty(\tilde{E}(d)) \arrow[d, "\Theta"] \\
{C^\infty(\mathbb{R}^d, \mathbb{R}^c)} \arrow[rr, "D(H_\Phi)"] &  & {C^\infty(\mathbb{R}^d, \mathbb{R}^c)}
\end{tikzcd}
\end{center}
commute, similar to the case discussed above. So again, we need to check that
\(H_{\Phi}\) satisfies the PDO steerability constraint, this time for
\(\rho_{\text{in}}\) and \(\rho_{\text{out}}\) both regular:
\begin{equation}
  H_{\Phi}(Ax) = \rho_{\text{regular}}(A)H_{\Phi}(x)\rho_{\text{regular}}(A^{-1})\,.
\end{equation}
Writing out \(H_{\Psi}\) in its components, this becomes
\begin{equation}
  \begin{split}
    H_{\Psi}(Ax) &\overset{!}{=} \sum_{i, j}\rho_{\text{regular}}(A)H_{ij}(x)e_{A_{i}}e_{A_{j}}^{T}\rho_{\text{regular}}(A)^{-1} \\
    &\overset{(1)}{=} \sum_{i, j}H_{ij}(x)e_{AA_{i}}e_{AA_{j}}^{T} \\
    &= \sum_{i, j}H_{A_{i}^{-1}A_{j}}(A_{i}^{-1}x)e_{AA_{i}}e_{AA_{j}}^{T} \\
    &\overset{(2)}{=} \sum_{i, j}H_{A_{i}^{-1}AA^{-1}A_{j}}(A_{i}^{-1}Ax)e_{A_{i}}e_{A_{j}}^{T} \\
    &= \sum_{i, j}H_{A_{i}^{-1}A_{j}}(A_{i}^{-1}Ax)e_{A_{i}}e_{A_{j}}^{T} \\
    &= \sum_{i, j}H_{ij}(Ax)e_{A_{i}}e_{A_{j}}^{T}\,.
  \end{split}
\end{equation}
The first and the last term are the same, just written out in components on the
RHS, so the steerability constraint is again satisfied.
For (1), we used that \(\rho_{\text{regular}}(A)\) is orthogonal, and thus
\begin{equation}
  e_{j}^{T}\rho_{\text{regular}}(A)^{-1} = e_{j}^{T}\rho_{\text{regular}}(A)^{T} =
  \left(\rho_{\text{regular}}(A)e_{j}\right)^{T}\,.
\end{equation}
(2) was again a reparameterization of the sum, with
\(AA_{i} \mapsto A_{i}\) and \(AA_{j} \mapsto A_{j}\). The other steps are only
simplifications and plugging in definitions.

In conclusion, we have shown that there is a simple bijection between the feature
spaces used for PDO-eConv hidden layers and the feature fields we use, and that
under this bijection, PDO-eConvs correspond to steerable PDOs with regular
representations (and trivial representations for the input).

It is relatively easy to adapt the argument we present for the converse direction:
every equivariant PDO between two regular feature fields (or from a scalar to a
regular one) can be interpreted as a PDO-eConv layer.

\section{Additional experimental results}\label{sec:more_experiments}

\subsection{Equivariance errors}
To check the equivariance error---and indirectly the discretization error, since at least equivariant
layers have zero equivariance error in the continuous setting---, we checked how much rotating
an input image changes the output of a layer, compared to what the output should be under perfect
equivariance. The challenge here is that rotating a discrete image itself introduces some errors.
To minimize those, we used a large high-dimensional image, rotated it, and then scaled it down before
passing it into the layer, and scaled down again after that. We compared the result of this procedure
to what we get by first downscaling, then applying a convolutional or PDO layer, then rotating, and
then downscaling again. Effectively, all rotations thus happen at large resolutions, which should
minimize artifacts.

\Cref{tab:equivariance_errors} shows the relative equivariance errors (as multiples of 1e-6).
These errors are for randomly initialized layers (averaged over 10 initializations).
As discussed in the main text, the asymptotic error bound
for finite difference discretization does not lead to a particularly low error in practice.

\begin{table}
  \small
  \centering
  \caption{\small Relative equivariance errors for $C_{16}$ on a test image, averaged over 10
  random initializations of the layer. As orientation, we also include non-equivariant (vanilla)
  convolutions.}\label{tab:equivariance_errors}
  \begin{tabular}{lll}
    \toprule
    Method & Stencil &  Error [1e-6] \\
    \midrule
    \multirow{2}{1.3cm}{Vanilla convolution} & \(3 \times 3\) & \(32716 \pm 5484\) \\
                                     & \(5 \times 5\) & \(32785 \pm 5620\) \\
    \midrule
    \multirow{2}{1.3cm}{Kernels} & \(3 \times 3\) & \(5.0 \pm 1.0\) \\
                                 & \(5 \times 5\) & \(5.0 \pm 1.3\) \\
    \midrule
    \multirow{2}{1.3cm}{FD}     & \(3 \times 3\) & \(4.8 \pm 1.1\) \\
                                & \(5 \times 5\) & \(6.6 \pm 1.2\) \\
    \midrule
    \multirow{2}{1.3cm}{RBF-FD} & \(3 \times 3\) & \(5.9 \pm 1.2\) \\
                                & \(5 \times 5\) & \(6.1 \pm 1.1\) \\
    \midrule
    \multirow{2}{1.3cm}{Gauss}  & \(3 \times 3\) & \(4.9 \pm 1.7\)\\
                                & \(5 \times 5\) & \(6.4 \pm 0.8\) \\
    \bottomrule
  \end{tabular}
\end{table}

\subsection{Restriction experiments}
\Cref{tab:mnist_rot_restriction} shows additional results on MNIST-rot. The
general architecture and hyperparameters are the same as in the experiments in
\cref{sec:experiments} with regular representations, using our basis. However,
in the experiments in this section, the first five layers are
\(D_{16}\)-equivariant, while the final PDO/convolutional layer is
\(C_{16}\)-equivariant. The motivation for this is that while the input images
do not have \emph{global} reflectional symmetry, such symmetry occurs on smaller
scales, so that stronger equivariance in earlier layers might be helpful.

However, we don't observe clear improvements over pure \(C_{16}\)-equivariance.
A reason could be that even the \(C_{16}\)-equivariant networks are already very
parameter efficient compared to classical CNNs, so that parameter efficiency and
equivariance are not a bottleneck anymore. It is also possible that the
architecture would need to be adapted slightly to profit from the
\(D_{16}\)-equivariant layers.
\begin{table}
  \small
  \centering
  \caption{\small MNIST-rot results with restriction from \(D_{16}\) to
    \(C_{16}\) equivariance. Test errors \(\pm\) standard deviations are
    averaged over six runs. See main text for details on the
    models.}\label{tab:mnist_rot_restriction}
  \begin{tabular}{llll}
    \toprule
    Method & Stencil &  Error [\%] & Params \\
    \midrule
    \multirow{2}{1.3cm}{Kernels} & \(3 \times 3\) & \(0.717 \pm 0.022\) & 709K \\
           & \(5 \times 5\) & \(0.710 \pm 0.020\) & 1.1M \\
    \cmidrule{1-4}
    \multirow{2}{1.3cm}{FD}      & \(3 \times 3\) & \(1.248 \pm 0.060\) & 709K \\
           & \(5 \times 5\) & \(1.436 \pm 0.063\) & 947K \\
    \cmidrule{1-4}
    \multirow{2}{1.3cm}{RBF-FD}  & \(3 \times 3\) & \(1.396 \pm 0.059\) & 709K \\
           & \(5 \times 5\) & \(1.565 \pm 0.048\) & 947K \\
    \cmidrule{1-4}
    \multirow{2}{1.3cm}{Gauss}   & \(3 \times 3\) & \(0.806 \pm 0.047\) & 709K \\
           & \(5 \times 5\) & \(0.778 \pm 0.051\) & 947K \\
    \bottomrule
  \end{tabular}
\end{table}
\subsection{Stencil images}
\Cref{fig:stencils} contains examples of stencils used during the MNIST-rot experiments.
For the \(3 \times 3\) stencil, the different methods yield qualitatively similar results (though FD and
RBF-FD have fewer stencils that make use of the four corners). But for \(5 \times 5\)
stencils, kernels and Gauss discretization make significantly more use of the outer
stencil points than FD and RBF-FD.

\begin{figure}
  \centering
  \begin{subfigure}{.7\linewidth}
    \includegraphics[width=\linewidth]{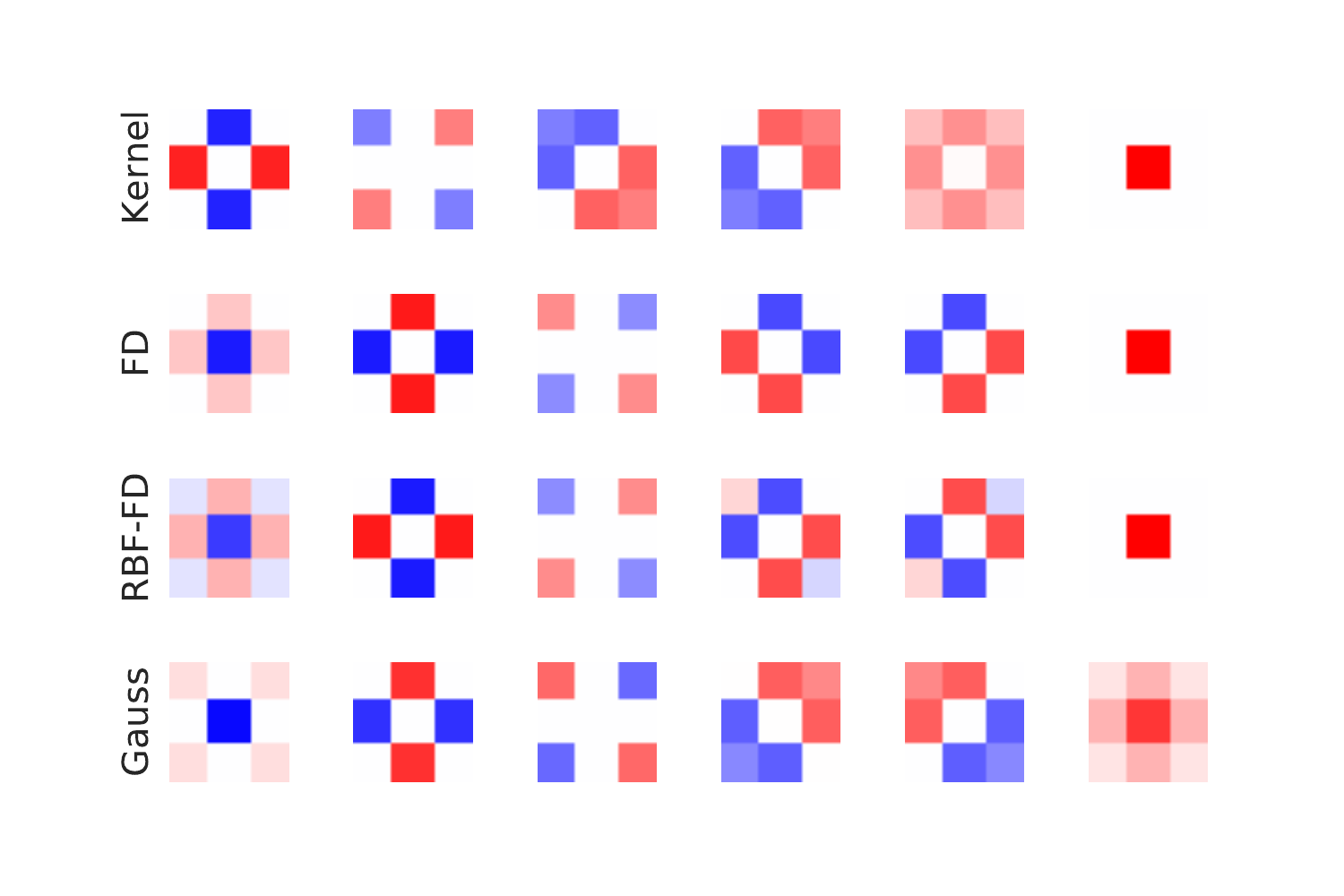}
    \caption{\(3 \times 3\) stencils}
  \end{subfigure}
  \begin{subfigure}{.7\linewidth}
    \includegraphics[width=\linewidth]{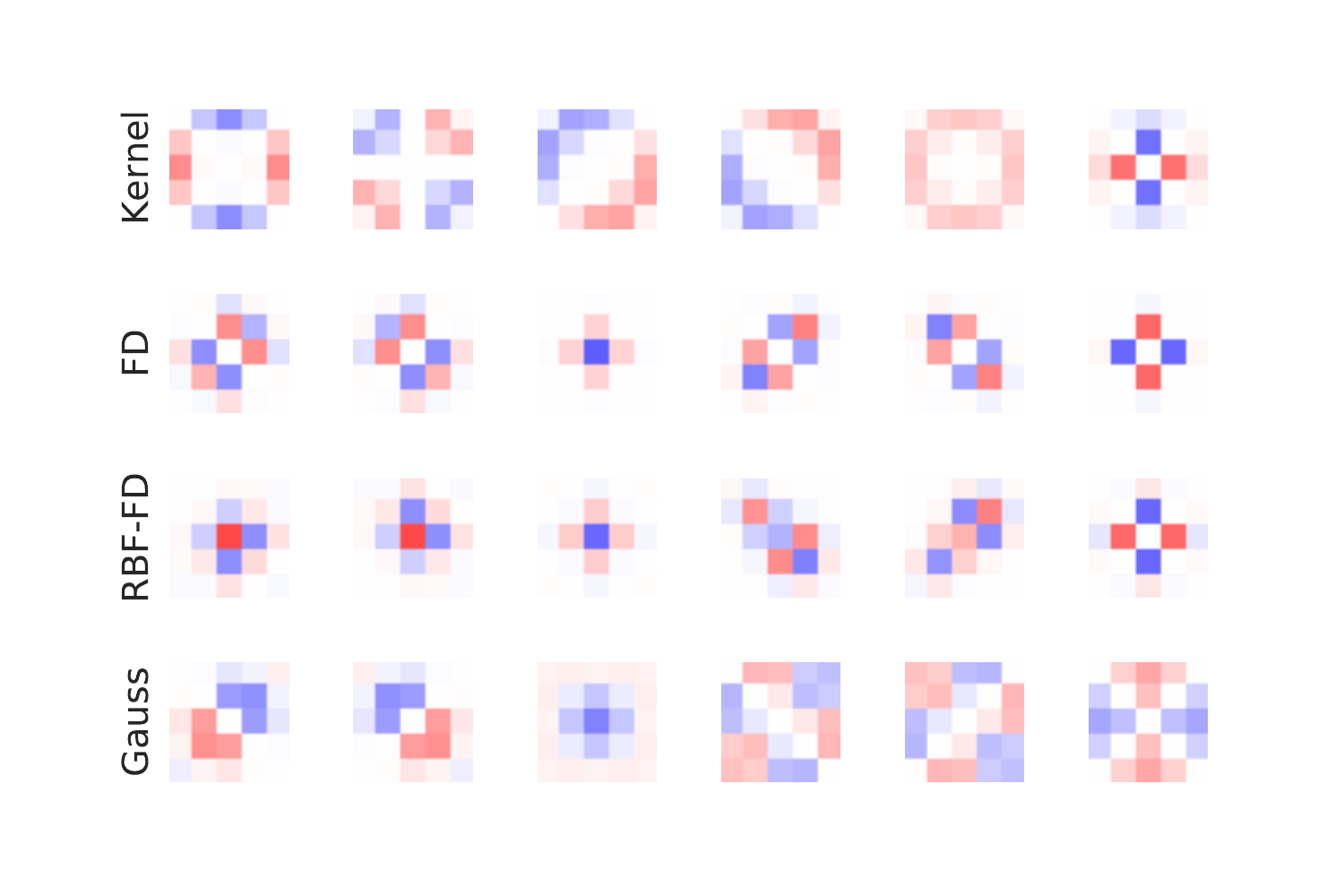}
    \caption{\(5 \times 5\) stencils}
  \end{subfigure}
  \caption{Stencils of basis filters for a trivial to regular layer. Each row is
    a different method and contains six arbitrarily selected filters from the
    basis (which is not the entire basis). All the settings are those that were
    actually used for the MNIST-rot experiments.}\label{fig:stencils}
\end{figure}

\newpage
\section{Details on experiments}\label{sec:experiment_details}
\subsection{MNIST-rot experiments}
For the MNIST-rot experiments, we use an architecture similar to one from~\citep{weiler2019}.
\Cref{tab:mnist_rot_architecture} contains a listing of all the layers. Each
conv block consists of a \(3 \times 3\) or \(5 \times 5\) steerable layer, either
convolutional or a PDO, followed by batch-normalization and an ELU nonlinearity.
The output fields are the number of \(C_{16}\)-regular feature fields that is
used; in the case of the Vanilla CNN and for quotient representations, the
number of fields is adjusted so that the parameter count is approximately preserved.
\begin{table}
  \centering
  \caption{Architecture for MNIST-rot experiments}\label{tab:mnist_rot_architecture}
  \begin{tabular}{lr}
    \toprule
    Layer & Output fields \\
    \midrule
    Conv block & 16 \\
    Conv block & 24 \\
    Max pooling & \\
    Conv block & 32 \\
    Conv block & 32 \\
    Max pooling & \\
    Conv block & 48 \\
    Conv block & 64 \\
    Group pooling & \\
    Global average pooling & \\
    Fully connected & 64 \\
    Fully connected + Softmax & 10 \\
    \bottomrule
  \end{tabular}
\end{table}

For the quotient experiments, we use
\(5\rho_{\text{regular}} \oplus 2\rho_{\text{quot}}^{C_{16}/C_{2}} \oplus 2 \rho_{\text{quot}}^{C_{16}/C_{4}} \oplus 4 \rho_{\text{trivial}}\)
as the representation, where the numbers are scaled to reach the same parameter
count as the model with only regular representations. This combination of
representations is the same one used by \citet{weiler2019} and we refer to their
appendix for motivation on why we need to combine different representations when
using quotients.

We trained all MNIST-rot models for 30 epochs with Adam~\citep{kingma2015} and a
batch size of 64. The training data was normalized and augmented using random
rotations. For the final training runs, we used the entire set of 12k training
plus validation images, as is common practice on MNIST-rot. The initial learning
rate was 0.05, which was decayed exponentially after a burn-in of 5 epochs at a
rate of 0.7 per epoch. We used a dropout of 0.5 after the fully connected layer,
and a weight decay of 1e-7.

These hyperparameters are based on those used in~\citep{weiler2019}; the main
difference is that we use another learning rate schedule, which works better
than the original one for all models.

For the Gaussian discretization models, we use a standard deviation of \(\sigma = 1\)
for \(3 \times 3\) stencils and \(\sigma = 1.3\) for \(5 \times 5\) stencils; we chose these
values by visual inspection of the stencils, with the aim that full use is made
of the stencil (see \cref{sec:more_experiments}). The RBF-FD discretization uses
third-order polyharmonic basis functions, i.e.\ \(\phi(r) = r^{3}\).

\subsection{STL-10 experiments}
For the STL-10 experiments, we used exactly the same architecture and
hyperparameters as \citet{weiler2019}, which in turn are essentially those of
\citet{devries2017}. This means we train a Wide-ResNet-16-8~\citep{zagoruyko2017} for 1000 epochs,
with SGD and Nesterov momentum of 0.9, a batch size of 128 and weight decay of
5e-4. We begin with a learning rate of 0.1 and divide it by 5 after 300, 400,
600 and 800 epochs. For data augmentation, we pad the image by 12 pixels, then
randomly crop a \(96 \times 96\) pixel patch, randomly flip horizontally, and apply
Cutout~\citep{devries2017} with a cutout size of \(60 \times 60\) pixels.

\subsection{Fluid flow prediction}
In the original dataset~\citep{ribeiro2020},
the fluid always flows into a tube from the same direction. To better demonstrate
the effects of equivariance, we randomly rotate each sample by a multiple of $\frac{\pi}{2}$,
making the task somewhat more challenging. During training, the rotations are different
in each epoch to avoid disadvantaging the non-equivariant networks (i.e. we effectively
use data augmentation). To ensure all inputs have the same shape,
we pad the original samples with zeros to make them square before rotating them.

Our architecture and hyperparameters follow those of \citet{ribeiro2020}. The only exception
is that we use a single decoder for the vector field (since we treat it as a single vector-valued output
for the purposes of equivariance), whereas \citet{ribeiro2020} used separate decoders for the
two vector components in some of their experiments. Just as in the STL-10 experiments,
we perform no hyperparameter tuning and use the hyperparameters that \citet{ribeiro2020}
optimized for the non-equivariant network. To make the network equivariant, we replace the usual
convolutions with $C_8$-equivariant steerable kernels or PDOs. We chose the channel sizes such
that all networks had approximately the same number of parameters (slightly over 800k). 

\subsection{Computational requirements}
We performed our experiments on an internal cluster with a GeForce RTX 2080 Ti
and 6 CPU cores. A single run of an MNIST-rot model took about 12 minutes and a
run of the STL-10 model about 5.5h. Training the fluid flow prediction model took
about 45 minutes. Multiplying this by the number of
experiments we did and by six runs with different seeds, the MNIST-rot
results took about 34 hours to produce, the STL-10 results about 264 hours,
and the fluid flow results about 18 hours.
The initial tests we did to debug and find a good learning rate schedule (on MNIST-rot)
took much less time than that. So we estimate that producing this paper took around
350 GPU-hours on the GeForce RTX 2080 Ti.

\end{document}